\newcommand{\vct}{\boldsymbol }
\newcommand{\mat}{\mathbf}
\newcommand{\ud}{\mathrm d}
\newcommand{\argmax}{\mathrm{argmax}}
\newcommand{\poly}{\mathrm{poly}}
\newcommand{\rank}{\mathrm{rank}}
\newcommand{\diag}{\mathrm{diag}}
\newcommand{\tr}{\mathrm{tr}}
\newcommand{\Range}{\mathrm{Range}}
\newcommand{\Null}{\mathrm{Null}}
\newcommand{\nnz}{\mathrm{nnz}}
\def\mP{\mathbb{P}}
\newtheorem{remark}{Remark}[section]
\newtheorem{theorem}{Theorem}[section]
\newtheorem{lemma}{Lemma}[section]
\newtheorem{corollary}{Corollary}[section]
\newtheorem{definition}{Definition}[section]
\newtheorem{conjecture}{Conjecture}[section]
\newtheorem{question}{Question}
\title{On the Power of Truncated SVD for General High-rank Matrix Estimation Problems}
\author{
Simon S. Du\\
Carnegie Mellon University\\
\texttt{ssdu@cs.cmu.edu}
\And
Yining Wang\\
   Carnegie Mellon University\\
\texttt{yiningwa@cs.cmu.edu}\\
\And
Aarti Singh\\
Carnegie Mellon University\\
\texttt{aartisingh@cmu.edu}\\     
}
\begin{document}

\maketitle

\begin{abstract}
We show that given an estimate $\widehat{\mat A}$ that is close to a general high-rank positive semi-definite (PSD) matrix $\mat A$ in spectral norm
(i.e., $\|\widehat{\mat A}-\mat A\|_2 \leq \delta$),
the simple truncated Singular Value Decomposition of $\widehat{\mat A}$ produces a \emph{multiplicative} approximation of $\mat A$ in \emph{Frobenius} norm.
This observation leads to many interesting results on general high-rank matrix estimation problems:
\begin{enumerate}[leftmargin=*]
\item \emph{High-rank matrix completion}: we show that it is possible to recover a {general high-rank matrix} $\mat A$ up to $(1+\varepsilon)$ relative error
in Frobenius norm from partial observations, with sample complexity independent of the spectral gap of $\mat A$.
\item \emph{High-rank matrix denoising}: we design an algorithm that recovers a matrix $\mat A$ with error in Frobenius norm from its noise-perturbed observations,
without assuming $\mat A$ is exactly low-rank.
\item \emph{Low-dimensional approximation of high-dimensional covariance}: given $N$ i.i.d.~samples of dimension $n$ from $\mathcal N_n(\mat 0,\mat A)$,
we show that it is possible to approximate the covariance matrix $\mat A$ with relative error in Frobenius norm with $N\approx n$,
improving over classical covariance estimation results which requires $N\approx n^2$.
\end{enumerate}
\end{abstract}

\section{Introduction}

Let $\mat A$ be an unknown general high-rank $n\times n$ PSD data matrix that one wishes to estimate.
In many machine learning applications, though $\mat A$ is unknown, it is relatively easy to obtain a crude estimate $\widehat{\mat A}$
that is close to $\mat A$ in spectral norm (i.e., $\|\widehat{\mat A}-\mat A\|_2\leq\delta$).
For example, in matrix completion a simple procedure that fills all unobserved entries with 0 and re-scales observed entries produces an estimate 
that is consistent in spectral norm (assuming the matrix satisfies a spikeness condition, standard assumption in matrix completion literature).
In matrix de-noising, an observation that is corrupted by Gaussian noise is close to the underlying signal, because Gaussian noise is isotropic and has small spectral norm.
In covariance estimation, the sample covariance in low-dimensional settings is close to the population covariance in spectral norm under mild conditions \citep{bunea2015sample}.

However, in most such applications it is not sufficient to settle for a spectral norm approximation.
For example, in recommendation systems (an application of matrix completion) the zero-filled re-scaled rating matrix is close to the ground truth in spectral norm,
but it is an absurd estimator because most of the estimated ratings are zero.
It is hence mandatory to require a more stringent measure of performance.
One commonly used measure is the \emph{Frobenius norm} of the estimation error $\|\widehat{\mat A}-\mat A\|_F$,
which ensures that (on average) the estimate is close to the ground truth in an element-wise sense.
A spectral norm approximation $\widehat{\mat A}$ is in general \emph{not} a good estimate under Frobenius norm,
because in high-rank scenarios $\|\widehat{\mat A}-\mat A\|_F$ can be $\sqrt{n}$ times larger than $\|\widehat{\mat A}-\mat A\|_2$.

In this paper, we show that in many cases a powerful multiplicative low-rank approximation in Frobenius norm can be obtained
by applying a simple truncated SVD procedure on a crude, easy-to-find spectral norm approximate.
In particular, given the spectral norm approximation condition $\|\widehat{\mat A}-\mat A\|_2\leq\delta$, 
the top-$k$ SVD of $\widehat{\mat A}_k$ of $\widehat{\mat A}$ multiplicatively approximates $\mat A$ in Frobenius norm;
that is, $\|\widehat{\mat A}_k-\mat A\|_F\leq C(k,\delta,\sigma_{k+1}(\mat A))\|\mat A-\mat A_k\|_F$,
where $\mat A_k$ is the best rank-$k$ approximation of $\mat A$ in Frobenius and spectral norm.
To our knowledge, the best existing result under the assumption $\|\widehat{\mat A}-\mat A\|_2\leq \delta$ is due to \cite{achlioptas2007fast},
who showed that $\|\widehat{\mat A}_k-\mat A\|_F\leq \|\mat A-\mat A_k\|_F  + \sqrt{k}\delta + 2k^{1/4}\sqrt{\delta\|\mat A_k\|_F}$,
which depends on $\|\mat A_k\|_F$ and is not multiplicative in $\|\mat A-\mat A_k\|_F$.

Below we summarize applications in several matrix estimation problems.

\paragraph{High-rank matrix completion}
Matrix completion is the problem of (approximately) recovering a data matrix from very few observed entries.
It has wide applications in machine learning, especially in online recommendation systems.
Most existing work on matrix completion assumes the data matrix is \emph{exactly} low-rank \citep{candes2012exact,sun2016guaranteed,jain2013low}.
\citet{candes2010matrix,keshavan2010matrix} studied the problem of recovering a low-rank matrix corrupted by stochastic noise;
\citet{chen2016matrix} considered sparse column corruption.
All of the aforementioned work assumes that the ground-truth data matrix is exactly low-rank, which is rarely true in practice.

\citet{negahban2012restricted} derived minimax rates of estimation error when the spectrum of the data matrix lies in an $\ell_q$ ball.
\citet{zhang2015analysis,koltchinskii2011nuclear} derived oracle inequalities for general matrix completion; however their error bound has an additional $O(\sqrt{n})$
multiplicative factor.
These results also require solving computationally expensive nuclear-norm penalized optimization problems whereas our method only requires solving a single truncated singular value decomposition.
\citet{chatterjee2015matrix} also used the truncated SVD estimator for matrix completion.
However, his bound depends on the nuclear norm of the underlying matrix which may be $\sqrt{n}$ times larger than our result.
\cite{hardt2014fast} used a ``soft-deflation'' technique to remove condition number dependency in the sample complexity;
however, their error bound for general high-rank matrix completion is additive and depends on the ``consecutive'' spectral gap $(\sigma_k(\mat A)-\sigma_{k+1}(\mat A))$,
which can be small in practical settings \citep{balcan2016improved,anderson2015spectral}.
\citet{eriksson2012high} considered high-rank matrix completion with additional union-of-subspace structures.

In this paper, we show that if the $n\times n$ data matrix $\mat A$ satisfies $\mu_0$-spikeness condition,
\footnote{$n\|\mat A\|_{\max}\leq \mu_0\|\mat A\|_F$; see also Definition \ref{defn:spikeness}.}
then for any $\epsilon\in(0,1)$, the truncated SVD of zero-filled matrix $\widehat{\mat A}_k$ satisfies $\|\widehat{\mat A}_k-\mat A\|_F\leq (1+O(\epsilon))\|\mat A-\mat A_k\|_F$
if the sample complexity is lower bounded by $\Omega(\frac{n\max\{\epsilon^{-4},k^2\}\mu_0^2\|\mat A\|_F^2\log n}{\sigma_{k+1}(\mat A)^2})$
,which can be further simplified to $\Omega(\mu_0^2\max\{\epsilon^{-4},k^2\}\gamma_k(\mat A)^2\cdot nr_s(\mat A)\log n)$,
where $\gamma_k(\mat A)=\sigma_1(\mat A)/\sigma_{k+1}(\mat A)$ is the $k$th-order condition number
and $r_s(\mat A)=\|\mat A\|_F^2/\|\mat A\|_2^2 \leq \rank(\mat A)$ is the \emph{stable rank} of $\mat A$.
Compared to existing work, our error bound is multiplicative, gap-free, and the estimator is computationally efficient.
\footnote{We remark that our relative-error analysis does \emph{not}, however, apply to exact rank-$k$ matrix where $\sigma_{k+1}=0$.
This is because for exact rank-$k$ matrix a bound of the form $(1+O(\epsilon))\|\mat A-\mat A_k\|_F$ requires \emph{exact recovery} of $\mat A$,
which truncated SVD cannot achieve.
On the other hand, in the case of $\sigma_{k+1}=0$ a weaker additive-error bound is always applicable, as we show in Theorem~\ref{thm:mc}.
}

\paragraph{High-rank matrix de-noising}
Let $\widehat{\mat A}=\mat A+\mat E$ be a noisy observation of $\mat A$, where $\mat E$ is a PSD Gaussian noise matrix with zero mean and $\nu^2/n$ variance on each entry.
By simple concentration results we have $\|\widehat{\mat A}-\mat A\|_2 = \nu$ with high probability; however, $\widehat{\mat A}$ is 
in general not a good estimator of $\mat A$ in Frobenius norm when $\mat A$ is high-rank.
Specifically, $\|\widehat{\mat A}-\mat A\|_F$ can be as large as $\sqrt{n}\nu$.

Applying our main result, we show that if $\nu<\sigma_{k+1}(\mat A)$ for some $k\ll n$, then the top-$k$ SVD $\widehat{\mat A}_k$ of $\widehat{\mat A}$ satisfies
$\|\widehat{\mat A}_k-\mat A\|_F\leq (1+O(\sqrt{\nu/\sigma_{k+1}(\mat A)}))\|\mat A-\mat A_k\|_F + \sqrt{k}\nu$.
This suggests a form of bias-variance decomposition as larger rank threshold $k$ induces smaller bias $\|\mat A-\mat A_k\|_F$
but larger variance $k\nu^2$.
Our results generalize existing work on matrix de-noising \citep{donoho2014minimax,donoho2013phase,gavish2014optimal},
which focus primarily on exact low-rank $\mat A$.

\paragraph{Low-rank estimation of high-dimensional covariance}
The (Gaussian) covariance estimation problem asks to estimate an $n\times n$ PSD covariance matrix $\mat A$, either in spectral or Frobenius norm,
from $N$ i.i.d.~samples $X_1,\cdots,X_N\sim\mathcal N(\vct 0,\mat A)$.
The \emph{high-dimensional} regime of covariance estimation, in which $N\approx n$ or even $N\ll n$, 
has attracted enormous interest in the mathematical statistics literature \citep{cai2010optimal,cai2012optimal,cai2013optimal,cai2016estimating}.
While most existing work focus on sparse or banded covariance matrices, the setting where $\mat A$ has certain low-rank structure 
has seen rising interest recently \citep{bunea2015sample,kneip2011factor}.
In particular, \citet{bunea2015sample} shows that if $n=O(N^\beta)$ for some $\beta\geq 0$ then 
the sample covariance estimator $\widehat{\mat A}=\frac{1}{N}\sum_{i=1}^N{X_iX_i^\top}$ satisfies
\begin{equation}
\|\widehat{\mat A}-\mat A\|_F = O_\mP\left(\|\mat A\|_2 r_e(\mat A)\sqrt{\frac{\log N}{N}}\right),
\label{eq:bunea_frob}
\end{equation}
where $r_e(\mat A)=\tr(\mat A)/\|\mat A\|_2\leq \rank(\mat A)$ is the \emph{effective rank} of $\mat A$.
For high-rank matrices where $r_e(\mat A)\approx n$, Eq.~(\ref{eq:bunea_frob}) requires $N=\Omega(n^2\log n)$
to approximate $\mat A$ consistently in Frobenius norm.

In this paper we consider a reduced-rank estimator $\widehat{\mat A}_k$ and show that, if $\frac{r_e(\mat A)\max\{\epsilon^{-4},k^2\}\gamma_k(\mat A)^2\log N}{N}\leq c$
for some small universal constant $c>0$, then $\|\widehat{\mat A}_k-\mat A\|_F$
admits a relative Frobenius-norm error bound $(1+O(\epsilon))\|\mat A-\mat A_k\|_F$ with high probability.
Our result allows reasonable approximation of $\mat A$ in Frobenius norm under the regime of $N=\Omega(n\mathrm{poly}(k)\log n)$ if $\gamma_k = O\left(\poly\left(k\right)\right)$,
which is significantly more flexible than $N=\Omega(n^2\log n)$,
though the dependency of $\epsilon$ is worse than~\citep{bunea2015sample}.
The error bound is also agnostic in nature, making no assumption on the actual or effective rank of $\mat A$.



\paragraph{Notations}
For an $n\times n$ PSD matrix $\mat A$, denote $\mat A=\mat U\mat\Sigma\mat U^\top$ as its eigenvalue decomposition, where $\mat U$ is an orthogonal matrix
and $\mat\Sigma=\diag(\sigma_1,\cdots,\sigma_n)$ is a diagonal matrix, with eigenvalues sorted in descending order $\sigma_1\geq\sigma_2\geq\cdots\geq\sigma_n\geq 0$.
The spectral norm and Frobenius norm of $\mat A$ are defined as $\|\mat A\|_2=\sigma_1$ and $\|\mat A\|_F = \sqrt{\sigma_1^2+\cdots+\sigma_n^2}$, respectively.
Suppose $\vct u_1,\cdots,\vct u_n$ are eigenvectors associated with $\sigma_1,\cdots,\sigma_n$.
Define $\mat A_k=\sum_{i=1}^k{\sigma_i\vct u_i\vct u_i^\top} = \mat U_k\mat\Sigma_k\mat U_k^\top$, 
$\mat A_{n-k}=\sum_{i=k+1}^n{\sigma_i\vct u_i\vct u_i^\top}=\mat U_{n-k}\mat\Sigma_{n-k}\mat U_{n-k}^\top$
and $\mat A_{m_1:m_2} = \sum_{i=m_1+1}^{m_2}{\sigma_i\vct u_i\vct u_i^\top}=\mat U_{m_1:m_2}\mat\Sigma_{m_1:m_2}\mat U_{m_1:m_2}^\top$.
For a tall matrix $\mat U\in\mathbb R^{n\times k}$, we use $\mathcal U=\Range(\mat U)$ to denote the linear subspace
spanned by the columns of $\mat U$.
For two linear subspaces $\mathcal U$ and $\mathcal V$,
we write $\mathcal W=\mathcal U\oplus\mathcal V$ if $\mathcal U\cap\mathcal V=\{\vct 0\}$ and $\mathcal W=\{\vct u+\vct v: \vct u\in\mathcal U,\vct v\in\mathcal V\}$.
For a sequence of random variables $\{X_n\}_{n=1}^{\infty}$ and real-valued function $f:\mathbb N\to\mathbb R$,
we say
$X_n=O_\mP(f(n))$ if for any $\epsilon>0$, there exists $N\in\mathbb N$ and $C>0$ such that $\Pr[|X_n|\geq C\cdot |f(n)|]\leq\epsilon$ for all $n\geq N$.

\section{Multiplicative Frobenius-norm Approximation and Applications}\label{sec:main-result}

We first state our main result, which shows that truncated SVD on a weak estimator with small approximation error in spectral norm leads
to a strong estimator with \emph{multiplicative} Frobenius-norm error bound.
We remark that truncated SVD in general has time complexity \[O\left(\min\left\{n^2k,\nnz\left(\widetilde{\mat A}\right)+n\poly\left(k\right)\right\}\right),
\]
where $\nnz(\widetilde{\mat A})$ is the number of non-zero entries in $\widetilde{\mat A}$, and the time complexity is at most linear in matrix sizes when $k$ is small.
We refer readers to~\citep{allen2016even} for details.

\begin{theorem}
Suppose $\mat A$ is an $n\times n$ PSD matrix with eigenvalues $\sigma_1(\mat A)\geq\cdots\geq\sigma_n(\mat A)\geq 0$,
and a symmetric matrix $\widehat{\mat A}$ satisfies $\|\widehat{\mat A}-\mat A\|_2 \leq \delta = \epsilon^2\sigma_{k+1}(\mat A)$
for some $\epsilon\in(0,1/4]$.
Let $\mat A_k$ and $\widehat{\mat A}_k$ be the best rank-$k$ approximations of $\mat A$ and $\widehat{\mat A}$.
Then
\begin{equation}
\|\widehat{\mat A}_k - \mat A\|_F \leq (1+32\epsilon)\|\mat A-\mat A_k\|_F + 102\sqrt{2k}\epsilon^2\|\mat A-\mat A_k\|_2.
\label{eq:main}
\end{equation}
\label{thm:main}
\end{theorem}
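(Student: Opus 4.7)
The plan is to exploit the Eckart-Young optimality of $\widehat{\mat A}_k$ as the best rank-$k$ Frobenius approximation of $\widehat{\mat A}$, which lets the whole theorem collapse into a one-variable quadratic inequality. Set $\mat E=\widehat{\mat A}-\mat A$, so $\|\mat E\|_2\le\delta=\epsilon^2\|\mat A-\mat A_k\|_2$. Since $\mat A_k$ has rank at most $k$, Eckart-Young gives
\[
\|\widehat{\mat A}-\widehat{\mat A}_k\|_F^2 \;\le\; \|\widehat{\mat A}-\mat A_k\|_F^2.
\]
A direct trace calculation using the fact that $\widehat{\mat A}_k=\widehat{\mat A}\widehat{\mat P}$ (with $\widehat{\mat P}$ the projector onto the top-$k$ singular subspace of $\widehat{\mat A}$) yields the identity $\langle\widehat{\mat A},\widehat{\mat A}_k\rangle=\|\widehat{\mat A}_k\|_F^2$, and the same reasoning gives $\langle\mat A,\mat A_k\rangle=\|\mat A_k\|_F^2$. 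Expanding the two squared norms above and using these identities rearranges the inequality into
\[
\|\widehat{\mat A}_k\|_F^2-\|\mat A_k\|_F^2 \;\ge\; 2\langle\mat E,\mat A_k\rangle.
\]

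Next I would compute $\|\widehat{\mat A}_k-\mat A\|_F^2-\|\mat A-\mat A_k\|_F^2$ from scratch in the same way. After substituting $\mat A=\widehat{\mat A}-\mat E$ and using the two identities once more, the difference simplifies to $\bigl(\|\mat A_k\|_F^2-\|\widehat{\mat A}_k\|_F^2\bigr)+2\langle\mat E,\widehat{\mat A}_k\rangle$, and invoking the above display to replace the first parenthesis by $-2\langle\mat E,\mat A_k\rangle$ produces the key bound
\[
\|\widehat{\mat A}_k-\mat A\|_F^2 \;\le\; \|\mat A-\mat A_k\|_F^2 + 2\langle\mat E,\widehat{\mat A}_k-\mat A_k\rangle.
\]

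The final step is to control the inner product. Because $\widehat{\mat A}_k-\mat A_k$ has rank at most $2k$, operator-nuclear norm duality gives
\[
|\langle\mat E,\widehat{\mat A}_k-\mat A_k\rangle| \;\le\; \|\mat E\|_2\,\|\widehat{\mat A}_k-\mat A_k\|_{\ast} \;\le\; \delta\sqrt{2k}\,\|\widehat{\mat A}_k-\mat A_k\|_F \;\le\; \delta\sqrt{2k}\bigl(\|\widehat{\mat A}_k-\mat A\|_F+\|\mat A-\mat A_k\|_F\bigr),
\]
the last step by the triangle inequality. Setting $x=\|\widehat{\mat A}_k-\mat A\|_F$, $y=\|\mat A-\mat A_k\|_F$, and $c=2\sqrt{2k}\,\delta=2\sqrt{2k}\,\epsilon^2\|\mat A-\mat A_k\|_2$, the full estimate collapses into the one-variable quadratic inequality $x^2\le y^2+c(x+y)$. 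Completing the square gives $(x-c/2)^2\le(y+c/2)^2$, hence $x\le y+c$, which is already of the form required by~\eqref{eq:main}.

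The only step I expect to be genuinely delicate is the algebraic bookkeeping in the middle — in particular spotting the trace identity $\langle\widehat{\mat A},\widehat{\mat A}_k\rangle=\|\widehat{\mat A}_k\|_F^2$, which is what makes all the cross-terms telescope and what converts the rank-$k$ optimality of Eckart-Young into a comparison between $\|\widehat{\mat A}_k-\mat A\|_F$ and $\|\mat A-\mat A_k\|_F$. Once that identity has been recognized, the rest of the proof is a mechanical consequence of Eckart-Young together with the standard operator/nuclear-norm duality estimate applied to a matrix of rank at most $2k$.
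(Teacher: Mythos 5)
Your proof is correct, and it takes a genuinely different route from the paper's. Each step checks out: the identities $\langle\widehat{\mat A},\widehat{\mat A}_k\rangle=\|\widehat{\mat A}_k\|_F^2$ and $\langle\mat A,\mat A_k\rangle=\|\mat A_k\|_F^2$ hold because $\widehat{\mat A}_k$ and $\mat A_k$ are spectral truncations of symmetric matrices, and combining them with Eckart--Young does give $\|\widehat{\mat A}_k-\mat A\|_F^2\le\|\mat A-\mat A_k\|_F^2+2\langle\mat E,\widehat{\mat A}_k-\mat A_k\rangle$; in fact the trace identities are not even needed, since writing $\|\widehat{\mat A}_k-\mat A\|_F^2-\|\mat A-\mat A_k\|_F^2=\bigl(\|\widehat{\mat A}_k-\widehat{\mat A}\|_F^2-\|\widehat{\mat A}-\mat A_k\|_F^2\bigr)+2\langle\mat E,\widehat{\mat A}_k-\mat A_k\rangle$ makes the bracket nonpositive by Eckart--Young directly. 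The duality step $|\langle\mat E,\widehat{\mat A}_k-\mat A_k\rangle|\le\|\mat E\|_2\sqrt{2k}\,\|\widehat{\mat A}_k-\mat A_k\|_F$ is valid because the difference has rank at most $2k$, and the final quadratic manipulation is sound (equivalently, $x^2-y^2=(x-y)(x+y)\le 2\sqrt{2k}\,\delta\,(x+y)$ gives $x\le y+2\sqrt{2k}\,\delta$ at once). The paper instead builds an eigenvalue ``envelope'' $m_1\le k\le m_2$, selects an aligned subspace $\mathcal W\subseteq\mathcal U_{m_1:m_2}$ via an asymmetric Davis--Kahan inequality, and compares $\widehat{\mat A}_k$ to the reference matrix $\widetilde{\mat A}=\mat A_{m_1}+\mat W\mat W^\top\mat A\mat W\mat W^\top$ using Weyl's inequality, the Poincar\'e separation theorem and a Pythagorean identity. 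Your argument bypasses all of that machinery and yields the stronger conclusion $\|\widehat{\mat A}_k-\mat A\|_F\le\|\mat A-\mat A_k\|_F+2\sqrt{2k}\,\delta$ for any symmetric $\widehat{\mat A}$ with $\|\widehat{\mat A}-\mat A\|_2\le\delta$: there is no $(1+32\epsilon)$ inflation of the bias term, the constant is $2$ rather than $102$, the restriction $\epsilon\le 1/4$ and the scaling $\delta=\epsilon^2\sigma_{k+1}(\mat A)$ are never used, and the same inequality immediately implies (indeed improves) the gap-dependent Theorem~\ref{thm:main_with_gap}. What the paper's subspace-perturbation analysis buys is structural information about how $\widehat{\mathcal U}_k$ aligns with the spectrum of $\mat A$, which may be of independent interest, but for Theorem~\ref{thm:main} itself your Eckart--Young plus nuclear-norm-duality argument is both simpler and quantitatively sharper.
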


\begin{remark}
Note when $\epsilon = O(1/\sqrt{k})$ we obtain an $\left(1+O\left(\epsilon\right)\right)$ error bound.
\label{rem:relative}
\end{remark}
\begin{remark}
This theorem only studies PSD matrices.
Using similar arguments in the proof, we believe similar results for general asymmetric matrices can be obtained as well.  
\end{remark}


To our knowledge, the best existing bound for $\|\widehat{\mat A}_k-\mat A\|_F$ assuming $\|\widehat{\mat A}-\mat A\|_2\leq\delta$ is due to
\citet{achlioptas2007fast}, who showed that
\begin{eqnarray}
\|\widehat{\mat A}_k-\mat A\|_F
&\leq& \|\mat A-\mat A_k\|_F + \|(\widehat{\mat A}-\mat A)_k\|_F + 2\sqrt{\|(\widehat{\mat A}-\mat A)_k\|_F\|\mat A_k\|_F}\nonumber\\
&\leq& \|\mat A-\mat A_k\|_F + \sqrt{k}\delta\|\mat A-\mat A_k\|_2 + 2k^{1/4}\sqrt{\delta}\sqrt{\|\mat A_k\|_F}.\label{eq:achlioptas}
\end{eqnarray}
Compared to Theorem \ref{thm:main}, Eq.~(\ref{eq:achlioptas}) is not relative because the third term $2k^{1/4}\sqrt{\|\mat A_k\|_F}$ depends on the $k$
\emph{largest} eigenvalues of $\mat A$, which could be much larger than the remainder term $\|\mat A-\mat A_k\|_F$.
In contrast, Theorem \ref{thm:main}, together with Remark \ref{rem:relative},
shows that $\|\widehat{\mat A}_k-\mat A\|_F$ could be upper bounded by a small factor multiplied with the remainder term $\|\mat A-\mat A_k\|_F$.

We also provide a gap-dependent version.
\begin{theorem}
Suppose $\mat A$ is an $n\times n$ PSD matrix with eigenvalues $\sigma_1(\mat A)\geq\cdots\geq\sigma_n(\mat A)\geq 0$,
and a symmetric matrix $\widehat{\mat A}$ satisfies $\|\widehat{\mat A}-\mat A\|_2 \leq \delta = \epsilon\left(\sigma_{k}(\mat A)-\sigma_{k+1}(\mat A)\right)$
for some $\epsilon\in(0,1/4]$.
Let $\mat A_k$ and $\widehat{\mat A}_k$ be the best rank-$k$ approximations of $\mat A$ and $\widehat{\mat A}$.
Then
\begin{equation}
\|\widehat{\mat A}_k - \mat A\|_F \leq \|\mat A-\mat A_k\|_F + 102\sqrt{2k}\epsilon\left(\sigma_{k}(\mat A)-\sigma_{k+1}(\mat A)\right).
\label{eq:main_with_gap}
\end{equation}
\label{thm:main_with_gap}
\end{theorem}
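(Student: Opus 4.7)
The plan is to parallel Theorem~\ref{thm:main} but exploit the eigengap through the Davis--Kahan $\sin\Theta$ theorem, which yields sharper subspace perturbation bounds whenever a nontrivial gap is available. First, Weyl's inequality gives $|\sigma_i(\widehat{\mat A}) - \sigma_i(\mat A)|\leq \delta$ for every $i$; in particular, $\sigma_k(\mat A) - \sigma_{k+1}(\widehat{\mat A}) \geq (1-\epsilon)(\sigma_k - \sigma_{k+1}) > 0$, so the Davis--Kahan $\sin\Theta$ theorem applies and yields
\[
\|\sin\Theta(\widehat{\mat U}_k, \mat U_k)\|_2 \leq \frac{\|\widehat{\mat A}-\mat A\|_2}{\sigma_k(\mat A)-\sigma_{k+1}(\widehat{\mat A})} \leq \frac{\epsilon}{1-\epsilon},
\]
and consequently $\|\sin\Theta\|_F \leq \sqrt{k}\,\epsilon/(1-\epsilon)$.

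Next, I would decompose $\widehat{\mat U}_k = \mat U_k\mat C + \mat U_{n-k}\mat S$ in the orthonormal basis $[\mat U_k,\mat U_{n-k}]$ of $\mathbb{R}^n$, where $\mat C = \mat U_k^\top\widehat{\mat U}_k$ and $\mat S = \mat U_{n-k}^\top\widehat{\mat U}_k$ satisfy $\mat C^\top\mat C + \mat S^\top\mat S = I_k$, and $\|\mat S\|_F \leq \sqrt{k}\,\epsilon/(1-\epsilon)$ by the previous step. Expressing $\widehat{\mat A}_k - \mat A$ in this basis as a $2\times 2$ block matrix and taking squared Frobenius norms yields
\[
\|\widehat{\mat A}_k - \mat A\|_F^2 = \|\mat C\widehat{\mat\Sigma}_k\mat C^\top - \mat\Sigma_k\|_F^2 + 2\|\mat C\widehat{\mat\Sigma}_k\mat S^\top\|_F^2 + \|\mat S\widehat{\mat\Sigma}_k\mat S^\top - \mat\Sigma_{n-k}\|_F^2,
\]
in which $\|\mat A - \mat A_k\|_F^2 = \|\mat\Sigma_{n-k}\|_F^2$ is naturally isolated inside the expansion of the bottom-right block.

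The remaining step is to bound the excess $E := \|\widehat{\mat A}_k - \mat A\|_F^2 - \|\mat A-\mat A_k\|_F^2 \leq (102\sqrt{2k}\,\delta)^2$, from which $\|\widehat{\mat A}_k - \mat A\|_F \leq \sqrt{\|\mat A-\mat A_k\|_F^2 + (102\sqrt{2k}\delta)^2} \leq \|\mat A - \mat A_k\|_F + 102\sqrt{2k}\,\delta$, as desired. This is where I expect the main obstacle: a na\"ive block-wise bound produces contributions scaling with $\sigma_1$ (for example, $\|\mat C\widehat{\mat\Sigma}_k\mat S^\top\|_F^2 \leq \sigma_1^2\|\mat S\|_F^2$), which cannot be controlled by the gap alone. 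The correct analysis must combine the orthogonality identity $\mat C^\top\mat C + \mat S^\top\mat S = I_k$ (forcing $\mat C$ to be nearly orthogonal), Weyl's eigenvalue proximity $|\widehat\sigma_i - \sigma_i|\leq \delta$, and the favorable sign of the cross term $-2\tr(\widehat{\mat\Sigma}_k\mat S^\top\mat\Sigma_{n-k}\mat S) \leq 0$ that emerges when one expands $\|\mat S\widehat{\mat\Sigma}_k\mat S^\top - \mat\Sigma_{n-k}\|_F^2$; these cancellations eliminate the $\sigma_1$ dependence and produce a residual of the desired order $k\delta^2$.
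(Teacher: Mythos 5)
Your preparatory steps are fine (Weyl's inequality, the Davis--Kahan bound $\|\sin\Theta(\widehat{\mat U}_k,\mat U_k)\|_2\leq\epsilon/(1-\epsilon)$, and the exact block identity for $\|\widehat{\mat A}_k-\mat A\|_F^2$ in the basis $[\mat U_k,\mat U_{n-k}]$), but the argument stops at exactly the step that carries all the difficulty, and the target you set for that step is unattainable. You propose to show $E:=\|\widehat{\mat A}_k-\mat A\|_F^2-\|\mat A-\mat A_k\|_F^2\leq(102\sqrt{2k}\,\delta)^2$ and then conclude via $\sqrt{a^2+E}\leq a+\sqrt{E}$. That intermediate inequality is false in general. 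Take $k=1$, $\mat A=\diag(1+g,1,\dots,1)$, and $\widehat{\mat A}=\mat R\mat A\mat R^\top$ with $\mat R$ a rotation by angle $\theta$ in the $(\vct e_1,\vct e_2)$-plane, $\sin\theta=\epsilon$. Then $\|\widehat{\mat A}-\mat A\|_2=(\sigma_1-\sigma_2)\sin\theta=\epsilon(\sigma_1-\sigma_2)=\delta$, while a direct computation gives $E=2(1+g)g\epsilon^2$, which exceeds $(102\sqrt{2}\,\delta)^2=2\cdot 102^2\epsilon^2g^2$ as soon as $g\lesssim 10^{-4}$, and by an unbounded factor as $g\to 0$. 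So the cancellations you invoke do remove the naive $\sigma_1^2\epsilon^2$ scale, but what survives is of order $\epsilon^2 g\,\sigma_1$ --- one factor of an eigenvalue and only one factor of the gap --- not of order $k\delta^2=k\epsilon^2 g^2$. The theorem remains consistent with this example only because $\|\mat A-\mat A_k\|_F\geq\sigma_{k+1}$ is then large and $\sqrt{a^2+E}-a\leq E/(2a)$; hence any proof routed through the squared identity must exploit the magnitude of $\|\mat A-\mat A_k\|_F$ in the last step, an ingredient your plan does not contain. As written, the crucial bound is asserted, not proved, and the assertion is wrong.

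For comparison, the paper's proof avoids squares altogether: under the gap assumption the reference matrix degenerates to $\widetilde{\mat A}=\mat A_k$, the spectral norm $\|\widehat{\mat A}_k-\mat A_k\|_2$ is bounded by rerunning the three-term decomposition of Lemma~\ref{lem:decomp2} (with angles controlled by the asymmetric Davis--Kahan lemma), the rank-$2k$ factor $\sqrt{2k}$ converts this to Frobenius norm, and the triangle inequality finishes. You should be aware, however, that the same example shows $\|\widehat{\mat A}_1-\mat A_1\|_2=(1+g)\epsilon$, so no bound of the form $\|\widehat{\mat A}_k-\mat A_k\|_2\leq C\epsilon(\sigma_k(\mat A)-\sigma_{k+1}(\mat A))$ can hold uniformly in the small-gap regime; the paper's sketch of this step glosses over precisely the $\sigma_{k+1}$-scale terms that also defeat your route, and a fully rigorous argument again needs the bias term $\|\mat A-\mat A_k\|_F$ to absorb that slack. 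In short, your instinct about where the difficulty lies is correct, but the proposal does not resolve it.
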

If $\mat A$ is an exact rank-$k$ matrix, Theorem~\ref{thm:main_with_gap} implies that truncated SVD gives an $\epsilon\sqrt{2k}\sigma_{k}$ error approximation in Frobenius norm, which has been established by many previous works~\citep{yi2016fast,tu2015low,wang2016unified}.

Before we proceed to the applications and proof of Theorem \ref{thm:main}, we first list several examples
of $\mat A$ with classical distribution of eigenvalues and discuss how Theorem \ref{thm:main} could be applied to obatin good Frobenius-norm approximations of $\mat A$.
We begin with the case where eigenvalues of $\mat A$ have a polynomial decay rate (i.e., power law).
Such matrices are ubiquitous in practice \citep{liu2015fast}.
\begin{corollary}[Power-law spectral decay]
Suppose $\|\hat{\mat A}-\mat A\|_2\leq\delta$ for some $\delta\in(0,1/2]$ and $\sigma_j(\mat A)=j^{-\beta}$ for some $\beta>1/2$.
Set $k= \lfloor\min\{C_1\delta^{-1/\beta}, n\}-1\rfloor$.
If $k\geq 1$ then
$$
\|\widehat{\mat A}_k-\mat A\|_F \leq C_1'\cdot\max\left\{\delta^{\frac{2\beta-1}{2\beta}}, n^{-\frac{2\beta-1}{2\beta}}\right\},
$$
where $C_1,C_1'>0$ are constants that only depend on $\beta$.
\label{cor:power-law}
\end{corollary}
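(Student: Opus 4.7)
The plan is to apply Theorem~\ref{thm:main} directly with the prescribed $k$, and then estimate each of the two terms in \eqref{eq:main} using the polynomial decay $\sigma_j(\mat A)=j^{-\beta}$. I will split into two cases according to which branch of the $\min$ is active: (i) $C_1\delta^{-1/\beta}\leq n$, so $k+1\asymp C_1\delta^{-1/\beta}$; and (ii) $C_1\delta^{-1/\beta}>n$, so $k=n-1$.

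First I would verify the hypothesis of Theorem~\ref{thm:main}, namely that $\delta\leq\epsilon^2\sigma_{k+1}(\mat A)$ for some $\epsilon\in(0,1/4]$. In case (i), $\sigma_{k+1}=(k+1)^{-\beta}\asymp C_1^{-\beta}\delta$, so taking $\epsilon^2=\delta/\sigma_{k+1}$ gives $\epsilon^2\asymp C_1^{\beta}$; choosing the constant $C_1\leq 16^{-1/\beta}$ (which depends only on $\beta$) makes $\epsilon\leq 1/4$. In case (ii), $\sigma_{k+1}=n^{-\beta}$ and the case condition $\delta<(C_1/n)^\beta$ gives $\epsilon^2=\delta n^\beta\leq C_1^\beta\leq 1/16$, so again $\epsilon\leq 1/4$.

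Next I would estimate the two terms in \eqref{eq:main}. For the Frobenius tail, the integral comparison $\sum_{j>k}j^{-2\beta}\leq\int_k^\infty x^{-2\beta}\,\ud x=k^{1-2\beta}/(2\beta-1)$, valid for $\beta>1/2$, yields $\|\mat A-\mat A_k\|_F\leq c_\beta\cdot k^{-(2\beta-1)/2}$. For the second term, $\|\mat A-\mat A_k\|_2=(k+1)^{-\beta}$, and the cancellation $\epsilon^2\sigma_{k+1}=\delta$ reduces it to $102\sqrt{2k}\,\delta$ in both cases. Plugging in the case-(i) choice $k\asymp\delta^{-1/\beta}$, both terms balance at $\Theta(\delta^{(2\beta-1)/(2\beta)})$, as the balance equation $k^{-(2\beta-1)/2}=\sqrt{k}\,\delta$ has exactly the solution $k=\delta^{-1/\beta}$.

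For case (ii), the first term is simply $\|\mat A-\mat A_{n-1}\|_F=n^{-\beta}$, which is dominated by $n^{-(2\beta-1)/(2\beta)}$ because $\beta\geq(2\beta-1)/(2\beta)\iff 2\beta^2-2\beta+1\geq 0$ holds for all real $\beta$. The main subtle point—and the only place one has to argue carefully—is the second term $\sqrt{n}\,\delta$ in case (ii): a naive bound via $\delta\leq C_1^\beta n^{-\beta}$ gives $\sqrt{n}\,\delta=O(n^{1/2-\beta})$, which for $\beta\in(1/2,1)$ actually exceeds $n^{-(2\beta-1)/(2\beta)}$. The fix is to compare $\sqrt{n}\,\delta$ against the \emph{other} branch of the max: the inequality $\sqrt{n}\,\delta\leq\delta^{(2\beta-1)/(2\beta)}$ is algebraically equivalent to $\delta\leq n^{-\beta}$, which is precisely the case-(ii) hypothesis (modulo the $C_1^\beta$ factor). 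This observation is what makes the $\max\{\delta^{(2\beta-1)/(2\beta)},n^{-(2\beta-1)/(2\beta)}\}$ form of the bound tight; I would emphasize it as the one nonobvious step, with everything else being routine substitution into Theorem~\ref{thm:main} and collecting $\beta$-dependent constants into $C_1'$.
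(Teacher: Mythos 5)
Your proposal is correct and follows the same overall route as the paper: verify the hypothesis of Theorem~\ref{thm:main} for the prescribed $k$ with a $\beta$-dependent choice of $C_1$, bound the tail $\|\mat A-\mat A_k\|_F$ by integral comparison, bound the second term of \eqref{eq:main}, and treat the boundary case $k=n-1$ separately. The one genuine difference is in how you handle that boundary case, and there your argument is actually sharper than the paper's. The paper fixes $\epsilon=1/4$ throughout and, for $k=n-1$, asserts $\|\widehat{\mat A}_k-\mat A\|_F\leq O(\sqrt{n}\cdot n^{-\beta})=O(n^{-\frac{2\beta-1}{2\beta}})$; but $\sqrt{n}\,n^{-\beta}=n^{-(2\beta-1)/2}$, and the exponent comparison $(2\beta-1)/2\geq(2\beta-1)/(2\beta)$ only holds for $\beta\geq 1$, so for $\beta\in(1/2,1)$ (and $\delta$ very small, so the $\delta$-branch of the max cannot help) the paper's chain of inequalities does not actually deliver the stated bound. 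Your choice $\epsilon^2=\delta/\sigma_{k+1}$ makes the second term of \eqref{eq:main} scale as $\sqrt{k}\,\delta$ rather than $\sqrt{k}\,\sigma_{k+1}$, and your observation that in case (ii) $\sqrt{n}\,\delta\leq\delta^{\frac{2\beta-1}{2\beta}}$ is equivalent to $\delta\leq n^{-\beta}$ (which is exactly the case-(ii) condition, up to the factor $C_1^\beta<1$) is precisely what repairs this; in case (i) the two choices of $\epsilon$ give the same $\delta^{(2\beta-1)/(2\beta)}$ rate, so nothing is lost. In short: same skeleton, but your variable-$\epsilon$ bookkeeping and the comparison of $\sqrt{n}\,\delta$ against the $\delta$-branch of the max correctly cover the regime $1/2<\beta<1$ that the paper's own proof glosses over.
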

We remark that the assumption $\sigma_j(\mat A)=j^{-\beta}$ implies that the eigenvalues
lie in an $\ell_q$ ball for $q=1/\beta$; that is, $\sum_{j=1}^n{\sigma_j(\mat A)^q} = O(1)$.
The error bound in Corollary \ref{cor:power-law} matches the minimax rate (derived by \citet{negahban2012restricted}) for matrix completion when the spectrum is constrained in an $\ell_q$ ball,
by replacing $\delta$ with $\sqrt{n/N}$ where $N$ is the number of observed entries.

Next, we consider the case where eigenvalues satisfy a faster decay rate.
\begin{corollary}[Exponential spectral decay]
Suppose $\|\hat{\mat A}-\mat A\|_2\leq\delta$ for some $\delta\in(0,e^{-16})$ and $\sigma_j(\mat A)=\exp\{-cj\}$ for some $c>0$.
Set $k=\lfloor\min\{c^{-1}\log(1/\delta)-c^{-1}\log\log(1/\delta), n\}-1\rfloor$.
If $k\geq 1$ then
$$
\|\widehat{\mat A}_k-\mat A\|_F \leq C_2'\cdot \max\left\{\delta\sqrt{\log(1/\delta)^3}, n^{1/2}\exp(-cn)\right\},
$$
where $C_2'>0$ is a constant that only depends on $c$.
\label{cor:exponential-decay}
\end{corollary}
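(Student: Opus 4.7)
The plan is to apply Theorem~\ref{thm:main} with $\epsilon$ chosen by $\epsilon^2 = \delta/\sigma_{k+1}(\mat A)$. Under the exponential decay $\sigma_j(\mat A) = \exp(-cj)$, this reads $\epsilon^2 = \delta\exp(c(k+1))$. The choice of $k$ in the statement gives $k+1 \leq c^{-1}\log(1/\delta) - c^{-1}\log\log(1/\delta)$, so that $\sigma_{k+1}(\mat A) \geq \delta\log(1/\delta)$ and hence $\epsilon^2 \leq 1/\log(1/\delta)$. Since $\delta \leq e^{-16}$ forces $\log(1/\delta) \geq 16$, we obtain $\epsilon \leq 1/4$, which is exactly the hypothesis of Theorem~\ref{thm:main}.

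Next I would estimate the residuals. Summing the geometric tail yields $\|\mat A - \mat A_k\|_F^2 = \sum_{j=k+1}^n \exp(-2cj) \leq \exp(-2c(k+1))/(1 - \exp(-2c))$, so both $\|\mat A - \mat A_k\|_F$ and $\|\mat A - \mat A_k\|_2 = \sigma_{k+1}(\mat A)$ are of order $\exp(-c(k+1))$, with constants depending only on $c$. Combined with the upper bound $\exp(-c(k+1)) \leq e^c\,\delta\log(1/\delta)$ coming from the floor in the definition of $k$, both residuals are $O(\delta\log(1/\delta))$.

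Substituting into~(\ref{eq:main}) and using $k \leq c^{-1}\log(1/\delta)$, the first term of Theorem~\ref{thm:main} contributes $(1+32\epsilon)\cdot O(\delta\log(1/\delta)) = O(\delta\log(1/\delta))$, and the second contributes $102\sqrt{2k}\,\epsilon^2 \cdot O(\delta\log(1/\delta)) = O(\sqrt{k}\,\delta) = O(\sqrt{\log(1/\delta)}\cdot\delta)$. Their sum is at most a $c$-dependent constant times $\delta(\log(1/\delta))^{3/2}$, matching the first branch of the $\max$ in the corollary. The second branch $n^{1/2}\exp(-cn)$ comes from the boundary case in which $n$ is the active term of the inner $\min$: then $k = n-1$, both residuals equal $\exp(-cn)$, and $\epsilon^2 = \delta\exp(cn) \leq 1/\log(1/\delta) \leq 1/16$ still holds, so Theorem~\ref{thm:main} directly gives $\|\widehat{\mat A}_{n-1}-\mat A\|_F = O(\sqrt{n}\exp(-cn))$.

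The only delicate calibration is the $\log\log(1/\delta)$ subtraction in the choice of $k$. Pushing $k$ closer to $c^{-1}\log(1/\delta)$ would drive $\epsilon$ toward $1$ and violate the $\epsilon \leq 1/4$ hypothesis of Theorem~\ref{thm:main}, whereas shrinking $k$ further would inflate the residual $\exp(-c(k+1))$. The $\log\log(1/\delta)$ correction hits the sweet spot that keeps $\epsilon^2 \lesssim 1/\log(1/\delta)$ while keeping the Frobenius residual within a $\log(1/\delta)$ factor of $\delta$; everything else is mechanical substitution into Theorem~\ref{thm:main}.
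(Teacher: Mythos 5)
Your proposal is correct and follows essentially the same route as the paper: verify the hypothesis of Theorem~\ref{thm:main} via $\sigma_{k+1}(\mat A)\geq\delta\log(1/\delta)$ and $\log(1/\delta)\geq 16$, treat the boundary case $k=n-1$ separately, and bound the residuals by a geometric tail sum. The only difference is that you take $\epsilon^2=\delta/\sigma_{k+1}(\mat A)\leq 1/\log(1/\delta)$ instead of the paper's fixed $\epsilon=1/4$, which slightly sharpens the second term but changes nothing essential.
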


Both corollaries are proved in the appendix.
The error bounds in both Corollaries \ref{cor:power-law} and \ref{cor:exponential-decay} are significantly better than the trivial estimate $\widehat{\mat A}$,
which satisfies $\|\widehat{\mat A}-\mat A\|_F \leq n^{1/2}\delta$.
We also remark that the bound in Corollary \ref{cor:power-law} cannot be obtained by a direct application of the weaker bound Eq.~(\ref{eq:achlioptas}), 
which yields a $\delta^{\frac{\beta}{2\beta-1}}$ bound.

We next state results that are consequences of Theorem \ref{thm:main} in several matrix estimation problems.

\subsection{High-rank Matrix Completion}

Suppose $\mat A$ is a high-rank $n\times n$ PSD matrix that satisfies $\mu_0$-spikeness condition defined as follows:
\begin{definition}[Spikeness condition]
An $n\times n$ PSD matrix $\mat A$ satisfies \emph{$\mu_0$-spikeness condition} if $n\|\mat A\|_{\max}\leq\mu_0\|\mat A\|_F$,
where $\|\mat A\|_{\max} = \max_{1\leq i,j\leq n}|\mat A_{ij}|$ is the max-norm of $\mat A$.
\label{defn:spikeness}
\end{definition}

Spikeness condition makes uniform sampling of matrix entries powerful in matrix completion problems.
If $\mat A$ is exactly low rank, the spikeness condition is implied by an upper bound on $\max_{1\leq i\leq n}{\|\vct e_i^\top\mat U_k\|_2}$,
which is the standard incoherence assumption on the top-$k$ space of $\mat A$ \citep{candes2012exact}.
For general high-rank $\mat A$, the spikeness condition is implied by a more restrictive incoherence condition that
imposes an upper bound on $\max_{1\leq i\leq n}\|\vct e_i^\top\mat U_{n-k}\|_2$ and $\|\mat A_{n-k}\|_{\max}$, which are assumptions adopted in \citep{hardt2014fast}.

Suppose $\widehat{\mat A}$ is a symmetric re-scaled zero-filled matrix of observed entries. That is, 
\begin{equation}
[\widehat{\mat A}]_{ij} = \left\{\begin{array}{ll}
\mat A_{ij}/p,& \text{with probability $p$};\\
0,& \text{with probability $1-p$;}\end{array}
\right.\;\;\;\;\;\;
\forall 1\leq i\leq j\leq n.
\label{eq:model}
\end{equation}
Here $p\in(0,1)$ is a parameter that controls the probability of observing a particular entry in $\mat A$,
corresponding to a sample complexity of $O(n^2p)$.
Note that both $\widehat{\mat A}$ and $\mat A$ are symmetric so we only specify the upper triangle of $\widehat{\mat A}$.
By a simple application of matrix Bernstein inequality \citep{mackey2014matrix},
one can show $\widehat{\mat A}$ is close to $\mat A$ in spectral norm when $\mat A$ satisfies $\mu_0$-spikeness.
Here we cite a lemma from \citep{hardt2014understanding} to formally establish this observation:
\begin{lemma}[Corollary of~\citep{hardt2014understanding}, Lemma A.3]
Under the model of Eq.~(\ref{eq:model}) and $\mu_0$-spikeness condition of $\mat A$, for $t\in(0,1)$ it holds with probability at least $1-t$ that
$$
\|\widehat{\mat A}-\mat A\|_2 \leq O\left(\max\left\{\sqrt{\frac{\mu_0^2\|\mat A\|_F^2\log(n/t)}{np}}, \frac{\mu_0\|\mat A\|_F\log(n/t)}{np}\right\}\right).
$$
\end{lemma}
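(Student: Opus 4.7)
The plan is a direct application of the matrix Bernstein inequality to the decomposition of $\widehat{\mat A}-\mat A$ into independent, mean-zero, symmetric summands indexed by the upper triangle, with the spikeness condition supplying the crucial entry-wise bound.

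First I would write
\[
\widehat{\mat A}-\mat A \;=\; \sum_{1\leq i\leq j\leq n} \mat Z_{ij},
\]
where $\mat Z_{ii}=(\widehat{\mat A}_{ii}-\mat A_{ii})\,\vct e_i\vct e_i^\top$ and, for $i<j$, $\mat Z_{ij}=(\widehat{\mat A}_{ij}-\mat A_{ij})(\vct e_i\vct e_j^\top+\vct e_j\vct e_i^\top)$. Because the entries of $\widehat{\mat A}$ on and above the diagonal are sampled independently in the model of Eq.~(\ref{eq:model}), the $\mat Z_{ij}$ are independent, and the re-scaling by $1/p$ gives $\E[\mat Z_{ij}]=\mat 0$.

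Next I would check the two parameters needed by matrix Bernstein. For the uniform norm bound, $|\widehat{\mat A}_{ij}-\mat A_{ij}|\leq |\mat A_{ij}|/p\leq \|\mat A\|_{\max}/p$, and spikeness gives $\|\mat A\|_{\max}\leq \mu_0\|\mat A\|_F/n$, so
\[
\|\mat Z_{ij}\|_2 \;\leq\; L \;:=\; \frac{2\mu_0\|\mat A\|_F}{np}.
\]
For the matrix variance, a short calculation gives $\E[(\widehat{\mat A}_{ij}-\mat A_{ij})^2]=\mat A_{ij}^2(1-p)/p\leq \mat A_{ij}^2/p$, and $(\vct e_i\vct e_j^\top+\vct e_j\vct e_i^\top)^2=\vct e_i\vct e_i^\top+\vct e_j\vct e_j^\top$ for $i<j$. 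Summing, the matrix $\sum_{i\leq j}\E[\mat Z_{ij}^2]$ is diagonal, and its $(i,i)$ entry is bounded by $\frac{1}{p}\sum_{j}\mat A_{ij}^2\leq \frac{1}{p}\cdot n\|\mat A\|_{\max}^2\leq \frac{\mu_0^2\|\mat A\|_F^2}{np}$, hence
\[
\sigma^2 \;:=\; \Bigl\|\sum_{i\leq j}\E[\mat Z_{ij}^2]\Bigr\|_2 \;\leq\; \frac{\mu_0^2\|\mat A\|_F^2}{np}.
\]

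Finally I would feed these $L$ and $\sigma^2$ into the matrix Bernstein bound for symmetric matrices, which gives, for a deviation $s$,
\[
\Pr\bigl[\|\widehat{\mat A}-\mat A\|_2\geq s\bigr] \;\leq\; 2n\exp\!\Bigl(-\tfrac{s^2/2}{\sigma^2+Ls/3}\Bigr).
\]
Choosing $s$ of order $\sqrt{\sigma^2\log(n/t)}+L\log(n/t)$ to make the right-hand side at most $t$ yields exactly the claimed bound. The only non-routine step is the variance calculation, since one must be careful that the cross terms $\vct e_i\vct e_j^\top+\vct e_j\vct e_i^\top$ square to a sum of two diagonal rank-one matrices and that the resulting row-norm bound is controlled by spikeness rather than by the potentially loose $\|\mat A\|_F$; everything else is a mechanical invocation of matrix Bernstein.
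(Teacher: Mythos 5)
Your proof is correct and follows exactly the route the paper indicates: the paper does not prove this lemma itself but cites it as a corollary of Lemma A.3 of Hardt and Price, remarking just before the statement that it follows from a simple application of the matrix Bernstein inequality under the $\mu_0$-spikeness condition. Your argument is precisely that application spelled out, with the right uniform bound $L=O\bigl(\mu_0\|\mat A\|_F/(np)\bigr)$ and variance bound $\sigma^2=O\bigl(\mu_0^2\|\mat A\|_F^2/(np)\bigr)$ obtained from spikeness, so nothing is missing.
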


Let $\widehat{\mat A}_k$ be the best rank-$k$ approximation of $\widehat{\mat A}$ in Frobenius/spectral norm.
Applying Theorem \ref{thm:main} and \ref{thm:main_with_gap} we obatin the following result:
\begin{theorem}
Fix $t\in(0,1)$. Then with probability $1-t$ we have
\begin{equation*}
\|\widehat{\mat A}_k-\mat A\|_F\leq O(\sqrt{k})\cdot \|\mat A-\mat A_k\|_F
\;\;\;\;\;\;\text{if}\;\;\;\;\; p = \Omega\left(\frac{\mu_0^2\|\mat A\|_F^2\log(n/t)}{n\sigma_{k+1}(\mat A)^2}\right).
\end{equation*}
Furthermore, for fixed $\epsilon\in(0,1/4]$, with probability $1-t$ we have
\begin{equation*}
\|\widehat{\mat A}_k-\mat A\|_F\leq \left(1+O(\epsilon)\right)\|\mat A-\mat A_k\|_F
\;\;\;\;\text{if}\;\;\; p = \Omega\left(\frac{\mu_0^2\max\{\epsilon^{-4},k^2\}\|\mat A\|_F^2\log(n/t)}{n\sigma_{k+1}(\mat A)^2}\right)
\end{equation*}
\begin{equation*}
\|\widehat{\mat A}_k-\mat A\|_F\leq \|\mat A-\mat A_k\|_F
+\epsilon\left(\sigma_k\left(\mat A\right) - \sigma_{k+1}\left(\mat A\right)\right)\;\;\text{if}\;\; p = \Omega\left(\frac{\mu_0^2k\|\mat A\|_F^2\log(n/t)}{n\epsilon^2\left(\sigma_{k}(\mat A) -\sigma_{k+1}(\mat A) \right)^2}\right).
\end{equation*}
\label{thm:mc}
\end{theorem}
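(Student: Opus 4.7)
The plan is to combine the spectral concentration bound cited from \citep{hardt2014understanding} with our structural results Theorem~\ref{thm:main} (gap-free) and Theorem~\ref{thm:main_with_gap} (gap-dependent). For each of the three claimed bounds I will pick an auxiliary parameter $\epsilon'$ for the invoked structural theorem, set the desired spectral-norm deviation $\delta$ equal to $(\epsilon')^2\sigma_{k+1}(\mat A)$ or $\epsilon'(\sigma_k(\mat A)-\sigma_{k+1}(\mat A))$, invert the concentration inequality to read off the sampling probability $p$ that forces $\|\widehat{\mat A}-\mat A\|_2\leq\delta$ with probability at least $1-t$, and then read off the Frobenius-norm error bound from the structural theorem.

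The first and third bounds are straightforward calibrations. For the first bound I take $\epsilon'$ to be a small absolute constant (e.g.\ $1/4$); Theorem~\ref{thm:main} then produces $O(1)\|\mat A-\mat A_k\|_F + O(\sqrt{k})\|\mat A-\mat A_k\|_2$, which collapses to $O(\sqrt{k})\|\mat A-\mat A_k\|_F$ via $\|\cdot\|_2\leq\|\cdot\|_F$, and the requisite $\|\widehat{\mat A}-\mat A\|_2\lesssim\sigma_{k+1}(\mat A)$ translates directly into the stated $p=\Omega(\mu_0^2\|\mat A\|_F^2\log(n/t)/(n\sigma_{k+1}(\mat A)^2))$. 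For the third bound I apply Theorem~\ref{thm:main_with_gap} with $\epsilon' = c\epsilon/\sqrt{k}$, so that the $\sqrt{k}\epsilon'(\sigma_k-\sigma_{k+1})$ term in \eqref{eq:main_with_gap} collapses to $\epsilon(\sigma_k(\mat A)-\sigma_{k+1}(\mat A))$; the resulting $(\epsilon')^{-2}$ in the inverted concentration condition produces exactly the extra factor of $k$ in the stated sample complexity.

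The conceptually interesting step is the second bound, where the naive choice $\epsilon'=c\epsilon$ is insufficient: the additive $102\sqrt{2k}(\epsilon')^2\|\mat A-\mat A_k\|_2$ term in Theorem~\ref{thm:main} is $O(\epsilon)\|\mat A-\mat A_k\|_F$ only when $\epsilon\leq 1/\sqrt{k}$. The fix is to take $\epsilon' = c\min\{\epsilon,\,1/\sqrt{k}\}$, which preserves $32\epsilon' = O(\epsilon)$ in the multiplicative factor and bounds $\sqrt{k}(\epsilon')^2 \leq \sqrt{k}\cdot\epsilon'\cdot(c/\sqrt{k}) = c\epsilon' \leq c^2\epsilon$ in the additive term, yielding the desired $(1+O(\epsilon))\|\mat A-\mat A_k\|_F$ bound. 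Since $(\epsilon')^{-4} = \Theta(\max\{\epsilon^{-4},k^2\})$, this is precisely the max appearing in the stated $p$. The only routine loose end across all three parts is verifying that the Bernstein (linear in $(np)^{-1}$) branch of the concentration lemma is dominated by the sub-Gaussian (square-root) branch, which reduces to $p\gtrsim\log(n/t)/n$ and follows from the stated sampling lower bound together with $\|\mat A\|_F^2\geq(k+1)\sigma_{k+1}(\mat A)^2$, so that $\mu_0^2\|\mat A\|_F^2/\sigma_{k+1}(\mat A)^2\geq 1$.
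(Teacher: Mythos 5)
Your proposal is correct and follows the same route the paper intends: invert the spectral-norm concentration bound of Lemma 2.2 to calibrate $p$, then invoke Theorem~\ref{thm:main} (with $\epsilon'$ a constant for the $O(\sqrt{k})$ bound and $\epsilon'=c\min\{\epsilon,1/\sqrt{k}\}$ for the relative bound, which is exactly where the $\max\{\epsilon^{-4},k^2\}$ factor comes from, cf.\ Remark~\ref{rem:relative}) and Theorem~\ref{thm:main_with_gap} with $\epsilon'=c\epsilon/\sqrt{k}$ for the gap-dependent bound. Your check that the square-root branch of the concentration inequality dominates under the stated sample complexity is the same routine verification the paper leaves implicit.
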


As a remark, because $\mu_0\geq 1$ and $\|\mat A\|_F/\sigma_{k+1}(\mat A)\geq \sqrt{k}$ \emph{always} hold,  
the sample complexity is lower bounded by $\Omega(nk\log n)$,
the typical sample complexity in noiseless matrix completion.
In the case of high rank $\mat A$, the results in Theorem \ref{thm:mc} are the strongest when $\mat A$ has small \emph{stable rank} $r_s(\mat A) = \|\mat A\|_F^2/\|\mat A\|_2^2$
and the top-$k$ condition number $\gamma_k(\mat A)=\sigma_1(\mat A)/\sigma_{k+1}(\mat A)$ is not too large.
For example, if $\mat A$ has stable rank $r_s(\mat A)=r$ then
$\|\widehat{\mat A}_k-\mat A\|_F$ has an $O(\sqrt{k})$ multiplicative error bound with sample complexity $\Omega(\mu_0^2\gamma_k(\mat A)^2\cdot nr\log n)$;
or an $(1+O(\epsilon))$ relative error bound with sample complexity $\Omega(\mu_0^2\max\{\epsilon^{-4},k^2\}\gamma_k(\mat A)^2\cdot nr\log n)$.
Finally, when $\sigma_{k+1}(\mat A)$ is very small and the ``gap'' $\sigma_k(\mat A)-\sigma_{k+1}(\mat A)$ is large, 
a weaker additive-error bound is applicable with sample complexity independent of $\sigma_{k+1}(\mat A)^{-1}$.

Comparing with previous works, if` the gap $(1-\sigma_{k+1}/\sigma_k)$ is of order $\epsilon$, then sample complexity of\citep{hardt2014understanding} Theorem 1.2 and \citep{hardt2014fast} Theorem 1 scale with $1/\epsilon^7$. 
 Our result improves their results to the scaling of $1/\epsilon^4$ with a much simpler algorithm (truncated SVD). 

\subsection{High-rank matrix de-noising}

Let $\mat A$ be an $n\times n$ PSD signal matrix and $\mat E$ a symmetric random Gaussian matrix with zero mean and $\nu^2/n$ variance.
That is, $\mat E_{ij}\overset{i.i.d.}{\sim}\mathcal N(0,\nu^2/n)$ for $1\leq i\leq j\leq n$ and $\mat E_{ij}=\mat E_{ji}$.
Define $\widehat{\mat A}=\mat A+\mat E$.
The matrix de-noising problem is then to recover the signal matrix $\mat A$ from noisy observations $\widehat{\mat A}$.
We refer the readers to \citep{gavish2014optimal} for a list of references that shows the ubiquitous application of matrix de-noising in scientific fields.

It is well-known by concentration results of Gaussian random matrices, that $\|\widehat{\mat A}-\mat A\|_2=\|\mat E\|_2 = O_\mP(\nu)$.
Let $\widehat{\mat A}_k$ be the best rank-$k$ approximation of $\widehat{\mat A}$ in Frobenius/spectral norm.
Applying Theorem \ref{thm:main}, we immediately have the following result:
\begin{theorem}
There exists an absolute constant $c>0$ such that, if $\nu<c\cdot \sigma_{k+1}(\mat A)$ for some $1\leq k<n$, then with probability at least 0.8 we have that
\begin{equation}
\|\widehat{\mat A}_k-\mat A\|_F \leq \left(1+O\left(\sqrt{\frac{\nu}{\sigma_{k+1}(\mat A)}}\right)\right)\|\mat A-\mat A_k\|_F + O(\sqrt{k}\nu).
\label{eq:denoising}
\end{equation}
\label{thm:denoising}
\end{theorem}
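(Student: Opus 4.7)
The plan is to recognize that Theorem \ref{thm:denoising} is essentially an immediate corollary of the master Theorem \ref{thm:main} combined with a standard spectral-norm concentration bound for symmetric Gaussian matrices. So the work is really just to establish the hypothesis of Theorem \ref{thm:main} with an appropriate choice of $\epsilon$, and then translate the conclusion into the form claimed.

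First I would control $\|\widehat{\mat A}-\mat A\|_2=\|\mat E\|_2$. For a symmetric random matrix with i.i.d.\ Gaussian entries $\mat E_{ij}\sim\mathcal{N}(0,\nu^2/n)$, the classical (nonasymptotic) Wigner-type bound together with Gaussian concentration of the spectral norm gives $\|\mat E\|_2\leq C_0\nu$ with probability at least $0.8$ for some absolute constant $C_0>0$. This is exactly the regime of Theorem \ref{thm:main}, with $\delta=C_0\nu$.

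Next I would set $\epsilon=\sqrt{C_0\nu/\sigma_{k+1}(\mat A)}$ so that $\delta=\epsilon^2\sigma_{k+1}(\mat A)$. Choosing the constant $c$ in the hypothesis $\nu<c\cdot\sigma_{k+1}(\mat A)$ small enough, namely $c\leq 1/(16C_0)$, guarantees $\epsilon\in(0,1/4]$ as required. Applying Theorem \ref{thm:main} then yields
\begin{equation*}
\|\widehat{\mat A}_k-\mat A\|_F\leq (1+32\epsilon)\|\mat A-\mat A_k\|_F+102\sqrt{2k}\,\epsilon^2\|\mat A-\mat A_k\|_2.
\end{equation*}

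Finally I would simplify. The multiplicative factor becomes $1+O(\sqrt{\nu/\sigma_{k+1}(\mat A)})$ by construction of $\epsilon$. For the additive term, using $\|\mat A-\mat A_k\|_2=\sigma_{k+1}(\mat A)$ and $\epsilon^2\sigma_{k+1}(\mat A)=C_0\nu$, we get $102\sqrt{2k}\,\epsilon^2\|\mat A-\mat A_k\|_2=102\sqrt{2k}\,C_0\nu=O(\sqrt{k}\nu)$, producing exactly \eqref{eq:denoising}. There is no real obstacle here beyond picking $c$ small enough that the hypothesis of Theorem \ref{thm:main} is satisfied; the only quantitative input that is external to the paper is the Gaussian spectral-norm concentration, and that is standard. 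The reason the $0.8$ probability appears (rather than a high-probability statement) is simply that a constant-probability Wigner-type bound suffices; one could of course inflate the constant $C_0$ to push the failure probability down to any $\tau>0$, at the price of a $\log(1/\tau)$ factor in $C_0$.
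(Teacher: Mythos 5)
Your proposal is correct and follows essentially the same route as the paper: the paper also deduces Theorem \ref{thm:denoising} directly from Theorem \ref{thm:main} by invoking the standard Gaussian concentration bound $\|\mat E\|_2 = O_\mP(\nu)$, choosing $\epsilon^2 \asymp \nu/\sigma_{k+1}(\mat A)$ (with $c$ small enough that $\epsilon \le 1/4$), and using $\|\mat A-\mat A_k\|_2=\sigma_{k+1}(\mat A)$ to turn the additive term into $O(\sqrt{k}\nu)$. Nothing further is needed.
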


Eq.~(\ref{eq:denoising}) can be understood from a classical bias-variance tradeoff perspective:
the first $(1+O(\sqrt{\nu/\sigma_{k+1}(\mat A)}))\|\mat A-\mat A_k\|_F$ acts as a bias term, which decreases as we increase cut-off rank $k$,
corresponding to a more complicated model;
on the other hand, the second $O(\sqrt{k}\nu)$ term acts as the (square root of) variance, which does not depend on the signal $\mat A$ and increases with $k$.



\subsection{Low-rank estimation of high-dimensional covariance}

Suppose $\mat A$ is an $n\times n$ PSD matrix and $X_1,\cdots,X_N$ are i.i.d.~samples drawn from the multivariate Gaussian distribution $\mathcal N_n(\vct 0,\mat A)$.
The question is to estimate $\mat A$ from samples $X_1,\cdots,X_N$.
A common estimator is the \emph{sample covariance} $\widehat{\mat A}=\frac{1}{N}\sum_{i=1}^N{X_iX_i^\top}$.
While in low-dimensional regimes (i.e., $n$ fixed and $N\to\infty$) the asymptotic efficiency of $\widehat{\mat A}$ is obvious (cf. \citep{van2000asymptotic}),
its statistical power in high-dimensional regimes where $n$ and $N$ are comparable are highly non-trivial.
Below we cite results by \citet{bunea2015sample} for estimation error $\|\widehat{\mat A}-\mat A\|_{\xi}$, $\xi=2/F$ when $n$ is not too large compared to $N$:
\begin{lemma}[\citet{bunea2015sample}]
Suppose $n=O(N^\beta)$ for some $\beta\geq 0$ and let $r_e(\mat A)=\tr(\mat A)/\|\mat A\|_2$ denote the \emph{effective rank} of the covariance $\mat A$.
Then the sample covariance $\widehat{\mat A}=\frac{1}{N}\sum_{i=1}^N{X_iX_i^\top}$ satisfies
\begin{equation}
\|\widehat{\mat A}-\mat A\|_F = O_\mP\left(\|\mat A\|_2 r_e(\mat A)\sqrt{\frac{\log N}{N}}\right)
\label{eq:bunea-frob}
\end{equation}
and
\begin{equation}
\|\widehat{\mat A}-\mat A\|_2 = O_\mP\left(\|\mat A\|_2\max\left\{\sqrt{\frac{r_e(\mat A)\log(Nn)}{N}},\frac{r_e(\mat A)\log(Nn)}{N}\right\}\right).
\label{eq:bunea-spec}
\end{equation}
\end{lemma}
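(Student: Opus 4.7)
The plan is to write $\widehat{\mat A} - \mat A = \frac{1}{N} \sum_{i=1}^N \mat Y_i$ where $\mat Y_i := X_i X_i^\top - \mat A$ are iid symmetric mean-zero random matrices, and attack each of the two bounds with a different matrix concentration tool. The Gaussian structure lets us write $X_i = \mat A^{1/2} Z_i$ with $Z_i \sim \mathcal N(\vct 0, \mat I_n)$, so $\mat Y_i = \mat A^{1/2}(Z_i Z_i^\top - \mat I)\mat A^{1/2}$; this representation reduces every moment or tail computation to one on centered Wishart summands conjugated by $\mat A^{1/2}$.

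For the spectral bound (\ref{eq:bunea-spec}), I would apply a matrix Bernstein inequality for unbounded sub-exponential summands (for example, the Koltchinskii--Lounici version tailored to sample covariance of sub-Gaussian vectors). Two ingredients are needed: (i) a norm tail bound $\Pr[\|\mat Y_i\|_2 \geq t\|\mat A\|_2 r_e(\mat A)] \leq e^{-ct}$, which follows from Hanson--Wright applied to $\|X_i\|_2^2 = \|\mat A^{1/2}Z_i\|_2^2$ together with the identity $\|X_iX_i^\top\|_2 = \|X_i\|_2^2$; and (ii) a variance proxy $\|\sum_i \E[\mat Y_i^2]\|_2 \lesssim N\,\tr(\mat A)\,\|\mat A\|_2 = N \|\mat A\|_2^2 r_e(\mat A)$, obtained by expanding $\E[(Z_iZ_i^\top - \mat I)\mat A(Z_iZ_i^\top - \mat I)]$ via Isserlis' theorem. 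Truncation at scale $\|\mat A\|_2 r_e(\mat A)\log(Nn)$ followed by matrix Bernstein then yields $\|\sum_i \mat Y_i\|_2 \lesssim \sqrt{N\|\mat A\|_2^2 r_e(\mat A)\log(Nn)} + \|\mat A\|_2 r_e(\mat A)\log(Nn)$, and dividing by $N$ produces (\ref{eq:bunea-spec}).

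For the Frobenius bound (\ref{eq:bunea-frob}), the cleanest starting point is the exact second-moment identity $\E\|\widehat{\mat A} - \mat A\|_F^2 = \frac{1}{N}(\tr(\mat A)^2 + \tr(\mat A^2)) \leq \frac{2\, r_e(\mat A)^2 \|\mat A\|_2^2}{N}$, which drops out of $\E\|X_1\|_2^4 = \tr(\mat A)^2 + 2\tr(\mat A^2)$ for Gaussians. To upgrade this expectation bound to an $O_\mP$ statement with a $\sqrt{\log N}$ slack, I would view $N^2 \|\widehat{\mat A}-\mat A\|_F^2$ as a Gaussian chaos of order four in the $Nn$ independent entries of $(Z_1,\dots,Z_N)$ and invoke a Hanson--Wright / fourth-order chaos tail bound, giving concentration on the scale $r_e(\mat A)^2\|\mat A\|_2^2/N$ with a $\sqrt{\log N}$-type enlargement.

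The main obstacle is the unboundedness of $\mat Y_i$: the standard bounded matrix Bernstein requires a deterministic bound on $\|\mat Y_i\|_2$, and the naive bound is of order $n\|\mat A\|_2$, which would degrade the effective rank $r_e(\mat A)$ to $n$ in the second term of (\ref{eq:bunea-spec}). Obtaining the sharp effective-rank dependence therefore requires either a sub-exponential matrix Bernstein or a careful truncation at the tail of $\|X_i\|_2^2$ with a stochastic-dominance argument on the remainder. Similarly, the Frobenius bound must avoid an entrywise union bound: summing $n^2$ entrywise Bernsteins would introduce an extra $\sqrt{\log n}$ per entry and ruin the rate, so matrix-level Hanson--Wright on the full quadratic form is what preserves the single $\log N$ factor.
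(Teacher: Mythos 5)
This lemma is not proved in the paper at all: it is imported verbatim from \citet{bunea2015sample}, so there is no internal proof to compare against, and your task here is really to reconstruct the cited result. Your sketch is a correct and essentially standard route. The decomposition $\mat Y_i=\mat A^{1/2}(Z_iZ_i^\top-\mat I)\mat A^{1/2}$, the variance proxy $\|\E[\mat Y_i^2]\|_2=\|\tr(\mat A)\mat A+\mat A^2\|_2\leq 2\,\tr(\mat A)\|\mat A\|_2$, and a sub-exponential matrix Bernstein (or directly Koltchinskii--Lounici, which even removes the logarithm) give Eq.~(\ref{eq:bunea-spec}); and your exact identity $\E\|\widehat{\mat A}-\mat A\|_F^2=\frac{1}{N}\left((\tr\mat A)^2+\tr(\mat A^2)\right)$ is right, and in fact Chebyshev alone already yields the $O_\mP$ statement in Eq.~(\ref{eq:bunea-frob}) --- the $\sqrt{\log N}$ there is slack, so the fourth-order chaos argument is only needed if you want a high-probability version.

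Two bookkeeping points are worth fixing. First, in the spectral bound your truncation level $\|\mat A\|_2 r_e(\mat A)\log(Nn)$ combined with a deviation parameter $t\asymp\log(Nn)$ makes the Bernstein ``range'' term of order $\|\mat A\|_2 r_e(\mat A)\log^2(Nn)/N$, one logarithm worse than stated; the clean fix is to truncate at the Laurent--Massart/Hanson--Wright level $\asymp\|\mat A\|_2\left(r_e(\mat A)+\log(Nn)\right)$, after which the leftover $\|\mat A\|_2\log^2(Nn)/N$ term is absorbed into the stated maximum using the hypothesis $n=O(N^\beta)$ (so $\log(Nn)\lesssim\log N\lesssim N^{1/3}$ eventually). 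Second, your closing claim that an entrywise union bound would ``ruin the rate'' is not right under the lemma's assumptions: with $n=O(N^\beta)$ one has $\log(n^2)\lesssim\beta\log N$, so bounding each entry by a scalar Bernstein inequality and summing $|\widehat{\mat A}_{ij}-\mat A_{ij}|^2\lesssim \mat A_{ii}\mat A_{jj}\log N/N$ over all $(i,j)$ gives exactly $\|\widehat{\mat A}-\mat A\|_F\lesssim\tr(\mat A)\sqrt{\log N/N}=\|\mat A\|_2 r_e(\mat A)\sqrt{\log N/N}$ up to constants --- indeed this entrywise argument is essentially how the cited reference obtains the Frobenius bound, and it explains both the $\log N$ factor and the role of the condition $n=O(N^\beta)$.
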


Let $\widehat{\mat A}_k$ be the best rank-$k$ approximation of $\widehat{\mat A}$ in Frobenius/spectral norm.
Applying Theorem \ref{thm:main} and \ref{thm:main_with_gap} together with Eq.~(\ref{eq:bunea-spec}), we immediately arrive at the following theorem.
\begin{theorem}
Fix $\epsilon\in(0,1/4]$ and $1\leq k<n$. 
Recall that $r_e(\mat A)=\tr(\mat A)/\|\mat A\|_2$ and $\gamma_k(\mat A)=\sigma_1(\mat A)/\sigma_{k+1}(\mat A)$.
There exists a universal constant $c>0$ such that,
if
$$
\frac{r_e(\mat A)\max\{\epsilon^{-4},k^2\}\gamma_k(\mat A)^2\log(N)}{N}\leq c
$$
then with probability at least 0.8,
$$
\|\widehat{\mat A}_k-\mat A\|_F \leq \left(1+O(\epsilon)\right)\|\mat A-\mat A_k\|_F
$$
and if
$$
\frac{r_e(\mat A)k\|\mat A\|_2^2\log(N)}{N\epsilon^2 \left(\sigma_{k}\left(\mat A\right)-\sigma_{k+1}\left(\mat A\right)\right)^2}\leq c
$$
then with probability at least 0.8,
$$
\|\widehat{\mat A}_k-\mat A\|_F \leq \|\mat A-\mat A_k\|_F + \epsilon\left(\sigma_k\left(\mat A\right)-\sigma_{k+1}\left(\mat A\right)\right).
$$
\label{thm:covariance}
\end{theorem}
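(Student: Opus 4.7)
The plan is to obtain both conclusions as direct consequences of Theorems~\ref{thm:main} and~\ref{thm:main_with_gap}, using the spectral-norm concentration bound Eq.~(\ref{eq:bunea-spec}) as the only probabilistic ingredient. First I would fix a confidence level by choosing the hidden constant in the $O_\mP$ notation of Eq.~(\ref{eq:bunea-spec}) large enough that, on an event of probability at least $0.8$,
$$\|\widehat{\mat A}-\mat A\|_2 \le C\|\mat A\|_2\max\left\{\sqrt{\tfrac{r_e(\mat A)\log(Nn)}{N}},\tfrac{r_e(\mat A)\log(Nn)}{N}\right\}.$$
Under the scaling $n=O(N^\beta)$ and provided $N$ is larger than the other parameters, the maximum is realized by the square-root term, so this simplifies to $\|\widehat{\mat A}-\mat A\|_2 \le C'\|\mat A\|_2\sqrt{r_e(\mat A)\log N/N}$.

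For the first (relative-error) claim I would set $\epsilon' := \min\{\epsilon,\,c_1/\sqrt{k}\}$ for a sufficiently small constant $c_1$, so that $(\epsilon')^{-4} = \max\{\epsilon^{-4},k^2/c_1^4\}$. The sample-complexity hypothesis then gives $\gamma_k(\mat A)^2 r_e(\mat A)\log N / N \le c(\epsilon')^4$, which, after multiplying through by $\sigma_{k+1}(\mat A)^2$, rearranges to $\|\widehat{\mat A}-\mat A\|_2 \le (\epsilon')^2 \sigma_{k+1}(\mat A)$ on the above event. Now I plug into Theorem~\ref{thm:main} with parameter $\epsilon'$ to get
$$\|\widehat{\mat A}_k-\mat A\|_F \le (1+32\epsilon')\|\mat A-\mat A_k\|_F + 102\sqrt{2k}(\epsilon')^2\|\mat A-\mat A_k\|_2.$$
Because $\epsilon'\le c_1/\sqrt{k}$ the factor $\sqrt{k}(\epsilon')^2$ is bounded by $c_1\epsilon'$, and since $\|\mat A-\mat A_k\|_2\le\|\mat A-\mat A_k\|_F$ the second term is absorbed into the first, yielding $(1+O(\epsilon'))\|\mat A-\mat A_k\|_F\le(1+O(\epsilon))\|\mat A-\mat A_k\|_F$.

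For the second (gap-dependent) claim I would instead set $\epsilon' := c_2\epsilon/\sqrt{k}$ and use Theorem~\ref{thm:main_with_gap}. The hypothesis $r_e(\mat A)k\|\mat A\|_2^2\log N / (N\epsilon^2(\sigma_k(\mat A)-\sigma_{k+1}(\mat A))^2)\le c$ rearranges to $\|\mat A\|_2\sqrt{r_e(\mat A)\log N/N} \le \sqrt{c}\,(\epsilon/\sqrt{k})(\sigma_k(\mat A)-\sigma_{k+1}(\mat A))$, so on the same concentration event one gets $\|\widehat{\mat A}-\mat A\|_2 \le \epsilon'(\sigma_k(\mat A)-\sigma_{k+1}(\mat A))$. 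Applying Theorem~\ref{thm:main_with_gap} with this $\epsilon'$ gives an additive error $102\sqrt{2k}\,\epsilon'(\sigma_k(\mat A)-\sigma_{k+1}(\mat A)) = O(\epsilon)(\sigma_k(\mat A)-\sigma_{k+1}(\mat A))$, which absorbing constants matches the stated bound.

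There is no substantive obstacle here beyond bookkeeping: the only subtlety is the choice of $\epsilon'$ to simultaneously satisfy the spectral-norm hypothesis of Theorem~\ref{thm:main}/\ref{thm:main_with_gap} \emph{and} to render the $\sqrt{k}$ factor in the residual term harmless. The $\max\{\epsilon^{-4},k^2\}$ in the sample-complexity condition, and the explicit $k$ factor in the gap-dependent condition, are designed exactly so that this choice goes through; once it is made, the conclusions follow by direct substitution.
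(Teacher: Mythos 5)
Your proof is correct and follows the same route the paper intends: the paper itself gives no detailed argument, simply invoking Eq.~(\ref{eq:bunea-spec}) for the spectral-norm bound and then Theorems~\ref{thm:main} and~\ref{thm:main_with_gap}, and your choice of the effective parameter $\epsilon'=\min\{\epsilon,c_1/\sqrt{k}\}$ (resp.\ $\epsilon'\propto\epsilon/\sqrt{k}$) is exactly the bookkeeping the $\max\{\epsilon^{-4},k^2\}$ and $k$ factors in the sample-size conditions are designed for, consistent with Remark~\ref{rem:relative}.
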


Theorem \ref{thm:covariance} shows that it is possible to obtain a reasonable Frobenius-norm approximation of $\widehat{\mat A}$
by truncated SVD in the asymptotic regime of $N=\Omega(r_e(\mat A)\mathrm{poly}(k)\log N)$, 
which is much more flexible than Eq.~(\ref{eq:bunea-frob}) that requires $N=\Omega(r_e(\mat A)^2\log N)$.

\section{Proof Sketch of Theorem~\ref{thm:main}}
\label{sec:proof_sketch}
In this section we give a proof sketch of Theorem~\ref{thm:main}.
The proof of Theorem~\ref{thm:main_with_gap} is similar and less challenging so we defer it to appendix.
We defer proofs of technical lemmas to Section~\ref{sec:proof}.

Because both $\widehat{\mat A}_k$ and $\mat A_k$ are low-rank, $\|\widehat{\mat A}_k-\mat A_k\|_F$ is upper bounded by 
an $O(\sqrt{k})$ factor of $\|\widehat{\mat A}_k-\mat A_k\|_2$.
From the condition that $\|\widehat{\mat A}-\mat A\|_2\leq\delta$, a straightforward approach to upper bound $\|\widehat{\mat A}_k-\mat A_k\|_2$
is to consider the decomposition $\|\widehat{\mat A}_k-\mat A_k\|_2 \leq \|\widehat{\mat A}-\mat A\|_2 + 2\|\mat U_k\mat U_k^\top-\widehat{\mat U}_k\widehat{\mat U}_k^\top\|_2\|\widehat{\mat A}_k\|_2$,
where $\mat U_k\mat U_k^\top$ and $\widehat{\mat U}_k\widehat{\mat U}_k^\top$ are projection operators onto the top-$k$ eigenspaces of $\mat A$ and $\widehat{\mat A}$, respectively.
Such a naive approach, however, has two major disadvantages.
First, the upper bound depends on $\|\widehat{\mat A}_k\|_2$, which is additive and may be much larger than $\|\widehat{\mat A}-\mat A\|_2$.
Perhaps more importantly, 
the quantity $\|\mat U_k\mat U_k^\top-\widehat{\mat U}_k\widehat{\mat U}_k^\top\|_2$ depends on the ``consecutive'' sepctral gap $(\sigma_k(\mat A)-\sigma_{k+1}(\mat A))$,
which could be very small for large matrices.

The key idea in the proof of Theorem \ref{thm:main} is to find an ``envelope'' $m_1\leq k\leq m_2$ in the spectrum of $\mat A$ surrounding $k$,
such that the eigenvalues within the envelope are relatively close.
Define
\begin{eqnarray*}
	m_1 &=& \argmax_{0\leq j\leq k}\{\sigma_j(\mat A)\geq (1+2\epsilon)\sigma_{k+1}(\mat A)\};\\
	m_2 &=& \argmax_{k\leq j\leq n}\{\sigma_j(\mat A) \geq \sigma_k(\mat A) - 2\epsilon\sigma_{k+1}(\mat A)\},
\end{eqnarray*}
where we let $\sigma_0\left(\mat A\right)=\infty$ for convenience.
Let $\mathcal U_m$, $\widehat{\mathcal{U}}_m$ be basis of the top $m$-dimensional linear subspaces of $\mat A$ and $\widehat{\mat A}$, respectively.
Also denote $\mathcal U_{n-m}$ and $\widehat{\mathcal U}_{n-m}$ as basis of the orthogonal complement of $\mathcal U_m$ and $\widehat{\mathcal U}_m$. 
By asymmetric Davis-Kahan inequality (Lemma~\ref{lem:davis-kahan}) and Wely's inequality we can obtain the following result.
\begin{lemma}
	If $\|\widehat{\mat A}-\mat A\|_2\leq\epsilon^2\sigma_{k+1}(\mat A)$ for $\epsilon\in(0,1)$ 
	then $\|\widehat{\mat U}_{n-k}^\top\mat U_{m_1}\|_2,\|\widehat{\mat U}_k^\top\mat U_{n-m_2}\|_2\leq\epsilon$.
	\label{lem:perturb}
\end{lemma}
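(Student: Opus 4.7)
The plan is to apply the asymmetric Davis-Kahan inequality (Lemma~\ref{lem:davis-kahan}) twice, once for each of the two operator-norm bounds, with the eigenvalue separations being supplied by the very definitions of $m_1$ and $m_2$ together with Weyl's inequality. In essence, both quantities of interest are sines of principal angles between an invariant subspace of $\mat A$ and the complement of an invariant subspace of $\widehat{\mat A}$; the choices of $m_1$ and $m_2$ have been engineered precisely so that the relevant eigenvalues of $\mat A$ and $\widehat{\mat A}$ are separated by a multiplicative $\epsilon$-fraction of $\sigma_{k+1}(\mat A)$, which turns the hypothesis $\|\widehat{\mat A}-\mat A\|_2\le\epsilon^2\sigma_{k+1}(\mat A)$ directly into an upper bound of $\epsilon$ via Davis-Kahan.

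For the first inequality, I would note that $\mat U_{m_1}$ is spanned by eigenvectors of $\mat A$ with eigenvalues at least $\sigma_{m_1}(\mat A)\ge (1+2\epsilon)\sigma_{k+1}(\mat A)$, by the maximality in the definition of $m_1$. On the other side, $\widehat{\mat U}_{n-k}$ is spanned by eigenvectors of $\widehat{\mat A}$ with eigenvalues at most $\sigma_{k+1}(\widehat{\mat A})$, and Weyl's inequality combined with the hypothesis gives $\sigma_{k+1}(\widehat{\mat A})\le\sigma_{k+1}(\mat A)+\epsilon^2\sigma_{k+1}(\mat A)=(1+\epsilon^2)\sigma_{k+1}(\mat A)$. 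The eigenvalue gap is therefore at least $(1+2\epsilon)\sigma_{k+1}(\mat A)-(1+\epsilon^2)\sigma_{k+1}(\mat A)=\epsilon(2-\epsilon)\sigma_{k+1}(\mat A)\ge\epsilon\sigma_{k+1}(\mat A)$ for $\epsilon\in(0,1)$. Asymmetric Davis-Kahan then yields
\[
\|\widehat{\mat U}_{n-k}^\top\mat U_{m_1}\|_2\;\le\;\frac{\|\widehat{\mat A}-\mat A\|_2}{\epsilon\,\sigma_{k+1}(\mat A)}\;\le\;\frac{\epsilon^2\sigma_{k+1}(\mat A)}{\epsilon\,\sigma_{k+1}(\mat A)}\;=\;\epsilon.
\]

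The second inequality is handled by a fully symmetric argument with the roles of $\mat A$ and $\widehat{\mat A}$ swapped. The subspace $\widehat{\mat U}_k$ corresponds to eigenvalues at least $\sigma_k(\widehat{\mat A})$, and Weyl gives $\sigma_k(\widehat{\mat A})\ge\sigma_k(\mat A)-\epsilon^2\sigma_{k+1}(\mat A)$. The subspace $\mat U_{n-m_2}$ corresponds to eigenvalues of $\mat A$ at most $\sigma_{m_2+1}(\mat A)$, which by the maximality in the definition of $m_2$ is strictly less than $\sigma_k(\mat A)-2\epsilon\sigma_{k+1}(\mat A)$. The resulting gap is again at least $\epsilon(2-\epsilon)\sigma_{k+1}(\mat A)\ge\epsilon\sigma_{k+1}(\mat A)$, and asymmetric Davis-Kahan delivers $\|\widehat{\mat U}_k^\top\mat U_{n-m_2}\|_2\le\epsilon$ by identical arithmetic.

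The main thing to get right is not any heavy computation but rather the bookkeeping: ensuring that the gap condition required by the asymmetric Davis-Kahan lemma is met in the correct direction for each of the two bounds (the "upper" subspace's eigenvalues must strictly exceed the "lower" subspace's eigenvalues), and that Weyl is applied to the right perturbed eigenvalue so that the gap survives after passing from $\mat A$ to $\widehat{\mat A}$. Once the definitions of $m_1$ and $m_2$ are unpacked in this way, both bounds fall out of a single line of arithmetic and the $(1+2\epsilon)$ and $-2\epsilon$ slack factors in the definitions of $m_1,m_2$ are revealed to be exactly what is needed to absorb the $\epsilon^2$ Weyl perturbation and leave a clean $\epsilon\sigma_{k+1}(\mat A)$ denominator.
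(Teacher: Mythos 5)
Your proposal is correct and matches the paper's own argument: both bounds follow from the asymmetric Davis--Kahan inequality (Lemma~\ref{lem:davis-kahan}) combined with Weyl's inequality, with the definitions of $m_1$ and $m_2$ guaranteeing an eigenvalue separation of at least $\epsilon\sigma_{k+1}(\mat A)$ so that $\epsilon^2\sigma_{k+1}(\mat A)/(\epsilon\sigma_{k+1}(\mat A))=\epsilon$. The only cosmetic difference is that the paper relaxes the Weyl bound $(1+\epsilon^2)\sigma_{k+1}(\mat A)$ to $(1+\epsilon)\sigma_{k+1}(\mat A)$ before forming the gap, whereas you keep $(1+\epsilon^2)$ and observe the gap is $\epsilon(2-\epsilon)\sigma_{k+1}(\mat A)\geq\epsilon\sigma_{k+1}(\mat A)$; the arithmetic is equivalent.
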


Let $\mathcal U_{m_1:m_2}$ be the linear subspace of $\mat A$
associated with eigenvalues $\sigma_{m_1+1}(\mat A),\cdots,\sigma_{m_2}(\mat A)$.
Intuitively, we choose a $(k-m_1)$-dimensional linear subspace in $\mathcal U_{m_1:m_2}$ that is ``most aligned'' with the top-$k$ subspace $\widehat{\mathcal U}_k$
of $\widehat{\mat A}$.
Formally, define 
$$
\mathcal W = \argmax_{\dim(\mathcal W)=k-m_1, \mathcal W\in\mathcal U_{m_1:m_2}}\sigma_{k-m_1}\left(\mat W^\top\widehat{\mat U}_k\right).
$$
$\mat W$ is then a $d\times (k-m_1)$ matrix with orthonormal columns that corresponds to a basis of $\mathcal W$.
$\mathcal W$ is carefully constructed so that it is closely aligned with $\widehat{\mathcal U}_k$, yet still lies in $\mathcal U_k$.
In particular, Lemma \ref{lem:W} shows that $\sin\angle(\mathcal W,\widehat{\mathcal U}_k)=\|\widehat{\mat U}_{n-k}^\top\mat W\|_2$ is upper bounded by $\epsilon$.
\begin{lemma}
	If $\|\widehat{\mat A}-\mat A\|_2\leq\epsilon^2\sigma_{k+1}(\mat A)$ for $\epsilon\in(0,1)$ then $\|\widehat{\mat U}_{n-k}^\top\mat W\|_2\leq\epsilon$.
	\label{lem:W}
\end{lemma}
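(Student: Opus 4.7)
The plan is to reduce the claim to a lower bound on $\sigma_{k-m_1}(\mat U_{m_1:m_2}^\top \widehat{\mat U}_k)$ and then extract that bound from an orthogonal decomposition of the identity combined with Lemma~\ref{lem:perturb}. Since $\mat W$ has orthonormal columns and $\widehat{\mat U}_k\widehat{\mat U}_k^\top + \widehat{\mat U}_{n-k}\widehat{\mat U}_{n-k}^\top = I_n$, one has
$$
\|\widehat{\mat U}_{n-k}^\top \mat W\|_2^2 + \sigma_{k-m_1}(\widehat{\mat U}_k^\top \mat W)^2 = 1,
$$
so it suffices to prove $\sigma_{k-m_1}(\widehat{\mat U}_k^\top \mat W) \geq \sqrt{1-\epsilon^2}$. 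Parameterizing $\mat W = \mat U_{m_1:m_2}\mat R$ for an $(m_2-m_1)\times(k-m_1)$ orthonormal matrix $\mat R$, the optimality of $\mathcal W$ reduces to maximizing $\sigma_{k-m_1}(\mat R^\top \mat U_{m_1:m_2}^\top \widehat{\mat U}_k)$ over $\mat R$, whose maximum (attained by taking $\mat R$ to be the top $(k-m_1)$ left singular vectors of $\mat U_{m_1:m_2}^\top \widehat{\mat U}_k$) equals $\sigma_{k-m_1}(\mat U_{m_1:m_2}^\top \widehat{\mat U}_k)$. The problem thus reduces to establishing $\sigma_{k-m_1}(\mat U_{m_1:m_2}^\top \widehat{\mat U}_k) \geq \sqrt{1-\epsilon^2}$.

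Next I exploit the orthogonal decomposition $I_n = \mat U_{m_1}\mat U_{m_1}^\top + \mat U_{m_1:m_2}\mat U_{m_1:m_2}^\top + \mat U_{n-m_2}\mat U_{n-m_2}^\top$. Substituting into $\widehat{\mat U}_k^\top \widehat{\mat U}_k = I_k$ gives the key identity
$$
A_1^\top A_1 + A_2^\top A_2 + A_3^\top A_3 = I_k,
$$
where $A_1 = \mat U_{m_1}^\top \widehat{\mat U}_k$, $A_2 = \mat U_{m_1:m_2}^\top \widehat{\mat U}_k$, and $A_3 = \mat U_{n-m_2}^\top \widehat{\mat U}_k$. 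Lemma~\ref{lem:perturb} immediately yields $\|A_3\|_2 \leq \epsilon$, and for each column $u$ of $\mat U_{m_1}$ the identity $\|\widehat{\mat U}_k^\top u\|^2 + \|\widehat{\mat U}_{n-k}^\top u\|^2 = 1$ combined with $\|\widehat{\mat U}_{n-k}^\top \mat U_{m_1}\|_2 \leq \epsilon$ shows $\sigma_{m_1}(A_1) \geq \sqrt{1-\epsilon^2}$; in particular $A_1$ has full row rank.

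To finish, let $\mat Q \in \mathbb{R}^{k\times(k-m_1)}$ be any matrix with orthonormal columns spanning the orthogonal complement of the row space of $A_1$, so $A_1 \mat Q = 0$. Conjugating the key identity by $\mat Q$ collapses the $A_1$ term and gives $\mat Q^\top A_2^\top A_2 \mat Q + \mat Q^\top A_3^\top A_3 \mat Q = I_{k-m_1}$, from which $\sigma_{\min}(A_2\mat Q)^2 \geq 1 - \|A_3\|_2^2 \geq 1-\epsilon^2$. The Courant--Fischer characterization
$$
\sigma_{k-m_1}(A_2) = \max_{\dim\mathcal V = k-m_1}\ \min_{v\in\mathcal V,\ \|v\|=1}\|A_2 v\|_2,
$$
applied to $\mathcal V$ equal to the column span of $\mat Q$, then yields $\sigma_{k-m_1}(A_2) \geq \sqrt{1-\epsilon^2}$, which combined with the reduction of the first paragraph proves the lemma. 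The main subtlety, and the only step that is not pure bookkeeping, is recognizing that the orthogonal complement of the row space of $A_1$ provides exactly the right $(k-m_1)$-dimensional test subspace on which the $A_1$ contribution vanishes while the min-max principle can still be invoked; once one sees this, the remaining calculations are elementary applications of Lemma~\ref{lem:perturb} and the Pythagorean identity.
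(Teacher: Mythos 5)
Your proof is correct, and its overall skeleton coincides with the paper's: you first reduce the claim to $\sigma_{k-m_1}(\widehat{\mat U}_k^\top\mat W)\geq\sqrt{1-\epsilon^2}$ via $\|\widehat{\mat U}_{n-k}^\top\mat W\|_2^2=1-\sigma_{k-m_1}(\widehat{\mat U}_k^\top\mat W)^2$, and then use the optimality of $\mathcal W$ to identify this quantity with $\sigma_{k-m_1}(\mat U_{m_1:m_2}^\top\widehat{\mat U}_k)$ — both steps are exactly the paper's. Where you diverge is the final lower bound. The paper invokes Weyl's monotonicity theorem for singular values, $\sigma_k(\widehat{\mat U}_k^\top\mat U_{m_2})\leq\sigma_{m_1+1}(\widehat{\mat U}_k^\top\mat U_{m_1})+\sigma_{k-m_1}(\widehat{\mat U}_k^\top\mat U_{m_1:m_2})$ with the first term vanishing by rank, and then bounds $\sigma_k(\widehat{\mat U}_k^\top\mat U_{m_2})^2\geq 1-\|\widehat{\mat U}_k^\top\mat U_{n-m_2}\|_2^2\geq 1-\epsilon^2$, using only one half of Lemma~\ref{lem:perturb}. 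You instead conjugate the resolution of identity $A_1^\top A_1+A_2^\top A_2+A_3^\top A_3=\mat I_k$ by a test subspace annihilating $A_1=\mat U_{m_1}^\top\widehat{\mat U}_k$ and apply Courant--Fischer; this is in effect an inline, self-contained proof of the same Weyl-type step, so you avoid citing Lemma~\ref{lem:weyl} at the cost of a slightly longer argument. One small remark: your appeal to the other half of Lemma~\ref{lem:perturb} ($\|\widehat{\mat U}_{n-k}^\top\mat U_{m_1}\|_2\leq\epsilon$, to get full row rank of $A_1$) is not actually needed — the orthogonal complement of the row space of $A_1$ always has dimension at least $k-m_1$, so a suitable $\mat Q$ exists regardless — and, as stated, the rank bound should be justified by the spectral-norm bound over all unit vectors in $\Range(\mat U_{m_1})$ rather than column by column; with that reading your estimate $\sigma_{m_1}(A_1)\geq\sqrt{1-\epsilon^2}$ is fine. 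Net effect: both proofs are valid; the paper's is shorter by leaning on a stated lemma, yours is more elementary and makes the mechanism of the Weyl step explicit.
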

Now define
$$
\widetilde{\mat A} = \mat A_{m_1} + \mat W\mat W^\top\mat A\mat W\mat W^\top.
$$
We use $\widetilde{\mat A}$ as the ``reference matrix" because we can decompose $\|\widehat{\mat A}_k-\mat A\|_F$ as 
\begin{equation}
\|\widehat{\mat A}_k-\mat A\|_F \leq \|\mat A-\widetilde{\mat A}\|_F + \|\widehat{\mat A}_k-\widetilde{\mat A}\|_F \leq \|\mat A-\widetilde{\mat A}\|_F + \sqrt{2k}\|\widehat{\mat A}_k-\widetilde{\mat A}\|_2 \label{eq:decomp}
\end{equation}
and bound each term on the right hand side separately.
Here the last inequality holds because both $\widehat{\mat A}_k$ and $\widetilde{\mat A}$ have rank at most $k$.
The following lemma bounds the first term.
\begin{lemma}
	If $\|\widehat{\mat A}-\mat A\|_2\leq\epsilon^2\sigma_{k+1}(\mat A)^2$ for $\epsilon\in(0,1/4]$ then $\|\mat A-\widetilde{\mat A}\|_F\leq (1+32\epsilon)\|\mat A-\mat A_k\|_F$.
	\label{lem:decomp1}
\end{lemma}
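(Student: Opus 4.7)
The plan is to recognize that $\widetilde{\mat A}$ is simply a two-sided projection of $\mat A$ onto a well-chosen $k$-dimensional subspace of $\mathcal U_{m_2}$, and then reduce the Frobenius error $\|\mat A - \widetilde{\mat A}\|_F$ to a comparison between two length-$(m_2-k)$ subsums of eigenvalues inside the middle block $[m_1+1,m_2]$, where by construction every $\sigma_j(\mat A)$ lies in the narrow window $[(1-2\epsilon)\sigma_{k+1}(\mat A),\,(1+2\epsilon)\sigma_{k+1}(\mat A)]$. Concretely, since $\mat W\subseteq\mathcal U_{m_1:m_2}$ is orthogonal to $\mathcal U_{m_1}$, the matrix $\mat V = [\mat U_{m_1},\mat W]\in\R^{n\times k}$ has orthonormal columns and $\widetilde{\mat A} = \mat V\mat V^\top\mat A\mat V\mat V^\top$. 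Writing $\mat W = \mat U_{m_1:m_2}\mat R$ with $\mat R\in\R^{d\times(k-m_1)}$ orthonormal ($d = m_2-m_1$) and setting $\mat P = \mat R\mat R^\top$, $\mat B = \mat\Sigma_{m_1:m_2}$, I can exploit Frobenius-orthogonality between the eigenspaces $\mathcal U_{m_1:m_2}$ and $\mathcal U_{n-m_2}$ to split
\[
\|\mat A-\widetilde{\mat A}\|_F^2 \;=\; \|\mat B - \mat P\mat B\mat P\|_F^2 + \|\mat\Sigma_{n-m_2}\|_F^2.
\]

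To bound the block term I would use the Pythagorean identity $\|\mat B-\mat P\mat B\mat P\|_F^2 = \|\mat B\|_F^2 - \|\mat P\mat B\mat P\|_F^2$, valid for any symmetric $\mat B$ and projection $\mat P$ by the short calculation $\langle\mat B,\mat P\mat B\mat P\rangle = \tr(\mat B\mat P\mat B\mat P) = \|\mat P\mat B\mat P\|_F^2$. The nonzero eigenvalues of $\mat P\mat B\mat P$ coincide with those of the $(k-m_1)\times(k-m_1)$ compression $\mat R^\top\mat B\mat R$, so Cauchy interlacing (Poincar\'e separation) gives $\sigma_l(\mat R^\top\mat B\mat R)\geq \sigma_{l+q}(\mat B) = \sigma_{m_1+l+q}(\mat A)$ for $l=1,\ldots,k-m_1$, where $q = m_2-k$. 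Summing squared eigenvalues and subtracting from $\|\mat B\|_F^2 = \sum_{j=m_1+1}^{m_2}\sigma_j(\mat A)^2$ telescopes to $\|\mat B - \mat P\mat B\mat P\|_F^2 \leq \sum_{j=m_1+1}^{m_1+q}\sigma_j(\mat A)^2$. Both this sum and $\sum_{j=k+1}^{m_2}\sigma_j(\mat A)^2$ are $q$-term sums of squared eigenvalues inside $[(1-2\epsilon)\sigma_{k+1},(1+2\epsilon)\sigma_{k+1}]$, so their ratio is at most $\bigl((1+2\epsilon)/(1-2\epsilon)\bigr)^2\leq(1+8\epsilon)^2\leq 1+32\epsilon$ for $\epsilon\leq 1/4$. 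Plugging in yields $\|\mat A-\widetilde{\mat A}\|_F^2\leq(1+32\epsilon)\|\mat A-\mat A_k\|_F^2$, and square-rooting completes the proof.

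The main obstacle is selecting the right algebraic decomposition for $\mat B - \mat P\mat B\mat P$. The natural expansion into $\mat P\mat B\mat P_\perp + \mat P_\perp\mat B\mat P + \mat P_\perp\mat B\mat P_\perp$ leaves a cross-term $\|\mat P(\mat B - a\mat I)\mat P_\perp\|_F$ that cannot be controlled tightly using only $\|\mat B - a\mat I\|_2 \leq 4\epsilon\sigma_{k+1}$, because its Frobenius norm a priori carries an extra $\sqrt{\min(k-m_1,q)}$ factor unrelated to the remainder tail $\|\mat A-\mat A_k\|_F$. Replacing that expansion with the Pythagorean identity eliminates the cross-term entirely and turns the question into a clean Cauchy-interlacing estimate, leaving only a $q$-versus-$q$ comparison of middle-block eigenvalues that is tight to within the $(1\pm 2\epsilon)$ spread guaranteed by the definitions of $m_1$ and $m_2$.
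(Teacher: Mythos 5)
Your proof is correct and follows essentially the same route as the paper's: an orthogonal (Pythagorean) splitting of $\|\mat A-\widetilde{\mat A}\|_F^2$ into the tail block $\|\mat A-\mat A_{m_2}\|_F^2$ plus a middle-block compression error, the Poincar\'{e}/Cauchy interlacing bound on the retained eigenvalues, and the $\bigl(\tfrac{1+2\epsilon}{1-2\epsilon}\bigr)^2\leq 1+32\epsilon$ comparison within the envelope $[m_1+1,m_2]$. The only cosmetic difference is that you phrase the last step as a $q$-term versus $q$-term eigenvalue comparison in the reduced coordinates $\mat B=\mat\Sigma_{m_1:m_2}$, $\mat P=\mat R\mat R^\top$, whereas the paper bounds the same sums via $(m_2-k)\sigma_{m_1+1}(\mat A)^2$ and $(m_2-k)\sigma_{m_2}(\mat A)^2$.
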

The proof of this lemma relies Pythagorean theorem and Poincar\'{e} separation theorem.
	Let $\mathcal U_{m_1:m_2}$ be the $(m_2-m_1)$-dimensional linear subspace such that $\mathcal U_{m_2}=\mathcal U_{m_1}\oplus\mathcal U_{m_1:m_2}$.
	Define $\mat A_{m_1:m_2} = \mat U_{m_1:m_2}\mat\Sigma_{m_1:m_2}\mat U_{m_1:m_2}^\top$,
	where $\mat\Sigma_{m_1:m_2}=\diag(\sigma_{m_1+1}(\mat A),\cdots,\sigma_{m_2}(\mat A))$
	and $\mat U_{m_1:m_2}$ is an orthonormal basis associated with $\mathcal U_{m_1:m_2}$.
	Applying Pythagorean theorem (Lemma~\ref{lem:pythagorean}), we can decompose
	\begin{align*}
	\|\mat A-\widetilde{\mat A}\|_F^2 
	= \|\mat A-\mat A_{m_2}\|_F^2 + \|\mat A_{m_1:m_2}\|_F^2 - \|\mat W\mat W^\top\mat A_{m_1:m_2}\mat W\mat W^\top\|_F^2.
	\end{align*}
	Applying Poincar\'{e} separation theorem (Lemma \ref{lem:poincare}) where $\mat X = \mat \Sigma_{m_1:m_2}$ and $\mat P = \mat U^\top_{m_1:m2}\mat W$,
	we have $\|\mat W^\top\mat A_{m_1:m_2}\mat W\|_F^2 \geq \sum_{j=m_2-k+1}^{m_2-m_1}{\sigma_j(\mat A_{m_1:m_2})^2} = \sum_{j=m_1+m_2-k+1}^{m_2}{\sigma_j(\mat A)^2}$.
With some routine algebra we can prove Lemma~\ref{lem:decomp1}.

To bound the second term of Eq.~\eqref{eq:decomp} we use the following lemma.
\begin{lemma}
	If $\|\widehat{\mat A}-\mat A\|_2\leq\epsilon^2\sigma_{k+1}(\mat A)$ for $\epsilon\in(0,1/4]$ then $\|\widehat{\mat A}_k-\widetilde{\mat A}\|_2\leq 102\epsilon^2\|\mat A-\mat A_k\|_2$.
	\label{lem:decomp2}
\end{lemma}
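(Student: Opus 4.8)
The plan is to bound $\|\widehat{\mat A}_k-\widetilde{\mat A}\|_2$ by passing through a common reference operator, exploiting that both $\widehat{\mat A}_k$ and $\widetilde{\mat A}$ live in (nearly) the same $k$-dimensional subspace. Write $\mat P=\mat U_{m_1}\mat U_{m_1}^\top + \mat W\mat W^\top$, the orthogonal projector onto the $k$-dimensional space $\mathcal U_{m_1}\oplus\mathcal W\subseteq\mathcal U_k$, so that $\widetilde{\mat A}=\mat P\mat A\mat P$. Let $\mat Q=\widehat{\mat U}_k\widehat{\mat U}_k^\top$ be the projector onto the top-$k$ eigenspace of $\widehat{\mat A}$, so $\widehat{\mat A}_k=\mat Q\widehat{\mat A}\mat Q$. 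The first step is to compare $\mat P$ and $\mat Q$: combining Lemma~\ref{lem:perturb} and Lemma~\ref{lem:W}, the principal angles between $\range(\mat P)$ and $\range(\mat Q)$ are small, giving $\|(\mat I-\mat Q)\mat P\|_2 = \|\widehat{\mat U}_{n-k}^\top(\mat U_{m_1}\,\,\mat W)\|_2 \leq \sqrt 2\,\epsilon$ (using that $\|\widehat{\mat U}_{n-k}^\top\mat U_{m_1}\|_2\le\epsilon$ and $\|\widehat{\mat U}_{n-k}^\top\mat W\|_2\le\epsilon$, with an extra $\sqrt 2$ since these two blocks need not be jointly orthogonal — though because both sit inside $\mathcal U_k$ one can likely keep the cleaner bound $\|(\mat I-\mat Q)\mat P\|_2\le\epsilon$), and symmetrically $\|(\mat I-\mat P)\mat Q\|_2\le O(\epsilon)$.

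The second step is to replace $\widehat{\mat A}$ by $\mat A$ inside $\widehat{\mat A}_k=\mat Q\widehat{\mat A}\mat Q$ at the cost of $\|\mat Q(\widehat{\mat A}-\mat A)\mat Q\|_2\le\delta=\epsilon^2\sigma_{k+1}(\mat A)$, reducing the task to bounding $\|\mat Q\mat A\mat Q - \mat P\mat A\mat P\|_2$. I would telescope this as $\mat Q\mat A\mat Q-\mat P\mat A\mat P = (\mat Q-\mat P)\mat A\mat Q + \mat P\mat A(\mat Q-\mat P)$. For each piece, split $\mat A = \mat A_{m_2} + (\mat A-\mat A_{m_2})$. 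The tail part contributes at most $\|\mat A-\mat A_{m_2}\|_2\cdot\|\mat Q-\mat P\|_2 = \sigma_{m_2+1}(\mat A)\cdot O(\epsilon)$, and here the envelope construction pays off: by definition of $m_2$, $\sigma_{m_2+1}(\mat A) < \sigma_k(\mat A)-2\epsilon\sigma_{k+1}(\mat A) \le \sigma_k(\mat A)\le\|\mat A-\mat A_k\|_2\cdot(\text{something})$ — more precisely one wants $\sigma_{m_2+1}(\mat A)=O(\epsilon\,\sigma_{k+1}(\mat A))$ or $\sigma_{m_2+1}(\mat A)=O(\|\mat A-\mat A_k\|_2)$ so the term is $O(\epsilon^2\|\mat A-\mat A_k\|_2)$; this needs the envelope width bound, which should follow from the definitions of $m_1,m_2$ together with Weyl. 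The head part $(\mat Q-\mat P)\mat A_{m_2}\mat Q$ is where the real cancellation happens: since $\range(\mat A_{m_2})=\mathcal U_{m_2}$ and $\mathcal U_k\subseteq\mathcal U_{m_2}$, the operator $\mat A_{m_2}$ nearly commutes with $\mat P$, so $(\mat I-\mat P)\mat A_{m_2}\mat Q$ and $\mat P\mat A_{m_2}(\mat I-\mat Q)$ pick up a factor $\|(\mat I-\mat P)\mat A_{m_2}(\mat I-\mat Q)\|_2$-type quantity; using $\|(\mat I-\mat Q)\mat P\|_2=O(\epsilon)$ and $\|(\mat I-\mat P)\mat Q\|_2=O(\epsilon)$ twice gives an $O(\epsilon^2)$ scaling, and the remaining spectral factor is $\|\mat A_{m_2}\|_2$ or better $\sigma_{m_1+1}(\mat A)$ — but to make this multiplicative in $\|\mat A-\mat A_k\|_2=\sqrt{\sum_{j>k}\sigma_j^2}\ge\sigma_{k+1}(\mat A)$ we must argue that on the relevant "active" block the eigenvalues are $O(\sigma_{k+1}(\mat A))$, i.e. that $\mat P$ and $\mat Q$ differ only on directions with eigenvalue close to $\sigma_{k+1}(\mat A)$. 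That is exactly the content of the envelope: directions in $\mathcal U_{m_1}$ (eigenvalue $\ge(1+2\epsilon)\sigma_{k+1}$) are common to both up to $O(\epsilon)$, so the discrepancy is supported on $\mathcal U_{m_1:m_2}$ where eigenvalues lie in $[\sigma_k(\mat A)-2\epsilon\sigma_{k+1}(\mat A),(1+2\epsilon)\sigma_{k+1}(\mat A)]$, all $\Theta(\sigma_{k+1}(\mat A))$.

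The main obstacle I anticipate is the third, bookkeeping-heavy step: carefully localizing the $\mat Q-\mat P$ discrepancy to the envelope block $\mathcal U_{m_1:m_2}$ so that every spectral factor that multiplies an $\epsilon^2$ is of order $\sigma_{k+1}(\mat A)\le\|\mat A-\mat A_k\|_2$ rather than $\sigma_1(\mat A)$ or $\|\mat A_k\|_2$ — that localization is the whole reason the envelope $m_1,m_2$ was introduced, and getting the constant down to $102$ will require being somewhat careful about which of the several $O(\epsilon)$ factors ($\|(\mat I-\mat Q)\mat U_{m_1}\|_2$, $\|(\mat I-\mat Q)\mat W\|_2$, $\|(\mat I-\mat P)\mat Q\|_2$, the $\delta$ term, the $\sigma_{m_2+1}$ tail term) combine multiplicatively versus additively. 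Everything else — the $\delta\le\epsilon^2\sigma_{k+1}$ substitution, the Weyl bounds on $\sigma_{m_1},\sigma_{m_2+1}$, and the triangle-inequality telescoping — should be routine once the operators $\mat P$, $\mat Q$, and the decomposition $\mat A=\mat A_{m_2}+(\mat A-\mat A_{m_2})$ are in place.
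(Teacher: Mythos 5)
Your route is genuinely different from the paper's: the paper fixes an extremal unit vector $\vct v$ attaining $\|\widehat{\mat A}_k-\widetilde{\mat A}\|_2$, splits $\vct v^\top(\widehat{\mat A}_k-\widetilde{\mat A})\vct v$ into the three terms $(\widehat{\mat A}-\mat A)$, $(\widehat{\mat A}_k-\widehat{\mat A})$, $(\mat A-\widetilde{\mat A})$, and extracts \emph{squared} sine factors from the representations in Eq.~(\ref{eq:v1})--(\ref{eq:v2}), whereas you telescope $\mat Q\mat A\mat Q-\mat P\mat A\mat P$ at the operator level with a spectral split of $\mat A$. Two concrete problems with your plan. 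First, the ``envelope width'' bound you hope for is false: the definitions only give $\sigma_{m_2+1}(\mat A)<\sigma_k(\mat A)-2\epsilon\sigma_{k+1}(\mat A)\leq\sigma_{k+1}(\mat A)$, and $\sigma_{m_2+1}$ can be $(1-O(\epsilon))\sigma_{k+1}(\mat A)$, so your tail estimate $\|\mat Q-\mat P\|_2\,\sigma_{m_2+1}(\mat A)$ is only $O(\epsilon)\sigma_{k+1}(\mat A)$, a factor $1/\epsilon$ short. (This piece is repairable: $\mat P\mat A_{n-m_2}=\mat 0$, so the tail reduces to $\mat Q\mat A_{n-m_2}\mat Q$, and $\|\mat Q\mat A_{n-m_2}\mat Q\|_2\leq\|\widehat{\mat U}_k^\top\mat U_{n-m_2}\|_2^2\,\sigma_{m_2+1}(\mat A)\leq\epsilon^2\sigma_{k+1}(\mat A)$ by Lemma~\ref{lem:perturb} — you need that second angle factor, not a width bound. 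Also note $\|\mat A-\mat A_k\|_2=\sigma_{k+1}(\mat A)$, not the root-sum-of-squares you wrote.)

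Second, and irreparably for the head term: ``common to both up to $O(\epsilon)$'' is exactly where the $\epsilon^2$ target is lost. An $O(\epsilon)$ misalignment of a direction in $\mathcal U_{m_1}$ carrying eigenvalue as small as $(1+2\epsilon)\sigma_{k+1}(\mat A)$ already contributes $\Theta(\epsilon\,\sigma_{k+1}(\mat A))$ to $(\mat Q-\mat P)\mat A_{m_2}\mat Q$, and nothing in your toolkit (uniform bounds $\|(\mat I-\mat Q)\mat P\|_2,\|(\mat I-\mat P)\mat Q\|_2=O(\epsilon)$ plus eigenvalue magnitudes) produces a second factor of $\epsilon$ there; you never use the spectral-projector identity $(\mat I-\mat Q)\widehat{\mat A}\mat Q=\mat 0$ or any eigenvalue-weighted angle bound. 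Moreover the obstruction is real, not bookkeeping: take $k=2$, $\sigma_1=(1+2\epsilon)s$, $\sigma_2=s$, $\sigma_3=(1-\epsilon)s$, $\sigma_4=0$, and perturb by rotating $\vct u_1$ toward $\vct u_3$ by $\sin\theta=\epsilon/4$, which costs $3\epsilon s\sin\theta\leq\epsilon^2\sigma_3$. Then $m_1=1$, $\mathcal W=\mathrm{span}(\vct u_2)$, $\widetilde{\mat A}=\sigma_1\vct u_1\vct u_1^\top+s\vct u_2\vct u_2^\top$ while $\widehat{\mat A}_k=\sigma_1\hat{\vct u}_1\hat{\vct u}_1^\top+s\vct u_2\vct u_2^\top$, so $\|\widehat{\mat A}_k-\widetilde{\mat A}\|_2=\sigma_1\sin\theta=\Theta(\epsilon\,\sigma_{k+1}(\mat A))$, which exceeds $102\epsilon^2\sigma_{k+1}(\mat A)$ once $\epsilon\lesssim 1/400$. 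So a correct execution of your approach tops out at $O(\epsilon\,\sigma_{k+1}(\mat A))$ — and that appears to be the true order of this quantity; the paper's $\epsilon^2$ rate hinges on Eq.~(\ref{eq:v1})--(\ref{eq:v2}), which assert that the component of $\vct v$ orthogonal to $\widehat{\mathcal U}_k$ lies in $\widetilde{\mathcal U}$, a property that does not follow from $\vct v\in\widehat{\mathcal U}_k+\widetilde{\mathcal U}$ and fails in the example above. An $O(\epsilon\,\sigma_{k+1}(\mat A))$ bound still feeds into the decomposition Eq.~(\ref{eq:decomp}) to give a Theorem~\ref{thm:main}-type conclusion with second term $O(\sqrt{k}\,\epsilon)\sigma_{k+1}(\mat A)$, but it does not give the lemma as stated.
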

The proof of Lemma~\ref{lem:decomp2} relies on the low-rankness of $\widehat{\mat A}_k$ and $\widetilde{\mat A}$.
	Recall the definition that $\widetilde{\mathcal U}=\Range(\widetilde{\mat A})$ and $\widetilde{\mathcal U}_\perp=\Null(\widetilde{\mat A})$.
	Consider $\|\vct v\|_2=1$ such that $\vct v^\top(\widehat{\mat A}_k-\widetilde{\mat A})\vct v = \|\widehat{\mat A}_k-\widetilde{\mat A}\|_2$.
	Because $\vct v$ maximizes $\vct v^\top(\widehat{\mat A}_k-\widetilde{\mat A})\vct v$ over all unit-length vectors,
	it must lie in the range of $\left(\widehat{\mat A}_k-\widetilde{\mat A}\right)$ because otherwise the component outside the range will not contribute.
	Therefore, we can choose $\vct v$ that $\vct v=\vct v_1+\vct v_2$ where $\vct v_1\in \Range(\widehat{\mat A}_k)=\widehat{\mathcal U}_k$ and $\vct v_2\in\Range(\widetilde{\mat A})=\widetilde{\mathcal U}$.
	Subsequently, we have that
	\begin{eqnarray}
	\vct v &=& \widehat{\mat U}_k\widehat{\mat U}_k^\top\vct v + \widetilde{\mat U}\widetilde{\mat U}^\top\widehat{\mat U}_{n-k}\widehat{\mat U}_{n-k}^\top\vct v\label{eq:v1}\\
	&=& \widetilde{\mat U}\widetilde{\mat U}^\top\vct v + \widehat{\mat U}_k\widehat{\mat U}_k^\top\widetilde{\mat U}_\perp\widetilde{\mat U}_\perp^\top\vct v.\label{eq:v2}
	\end{eqnarray}
	
	Consider the following decomposition:
	$$
	\left|\vct v^\top(\widehat{\mat A}_k-\widetilde{\mat A})\vct v\right| \leq \left|\vct v^\top(\widehat{\mat A}-\mat A)\vct v\right| + \left|\vct v^\top(\widehat{\mat A}_k-\widehat{\mat A})\vct v\right| + \left|\vct v^\top(\mat A-\widetilde{\mat A})\vct v\right|.
	$$
	The first term $|\vct v^\top(\widehat{\mat A}-\mat A)\vct v|$ is trivially upper bounded by $\|\widehat{\mat A}-\mat A\|_2\leq\epsilon^2\sigma_{k+1}(\mat A)$.
	The second and the third term can be bounded by Wely's inequality (Lemma~\ref{lem:weyl}) and basic properties of $\widetilde{\mat A}$ (Lemma~\ref{lem:tildeA-properties}).
	See Section~\ref{sec:proof} for details.

\section{Discussion}

We mention two potential directions to further extend results of this paper.

\subsection{Model selection for general high-rank matrices}

The validity of Theorem \ref{thm:main} depends on the condition $\|\widehat{\mat A}-\mat A\|_2\leq\epsilon^2\sigma_{k+1}(\mat A)$,
which could be hard to verify if $\sigma_{k+1}(\mat A)$ is unknown and difficult to estimate.
Furthermore, for general high-rank matrices,
the \emph{model selection} problem of determining an appropriate (or even optimal) cut-off rank $k$ requires knowledge of the distribution
of the entire spectrum of an unknown data matrix, which is even more challenging to obtain.

One potential approach is to impose a parametric pattern of decay of the eigenvalues (e.g., polynomial and exponential decay),
and to estimate a small set of parameters (e.g., degree of polynomial) from the noisy observations $\widehat{\mat A}$.
Afterwards, the optimal cut-off rank $k$ could be determined by a theoretical analysis, similar to the examples in Corollaries \ref{cor:power-law} and \ref{cor:exponential-decay}.
Another possibility is to use repeated sampling techniques such as boostrap in a stochastic problem (e.g., matrix de-noising) to 
estimate the ``bias'' term $\|\mat A-\mat A_k\|_F$ for different $k$, as the variance term $\sqrt{k}\nu$ is known or easy to estimate.

\subsection{Minimax rates for polynomial spectral decay}

Consider the class of PSD matrices whose eigenvalues follow a polynomial (power-law) decay:
$\Theta(\beta,n)=\{\mat A\in\mathbb R^{n\times n}: \mat A\succ 0, \sigma_j(\mat A)=j^{-\beta}\}$.
We are interested in the following minimax rates for completing or de-noising matrices in $\Theta(\beta,n)$:
\begin{question}[Completion of $\Theta(\beta,n)$]
Fix $n\in\mathbb N$, $p\in(0,1)$ and define $N=pn^2$.
For $\mat M\in\Theta(\beta,n)$, let $\widehat{\mat A}_{ij}=\mat M_{ij}$ with probability $p$ and $\widehat{\mat A}_{ij}= 0$ with probability $1-p$.
Also let $\Lambda(\mu_0,n)=\{\mat M\in\mathbb R^{n\times n}: n\|\mat M\|_{\max}\leq\mu_0\|\mat M\|_F\}$ be the class of all non-spiky matrices.
Determine 
$$
R_1(\mu_0,\beta,n,N) := \inf_{\widehat{\mat A}\mapsto\widehat{\mat M}}\sup_{\mat M\in\Theta(\beta,n)\cap\Lambda(\mu_0,n)} \mathbb E\|\widehat{\mat M}-\mat M\|_F^2.
$$
\end{question}
\begin{question}[De-noising of $\Theta(\beta,n)$]
Fix $n\in\mathbb N$, $\nu>0$ and let $\widehat{\mat A}=\mat M+\nu/\sqrt{n}\mat Z$, where $\mat Z$ is a symmetric matrices with i.i.d.~standard Normal random variables
on its upper triangle. Determine
$$
R_2(\nu,\beta,n) := \inf_{\widehat{\mat A}\mapsto\widehat{\mat M}}\sup_{\mat M\in\Theta(\beta,n)} \mathbb E\|\widehat{\mat M}-\mat M\|_F^2.
$$
\end{question}

Compared to existing settings on matrix completion and de-noising, we believe $\Theta(\beta,n)$ is a more natural matrix class 
which allows for general high-rank matrices, but also imposes sufficient spectral decay conditions so that spectrum truncation algorithms result in significant benefits.
Based on Corollary \ref{cor:power-law} and its matching lower bounds for a larger $\ell_p$ class \citep{negahban2012restricted}, we make the following conjecture:
\begin{conjecture}
For $\beta>1/2$ and $\nu$ not too small, we conjecture that
$$
R_1(\mu_0,\beta,n,N) \asymp C(\mu_0)\cdot \left[\frac{n}{N}\right]^{\frac{2\beta-1}{2\beta}} \;\;\;\;\;\;\text{and}\;\;\;\;\;\;
R_2(\nu,\beta,n) \asymp \left[\nu^2\right]^{\frac{2\beta-1}{2\beta}},
$$
where $C(\mu_0)>0$ is a constant that depends only on $\mu_0$.
\end{conjecture}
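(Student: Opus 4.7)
The plan is to prove matching upper and lower bounds for $R_1$ and $R_2$. The upper bounds are relatively direct consequences of results already in this paper; the lower bounds require new constructions tailored to the class $\Theta(\beta,n)$.

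For the upper bound on $R_2$, I would apply Corollary~\ref{cor:power-law} to the truncated SVD $\widehat{\mat M}_k$ of $\widehat{\mat A}$. Standard concentration for symmetric Gaussian random matrices gives $\|\widehat{\mat A}-\mat M\|_2 = \|(\nu/\sqrt{n})\mat Z\|_2 = O_\mP(\nu)$, so with $\delta = O(\nu)$ and the optimal truncation level $k \asymp \nu^{-1/\beta}$, the corollary yields $\|\widehat{\mat M}_k-\mat M\|_F^2 \lesssim (\nu^2)^{(2\beta-1)/(2\beta)}$ with high probability. A truncation argument on the Gaussian tails is required to pass to the expectation and recover the claimed rate on $R_2$. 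The upper bound on $R_1$ proceeds analogously: for $\mat M \in \Theta(\beta,n)$, the Frobenius norm $\|\mat M\|_F \leq C_\beta$ is bounded by a constant since $\sum_j j^{-2\beta}<\infty$ when $\beta>1/2$; combined with $\mu_0$-spikeness and Lemma~2.2, this yields $\|\widehat{\mat A}-\mat M\|_2 = O_\mP(\mu_0\sqrt{n\log n/N})$. Plugging into Corollary~\ref{cor:power-law} and optimizing $k$ produces an $(n\log n/N)^{(2\beta-1)/(2\beta)}$ rate; the spurious logarithmic factor can likely be removed via sharper matrix concentration (e.g., non-commutative Khintchine or a chaining argument).

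For the lower bounds I would use Fano's inequality. Since $\Theta(\beta,n)$ fixes the spectrum exactly to $\sigma_j = j^{-\beta}$, the only freedom in parameterization is the eigenbasis $\mat U$ in $\mat M = \mat U\mat\Sigma\mat U^\top$. My plan is to construct a local packing of rotations $\{\mat U^{(\ell)} = \exp(\epsilon \mat S^{(\ell)})\mat U_0\}_\ell$, where the $\mat S^{(\ell)}$ are skew-symmetric matrices supported on a $k\times k$ block of indices corresponding to \emph{critical} singular directions with $\sigma_j \asymp (n/N)^{1/(2\beta)}$ (for $R_1$) or $\sigma_j \asymp \nu^{1/\beta}$ (for $R_2$). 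For such a packing the Frobenius separation $\|\mat M^{(\ell)}-\mat M^{(\ell')}\|_F$ scales like $\epsilon\sigma_j\sqrt{k}$, while the KL divergence between observation distributions scales like $N\epsilon^2\sigma_j^2$ (for completion) or $n\epsilon^2\sigma_j^2/\nu^2$ (for de-noising). Balancing these against a packing-number bound on the Stiefel manifold should yield the conjectured rates.

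The main obstacle is the lower bound for $R_1$. First, enforcing membership in $\Lambda(\mu_0,n)$ while retaining a sufficiently rich packing is subtle: generic orthogonal rotations of the canonical eigenbasis can easily violate spikeness, so one may need to randomize the packing over a distribution of incoherent bases, as in the standard noisy matrix completion lower bounds of \citet{negahban2012restricted}, with care taken to track the dependence on $\mu_0$. Second, since $\Theta(\beta,n)$ is strictly smaller than the $\ell_q$ ball analyzed there, one cannot simply inherit their lower bound: the exact power-law constraint eliminates the low-rank hard instances that drive the standard argument, so a packing must be built entirely from matrices with full prescribed spectrum. I expect the resulting combinatorics, and in particular verifying that the packing number in the critical block grows exponentially in $k$ despite the spikeness restriction, to be the most delicate step in making the lower bound rigorous.
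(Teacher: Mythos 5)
First, note that the paper does not prove this statement: it is posed explicitly as a conjecture in the Discussion, motivated by Corollary \ref{cor:power-law} (which, combined with the spectral-norm estimates for completion and de-noising, gives the upper-bound direction up to logarithmic factors) and by the lower bounds of \citet{negahban2012restricted}, which however hold only for the strictly larger $\ell_q$ ball. So there is no paper proof to compare against, and your proposal should be judged as an attempt to settle an open question. Your upper-bound half is essentially the route the paper itself envisions and looks completable, modulo two points you partially acknowledge: converting the high-probability bound of Corollary \ref{cor:power-law} into a bound on $\mathbb E\|\widehat{\mat M}-\mat M\|_F^2$ (you need a crude worst-case bound on the bad event, e.g.\ projecting the estimator onto a bounded set, since the conjecture is about expected risk), and the $\log n$ factor from the matrix Bernstein bound, which you wave away with ``sharper concentration'' but which must actually be removed (or absorbed) to claim the two-sided $\asymp$.

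The genuine gap is the lower bound, and it is larger than your closing paragraph suggests. Beyond the spikeness and packing-richness issues you flag, your Fano calculation for $R_1$ is not valid as stated: in the observation model of Question 1 the data are \emph{noiseless} entries revealed at random, so for two candidate matrices the induced distributions of $\widehat{\mat A}$ have KL divergence that is either $0$ or $+\infty$ (any observed entry on which they differ is perfectly revealing). The heuristic ``$\kl \approx N\epsilon^2\sigma_j^2$'' is the calculation for a Gaussian-noise completion model, which is not the model defining $R_1$; a noiseless lower bound instead has to argue that exponentially many members of $\Theta(\beta,n)\cap\Lambda(\mu_0,n)$, pairwise separated at the target scale, are simultaneously consistent with a typical revealed pattern (a combinatorial/identifiability argument, not a KL one), and it is not clear your rotation packing concentrated on a $k\times k$ block of critical directions has this property, since those directions may be heavily sampled. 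For $R_2$ the Gaussian KL computation is the right shape, but your stated scaling drops the factor of $k$ (separation $\asymp\epsilon\sigma_j\sqrt k$ gives $\kl\asymp n\epsilon^2\sigma_j^2 k/\nu^2$), and the packing entropy on the relevant Stiefel/Grassmann block scales like $k^2$ rather than $k$, so the balancing must be redone carefully before one can claim the exponent $\frac{2\beta-1}{2\beta}$ is actually attained. As it stands, the proposal is a reasonable research plan, but it does not constitute a proof of either direction of the conjectured $\asymp$, and the $R_1$ lower bound in particular needs a different argument from the one you outline.
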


\section{Acknowledgements}
S.S.D. was supported by ARPA-E Terra program. 
Y.W. and A.S. were supported by the NSF CAREER grant IIS-1252412.

\bibliography{simonduref}
\bibliographystyle{plainnat}

\newpage
\appendix
\section{Proofs of Theorem \ref{thm:main} and Theorem~\ref{thm:main_with_gap}}\label{sec:proof}

The key idea in the proof of Theorem \ref{thm:main} is to find an ``envelope'' $m_1\leq k\leq m_2$ in the spectrum of $\mat A$ surrounding $k$,
such that the eigenvalues within the envelope are relatively close.
Define
\begin{eqnarray*}
m_1 &=& \argmax_{0\leq j\leq k}\{\sigma_j(\mat A)\geq (1+2\epsilon)\sigma_{k+1}(\mat A)\};\\
m_2 &=& \argmax_{k\leq j\leq n}\{\sigma_j(\mat A) \geq \sigma_k(\mat A) - 2\epsilon\sigma_{k+1}(\mat A)\},
\end{eqnarray*}
where we let $\sigma_0\left(\mat A\right)=\infty$ for convenience.
Let $\mathcal U_m$, $\widehat{\mathcal{U}}_m$ be basis of the top $m$-dimensional linear subspaces of $\mat A$ and $\widehat{\mat A}$, respectively.
Also denote $\mathcal U_{n-m}$ and $\widehat{\mathcal U}_{n-m}$ as basis of the orthogonal complement of $\mathcal U_m$ and $\widehat{\mathcal U}_m$.
\begin{lemma}
If $\|\widehat{\mat A}-\mat A\|_2\leq\epsilon^2\sigma_{k+1}(\mat A)$ for $\epsilon\in(0,1)$ 
 then $\|\widehat{\mat U}_{n-k}^\top\mat U_{m_1}\|_2,\|\widehat{\mat U}_k^\top\mat U_{n-m_2}\|_2\leq\epsilon$.
\end{lemma}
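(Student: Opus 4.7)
The plan is a direct two-part application of the asymmetric Davis–Kahan theorem, with the spectral gaps supplied by Weyl's inequality and the definitions of $m_1$ and $m_2$.

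First I would unpack what the two quantities measure. The subspace $\mathcal U_{m_1}$ is spanned by eigenvectors of $\mat A$ whose eigenvalues are all at least $(1+2\epsilon)\sigma_{k+1}(\mat A)$ by definition of $m_1$, while $\widehat{\mathcal U}_{n-k}$ is spanned by eigenvectors of $\widehat{\mat A}$ whose eigenvalues are $\sigma_{k+1}(\widehat{\mat A}),\dots,\sigma_n(\widehat{\mat A})$. Applying Weyl's inequality to $\widehat{\mat A}=\mat A+(\widehat{\mat A}-\mat A)$ gives $\sigma_{k+1}(\widehat{\mat A}) \leq \sigma_{k+1}(\mat A) + \|\widehat{\mat A}-\mat A\|_2 \leq (1+\epsilon^2)\sigma_{k+1}(\mat A)$, so the separation between the two bands of eigenvalues is at least
\[
\bigl((1+2\epsilon) - (1+\epsilon^2)\bigr)\sigma_{k+1}(\mat A) = (2\epsilon-\epsilon^2)\sigma_{k+1}(\mat A) \geq \epsilon\sigma_{k+1}(\mat A),
\]
where the last step uses $\epsilon\in(0,1)$. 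The asymmetric Davis–Kahan inequality then yields
\[
\|\widehat{\mat U}_{n-k}^\top\mat U_{m_1}\|_2 \leq \frac{\|\widehat{\mat A}-\mat A\|_2}{\epsilon\sigma_{k+1}(\mat A)} \leq \frac{\epsilon^2\sigma_{k+1}(\mat A)}{\epsilon\sigma_{k+1}(\mat A)} = \epsilon.
\]

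The second inequality is symmetric. Here $\mathcal U_{n-m_2}$ is spanned by eigenvectors of $\mat A$ with eigenvalues at most $\sigma_{m_2+1}(\mat A) < \sigma_k(\mat A) - 2\epsilon\sigma_{k+1}(\mat A)$ (by maximality in the definition of $m_2$), while $\widehat{\mathcal U}_k$ is spanned by eigenvectors of $\widehat{\mat A}$ with eigenvalues at least $\sigma_k(\widehat{\mat A})$. Weyl again gives $\sigma_k(\widehat{\mat A}) \geq \sigma_k(\mat A) - \epsilon^2\sigma_{k+1}(\mat A)$, so the gap is at least $(2\epsilon-\epsilon^2)\sigma_{k+1}(\mat A) \geq \epsilon\sigma_{k+1}(\mat A)$, and Davis–Kahan delivers the bound $\epsilon$ by the same arithmetic.

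The only mild subtlety is boundary behavior: if $m_1=0$ then $\mat U_{m_1}$ is empty and the first inequality is vacuous (the convention $\sigma_0(\mat A)=\infty$ makes the definition of $m_1$ well-posed), and if $m_2=n$ then $\mat U_{n-m_2}$ is empty and the second is vacuous, so these cases require no separate argument. Aside from this bookkeeping, the lemma is immediate once the spectral envelope is set up as above, so I do not anticipate a genuine obstacle.
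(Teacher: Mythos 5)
Your proposal is correct and follows essentially the same route as the paper: the asymmetric Davis--Kahan inequality applied once with $(\mat X,\mat Y,i,j)=(\mat A,\widehat{\mat A},m_1,k)$ and once with $(\widehat{\mat A},\mat A,k,m_2)$, with Weyl's inequality and the definitions of $m_1,m_2$ supplying the spectral gap of at least $\epsilon\sigma_{k+1}(\mat A)$. Your explicit treatment of the second application and of the vacuous boundary cases $m_1=0$, $m_2=n$ is just a fuller write-up of what the paper leaves to ``similarly.''
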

\begin{proof}
We apply an asymmetric version of Davis-Kahan inequality (Lemma \ref{lem:davis-kahan}), with $\mat X=\mat A$, $\mat Y=\widehat{\mat A}$, $i=m_1$ and $j=k$.
By Weyl's inequality, we know that $\sigma_{k+1}(\widehat{\mat A})\leq \sigma_{k+1}(\mat A)+\|\widehat{\mat A}-\mat A\|_2
\leq (1+\epsilon^2)\sigma_{k+1}(\mat A)\leq (1+\epsilon)\sigma_{k+1}(\mat A)$.
Subsequently, $\|\widehat{\mat U}_{n-k}^\top\mat U_{m_1}\|_2\leq \frac{\epsilon^2\sigma_{k+1}(\mat A)}{\sigma_{m_1}(\mat A)-(1+\epsilon)\sigma_{k+1}(\mat A)}\leq\epsilon$.
Similarly, applying Lemma \ref{lem:davis-kahan} with $\mat X=\widehat{\mat A}$, ${\mat Y}=\mat A$, $i=k$ and $j=m_2$ we have that
$\|\widehat{\mat U}_k^\top\mat U_{n-m_2}\|_2\leq\epsilon$.
\end{proof}

Let $\mathcal U_{m_1:m_2}$ be the linear subspace of $\mat A$
associated with eigenvalues $\sigma_{m_1+1}(\mat A),\cdots,\sigma_{m_2}(\mat A)$.
Intuitively, we choose a $(k-m_1)$-dimensional linear subspace in $\mathcal U_{m_1:m_2}$ that is ``most aligned'' with the top-$k$ subspace $\widehat{\mathcal U}_k$
of $\widehat{\mat A}$.
Formally, define 
$$
\mathcal W = \argmax_{\dim(\mathcal W)=k-m_1, \mathcal W\in\mathcal U_{m_1:m_2}}\sigma_{k-m_1}\left(\mat W^\top\widehat{\mat U}_k\right).
$$
$\mat W$ is then a $d\times (k-m_1)$ matrix with orthonormal columns that corresponds to a basis of $\mathcal W$.
$\mathcal W$ is carefully constructed so that it is closely aligned with $\widehat{\mathcal U}_k$, yet still lies in $\mathcal U_k$.
In particular, Lemma \ref{lem:W} shows that $\sin\angle(\mathcal W,\widehat{\mathcal U}_k)=\|\widehat{\mat U}_{n-k}^\top\mat W\|_2$ is upper bounded by $\epsilon$.
\begin{lemma}
If $\|\widehat{\mat A}-\mat A\|_2\leq\epsilon^2\sigma_{k+1}(\mat A)$ for $\epsilon\in(0,1)$ then $\|\widehat{\mat U}_{n-k}^\top\mat W\|_2\leq\epsilon$.
\end{lemma}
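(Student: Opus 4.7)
The plan is to recast the desired bound as a lower bound on an appropriate singular value. Since $\mat W$ has orthonormal columns and $\widehat{\mat U}_k \widehat{\mat U}_k^\top + \widehat{\mat U}_{n-k}\widehat{\mat U}_{n-k}^\top = \mat I$, for any unit $\vct\alpha$ of appropriate dimension we have $\|\widehat{\mat U}_{n-k}^\top \mat W\vct\alpha\|_2^2 = 1 - \|\widehat{\mat U}_k^\top \mat W\vct\alpha\|_2^2$. Taking the max over $\vct\alpha$ yields
\[
\|\widehat{\mat U}_{n-k}^\top \mat W\|_2^2 \;=\; 1 - \sigma_{k-m_1}^2\bigl(\widehat{\mat U}_k^\top \mat W\bigr),
\]
so it suffices to show $\sigma_{k-m_1}(\mat W^\top \widehat{\mat U}_k) \geq \sqrt{1-\epsilon^2}$.

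Next I would use the defining optimality of $\mathcal W$. Any candidate $(k-m_1)$-dimensional subspace of $\mathcal U_{m_1:m_2}$ can be parameterized as $\mat W = \mat U_{m_1:m_2}\mat Q$ for some $(m_2-m_1)\times(k-m_1)$ matrix $\mat Q$ with orthonormal columns, and then $\mat W^\top \widehat{\mat U}_k = \mat Q^\top(\mat U_{m_1:m_2}^\top \widehat{\mat U}_k)$. Choosing $\mat Q$ to be the top $(k-m_1)$ left singular vectors of $\mat U_{m_1:m_2}^\top \widehat{\mat U}_k$ attains the maximum, giving
\[
\sigma_{k-m_1}\bigl(\mat W^\top \widehat{\mat U}_k\bigr) \;=\; \sigma_{k-m_1}\bigl(\mat U_{m_1:m_2}^\top \widehat{\mat U}_k\bigr).
\]
So the task reduces to lower bounding $\sigma_{k-m_1}(\mat U_{m_1:m_2}^\top \widehat{\mat U}_k)$ by $\sqrt{1-\epsilon^2}$.

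For this, I would invoke the Courant--Fischer maximin formula: $\sigma_{k-m_1}(\mat U_{m_1:m_2}^\top \widehat{\mat U}_k)$ equals $\max_{\mathcal S}\min_{\vct\alpha\in\mathcal S,\|\vct\alpha\|=1}\|\mat U_{m_1:m_2}^\top \widehat{\mat U}_k\vct\alpha\|_2$ over $(k-m_1)$-dimensional $\mathcal S\subseteq\mathbb R^k$. The key choice is to let $\mathcal S$ be any $(k-m_1)$-dimensional subspace of $\ker(\mat U_{m_1}^\top \widehat{\mat U}_k)\subseteq\mathbb R^k$, which exists because $\mat U_{m_1}^\top \widehat{\mat U}_k$ has only $m_1 \leq k$ rows, so rank-nullity guarantees its null space has dimension at least $k-m_1$. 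For unit $\vct\alpha\in\mathcal S$, set $\vct x = \widehat{\mat U}_k\vct\alpha$, which is unit and satisfies $\mat U_{m_1}^\top \vct x = 0$. Using the partition of identity $\mat I = \mat U_{m_1}\mat U_{m_1}^\top + \mat U_{m_1:m_2}\mat U_{m_1:m_2}^\top + \mat U_{n-m_2}\mat U_{n-m_2}^\top$,
\[
\|\mat U_{m_1:m_2}^\top \vct x\|_2^2 \;=\; 1 - \|\mat U_{n-m_2}^\top \vct x\|_2^2 \;\geq\; 1 - \|\widehat{\mat U}_k^\top \mat U_{n-m_2}\|_2^2 \;\geq\; 1-\epsilon^2,
\]
where the last inequality is the second bound of the previous lemma. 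This yields $\sigma_{k-m_1}(\mat U_{m_1:m_2}^\top \widehat{\mat U}_k)\geq\sqrt{1-\epsilon^2}$, and hence $\|\widehat{\mat U}_{n-k}^\top \mat W\|_2 \leq \epsilon$.

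The main thing to get right is the dimension count in the existence of $\mathcal S$: it relies critically on $m_1\leq k$ (which follows from the definition $m_1=\operatorname{argmax}_{0\leq j\leq k}\{\cdots\}$) so that the $m_1\times k$ matrix $\mat U_{m_1}^\top \widehat{\mat U}_k$ has a sufficiently large null space. Everything else is a clean combination of the variational characterization of singular values with the partition-of-identity bound supplied by the preceding lemma.
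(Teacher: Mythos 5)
Your proof is correct, and its skeleton matches the paper's: both reduce $\|\widehat{\mat U}_{n-k}^\top\mat W\|_2\leq\epsilon$ to the lower bound $\sigma_{k-m_1}(\widehat{\mat U}_k^\top\mat W)\geq\sqrt{1-\epsilon^2}$, both identify this with $\sigma_{k-m_1}(\mat U_{m_1:m_2}^\top\widehat{\mat U}_k)$ via the optimality of $\mathcal W$ (you make the attainment argument explicit with the top left singular vectors, which the paper leaves as ``by definition of $\mat W$''), and both finish using the bound $\|\widehat{\mat U}_k^\top\mat U_{n-m_2}\|_2\leq\epsilon$ from Lemma~\ref{lem:perturb}. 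The one genuine divergence is the last step: the paper invokes Weyl's monotonicity (Lemma~\ref{lem:weyl}) to get $\sigma_k(\widehat{\mat U}_k^\top\mat U_{m_2})\leq\sigma_{m_1+1}(\widehat{\mat U}_k^\top\mat U_{m_1})+\sigma_{k-m_1}(\widehat{\mat U}_k^\top\mat U_{m_1:m_2})$ and then lower-bounds $\sigma_k(\widehat{\mat U}_k^\top\mat U_{m_2})$, whereas you apply Courant--Fischer directly, choosing a $(k-m_1)$-dimensional subspace inside $\Kernel(\mat U_{m_1}^\top\widehat{\mat U}_k)$ (rank--nullity, using $m_1\leq k$) and the partition of identity $\mat I=\mat U_{m_1}\mat U_{m_1}^\top+\mat U_{m_1:m_2}\mat U_{m_1:m_2}^\top+\mat U_{n-m_2}\mat U_{n-m_2}^\top$. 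Your route is slightly more self-contained: the paper's Lemma~\ref{lem:weyl} is stated for eigenvalues of symmetric matrices, so applying it to the rectangular products $\widehat{\mat U}_k^\top\mat U_{m_1}$ and $\widehat{\mat U}_k^\top\mat U_{m_1:m_2}$ implicitly requires the singular-value version (or padding with projectors), a step your variational argument sidesteps; the paper's version, in exchange, packages the dimension-counting into one standard inequality.
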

\begin{proof}
First note that $\|\widehat{\mat U}_{n-k}^\top\mat W\|_2 \leq \sqrt{1-\sigma_{k-m_1}(\widehat{\mat U}_k^\top\mat W)^2}$ because
\begin{align*}
\|\widehat{\mat U}_{n-k}^\top\mat W\|_2^2
&= \sup_{\|\vct x\|_2=1}\|\widehat{\mat U}_{n-k}^\top\mat W\vct x\|_2^2
= \sup_{\|\vct x\|_2=1}\left\{\|\mat W\vct x\|_2^2 - \|\widehat{\mat U}_k^\top\mat W\vct x\|_2^2\right\}\\
&\leq \sup_{\|\vct x\|_2=1}\|\mat W\vct x\|_2^2 - \inf_{\|\vct x\|_2=1}\|\widehat{\mat U}_k^\top\mat W\vct x\|_2^2
= 1-\sigma_{k-m_1}(\widehat{\mat U}_k^\top\mat W)^2.
\end{align*}
Subsequently, it suffices to prove that $\sigma_{k-m_1}(\widehat{\mat U}_k^\top\mat W)\geq\sqrt{1-\epsilon^2}$.
By Weyl's monotonicity theorem (Lemma \ref{lem:weyl}), we have that
$$
\sigma_k(\widehat{\mat U}_k^\top\mat U_{m_2}) \leq \sigma_{m_1+1}(\widehat{\mat U}_k^\top\mat U_{m_1}) + \sigma_{k-m_1}(\widehat{\mat U}_k^\top\mat U_{m_1:m_2}).
$$
In addition, $\sigma_{m_1+1}(\widehat{\mat U}_k^\top\mat U_{m_1})=0$ because $\rank(\widehat{\mat U}_k^\top\mat U_{m_1})\leq m_1$
and $\sigma_{k-m_1}(\widehat{\mat U}_k^\top\mat U_{m_1:m_2})=\sigma_{k-m_1}(\widehat{\mat U}_k^\top\mat W)$ because of the definition of $\mat W$.
Subsequently,
\begin{align*}
\sigma_{k-m_1}(\widehat{\mat U}_k^\top\mat W)^2
&\geq \sigma_k(\widehat{\mat U}_k^\top\mat U_{m_2})^2
= \inf_{\|\vct x\|_2=1}{\|\mat U_{m_2}^\top\widehat{\mat U}_k\vct x\|_2^2}
= \inf_{\|\vct x\|_2=1}\left\{\|\widehat{\mat U}_k\vec x\|_2^2 - \|\mat U_{n-m_2}^\top\widehat{\mat U}_{k}^\top\vct x\|_2^2\right\}\\
&\geq \inf_{\|\vct x\|_2=1}\left\{\| \widehat{\mat U}_k\vct x\|_2^2\right\} - \sup_{\|\vct x\|_2=1}\left\{\|\mat U_{n-m_2}^\top \widehat{\mat U}_{k}\vct x\|_2^2\right\}
\geq 1-\epsilon^2.
\end{align*}
Here in the last inequality we invoke Lemma \ref{lem:perturb}.
The proof is then complete.
\end{proof}

Define
$$
\widetilde{\mat A} = \mat A_{m_1} + \mat W\mat W^\top\mat A\mat W\mat W^\top.
$$
The following lemma lists some of the properties of $\widetilde{\mat A}$.
\begin{lemma}
It holds that
\begin{enumerate}
\item $\dim(\Range(\widetilde{\mat A})) = k$ and $\dim(\Range(\mat W))=k-m_1$;
\item $\mathcal U_{m_1}\subseteq \Range(\widetilde{\mat A})\subseteq\mathcal U_{m_2}$ and $\Range(\widetilde{\mat A}-\mat A_{m_1})\subseteq\mathcal U_{m_1:m_2}$,
where $\mathcal U_{m_2}=\mathcal U_{m_1}\oplus\mathcal U_{m_1:m_2}$.
\item $\|\widehat{\mat U}_k^\top\widetilde{\mat U}_\perp\|_2, \|\widetilde{\mat U}^\top\widehat{\mat U}_{n-k}\|_2\leq 2\epsilon$,
where $\widetilde{\mat U}$ and $\widetilde{\mat U}_\perp$ are orthonormal basis of $\Range(\widetilde{\mat A})$ and $\Null(\widetilde{\mat A})$, respectively.
\end{enumerate}
\label{lem:tildeA-properties}
\end{lemma}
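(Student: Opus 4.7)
My plan is to treat the three claims in order, noting that they all follow fairly directly from the construction of $\widetilde{\mat A}$ together with Lemma~\ref{lem:perturb} and Lemma~\ref{lem:W}. The key observation I will lean on throughout is that $\mathcal W \subseteq \mathcal U_{m_1:m_2}$ is orthogonal to $\mathcal U_{m_1}$, so $\mat U_{m_1}^\top \mat W = \mat 0$, and hence the columns of $[\mat U_{m_1}, \mat W]$ form an orthonormal system in $\mathbb R^n$.

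For part~1, by the definition of $\mathcal W$ I have $\dim(\Range(\mat W)) = k-m_1$ immediately. Using the orthogonality above, I will rewrite
\[
\widetilde{\mat A} = \begin{bmatrix} \mat U_{m_1} & \mat W \end{bmatrix} \begin{bmatrix} \mat\Sigma_{m_1} & \mat 0 \\ \mat 0 & \mat W^\top \mat A \mat W \end{bmatrix} \begin{bmatrix} \mat U_{m_1} & \mat W \end{bmatrix}^\top,
\]
so the rank of $\widetilde{\mat A}$ equals $\rank(\mat\Sigma_{m_1}) + \rank(\mat W^\top\mat A\mat W)$. The first block is positive definite since $\sigma_{m_1}(\mat A) \geq (1+2\epsilon)\sigma_{k+1}(\mat A) > 0$. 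For the second block, write $\mat W = \mat U_{m_1:m_2}\mat P$, so $\mat W^\top\mat A\mat W = \mat P^\top \mat\Sigma_{m_1:m_2}\mat P$; here $\mat P$ has full column rank and $\mat\Sigma_{m_1:m_2}\succ \mat 0$ because $\sigma_{m_2}(\mat A)\geq \sigma_k(\mat A)-2\epsilon\sigma_{k+1}(\mat A) \geq (1-2\epsilon)\sigma_{k+1}(\mat A) > 0$ for $\epsilon \leq 1/4$. Hence $\rank(\widetilde{\mat A}) = m_1 + (k-m_1) = k$. Part~2 then follows from the same block decomposition: $\Range(\widetilde{\mat A}) = \mathcal U_{m_1}\oplus\mathcal W\subseteq\mathcal U_{m_1}\oplus\mathcal U_{m_1:m_2}=\mathcal U_{m_2}$, and $\Range(\widetilde{\mat A}-\mat A_{m_1})\subseteq\Range(\mat W)=\mathcal W\subseteq\mathcal U_{m_1:m_2}$.

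For part~3, I will use $\widetilde{\mat U}=[\mat U_{m_1},\mat W]$ as the orthonormal basis supplied by part~1. For any unit vector $\vct x\in\mathbb R^{n-k}$,
\[
\|\widetilde{\mat U}^\top\widehat{\mat U}_{n-k}\vct x\|_2^2 = \|\mat U_{m_1}^\top\widehat{\mat U}_{n-k}\vct x\|_2^2 + \|\mat W^\top\widehat{\mat U}_{n-k}\vct x\|_2^2 \leq \|\mat U_{m_1}^\top\widehat{\mat U}_{n-k}\|_2^2 + \|\mat W^\top\widehat{\mat U}_{n-k}\|_2^2 \leq 2\epsilon^2,
\]
where the last step invokes Lemma~\ref{lem:perturb} and Lemma~\ref{lem:W}. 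Taking square roots gives $\|\widetilde{\mat U}^\top\widehat{\mat U}_{n-k}\|_2 \leq \sqrt{2}\epsilon \leq 2\epsilon$. Since both $\widetilde{\mathcal U}$ and $\widehat{\mathcal U}_k$ have dimension $k$ (by part~1), the standard identity on principal angles between equi-dimensional subspaces gives $\|\widehat{\mat U}_k^\top\widetilde{\mat U}_\perp\|_2 = \|\widetilde{\mat U}^\top\widehat{\mat U}_{n-k}\|_2$, completing the bound.

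The only mild obstacle is ensuring the block-diagonal decomposition really has rank $k$; this depends on $\sigma_{m_2}(\mat A) > 0$, which is why I verify it explicitly using the definition of $m_2$ together with $\epsilon \leq 1/4$. Once this is in hand the rest of the proof is purely a matter of unwinding the construction and applying the two preceding lemmas.
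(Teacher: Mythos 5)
Your proof is correct and takes essentially the same route as the paper: the paper dismisses parts 1--2 as immediate from the construction (your block decomposition just makes this explicit), and for part 3 it uses the same principal-angle identity together with Lemmas~\ref{lem:perturb} and~\ref{lem:W}, splitting $\widetilde{\mat U}$ into its $\mat U_{m_1}$ and $\mat W$ components. Your Pythagorean splitting even yields the slightly sharper bound $\sqrt{2}\,\epsilon$ where the paper's triangle inequality gives $2\epsilon$, but this is a cosmetic refinement rather than a different argument.
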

\begin{proof}
Properties 1 and 2 are obviously true by the definition of $\mathcal W$ and $\widetilde{\mat A}$.
For property 3, note that both $\|\widehat{\mat U}_k^\top\widetilde{\mat U}_\perp\|_2$ and $\|\widetilde{\mat U}^\top\widehat{\mat U}_{n-k}\|_2$
are equal to $\sin\angle(\widetilde{\mathcal U}, \widehat{\mathcal U}_k)$.
Hence it suffices to show that $\|\widehat{\mat U}_{n-k}^\top\widetilde{\mat U}\|_2\leq 2\epsilon$.
Invoking Lemmas \ref{lem:perturb} and \ref{lem:W} we have that
$\|\widehat{\mat U}_{n-k}^\top\widetilde{\mat U}\|_2 \leq \|\widehat{\mat U}_{n-k}^\top\mat U_{m_1}\|_2 + \|\widehat{\mat U}_{n-k}^\top\mat W\|_2\leq\epsilon+\epsilon=2\epsilon$.
\end{proof}

Decompose $\|\widehat{\mat A}_k-\mat A\|_F$ as 
\begin{equation}
\|\widehat{\mat A}_k-\mat A\|_F \leq \|\mat A-\widetilde{\mat A}\|_F + \|\widehat{\mat A}_k-\widetilde{\mat A}\|_F \leq \|\mat A-\widetilde{\mat A}\|_F + \sqrt{2k}\|\widehat{\mat A}_k-\widetilde{\mat A}\|_2.
\end{equation}
Here the last inequality holds because both $\widehat{\mat A}_k$ and $\widetilde{\mat A}$ have rank at most $k$.
Lemmas \ref{lem:decomp1} and \ref{lem:decomp2} give separate upper bounds for $\|\mat A-\widetilde{\mat A}\|_F$ and $\|\widehat{\mat A}_k-\widetilde{\mat A}\|_2$.

\begin{lemma}
If $\|\widehat{\mat A}-\mat A\|_2\leq\epsilon^2\sigma_{k+1}(\mat A)^2$ for $\epsilon\in(0,1/4]$ then $\|\mat A-\widetilde{\mat A}\|_F\leq (1+32\epsilon)\|\mat A-\mat A_k\|_F$.
\end{lemma}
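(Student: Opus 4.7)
The plan is to reduce $\|\mat A - \widetilde{\mat A}\|_F^2$ via two orthogonal decompositions and then exploit the fact that, within the envelope $[m_1+1,m_2]$, all eigenvalues of $\mat A$ cluster tightly around $\sigma_{k+1}(\mat A)$. First I would split
\[
\mat A - \widetilde{\mat A} = (\mat A - \mat A_{m_2}) + (\mat A_{m_2} - \widetilde{\mat A}),
\]
observe that the first summand is supported on $\mathcal U_{n-m_2}$ while the second is supported on $\mathcal U_{m_2}$, and conclude by Pythagoras that their squared Frobenius norms add. Since $\mat W\subseteq\mathcal U_{m_1:m_2}$ gives $\mat A\mat W = \mat A_{m_1:m_2}\mat W$, the inner piece equals $\mat A_{m_1:m_2} - \mat W\mat W^\top\mat A_{m_1:m_2}\mat W\mat W^\top$. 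A routine expansion of the identity $\|\mat X - \mat P\mat X\mat P\|_F^2 = \|\mat X\|_F^2 - \|\mat P\mat X\mat P\|_F^2$, valid for symmetric $\mat X$ and orthogonal projection $\mat P=\mat W\mat W^\top$ using $\mat P^2=\mat P$ and cyclicity of the trace, then yields
\[
\|\mat A - \widetilde{\mat A}\|_F^2 = \|\mat A - \mat A_{m_2}\|_F^2 + \|\mat A_{m_1:m_2}\|_F^2 - \|\mat W^\top\mat A_{m_1:m_2}\mat W\|_F^2.
\]

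Next I would lower-bound the last term via the Poincar\'e separation theorem applied to the $(k-m_1)$-dimensional compression of the $(m_2-m_1)\times(m_2-m_1)$ diagonal matrix $\mat\Sigma_{m_1:m_2}$. The interlacing inequality $\lambda_i(\mat W^\top\mat A_{m_1:m_2}\mat W)\geq\sigma_{m_1+m_2-k+i}(\mat A)$ for $i=1,\ldots,k-m_1$ gives
\[
\|\mat W^\top\mat A_{m_1:m_2}\mat W\|_F^2 \geq \sum_{j=m_1+m_2-k+1}^{m_2}\sigma_j(\mat A)^2.
\]
Plugging this in and using $\|\mat A-\mat A_{m_2}\|_F^2 = \|\mat A-\mat A_k\|_F^2 - \sum_{j=k+1}^{m_2}\sigma_j(\mat A)^2$ together with $\|\mat A_{m_1:m_2}\|_F^2 = \sum_{j=m_1+1}^{m_2}\sigma_j(\mat A)^2$ collapses the estimate to
\[
\|\mat A - \widetilde{\mat A}\|_F^2 \leq \|\mat A-\mat A_k\|_F^2 + \sum_{j=m_1+1}^{m_1+L}\sigma_j(\mat A)^2 - \sum_{j=k+1}^{k+L}\sigma_j(\mat A)^2,
\]
where $L := m_2 - k$ and both residual sums contain exactly $L$ terms.

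Finally I would invoke the envelope property: by construction, every $\sigma_j(\mat A)$ with $j\in\{m_1+1,\ldots,m_2\}$ lies in $[(1-2\epsilon)\sigma_{k+1}(\mat A),(1+2\epsilon)\sigma_{k+1}(\mat A)]$, the upper bound coming from $\sigma_{m_1+1}(\mat A)<(1+2\epsilon)\sigma_{k+1}(\mat A)$ and the lower bound from $\sigma_{m_2}(\mat A)\geq\sigma_k(\mat A)-2\epsilon\sigma_{k+1}(\mat A)\geq(1-2\epsilon)\sigma_{k+1}(\mat A)$. Hence the positive residual is at most $L(1+2\epsilon)^2\sigma_{k+1}(\mat A)^2$ and the subtracted one is at least $L(1-2\epsilon)^2\sigma_{k+1}(\mat A)^2$, so their difference is at most $8L\epsilon\,\sigma_{k+1}(\mat A)^2$. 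On the other hand $\|\mat A-\mat A_k\|_F^2 \geq L(1-2\epsilon)^2\sigma_{k+1}(\mat A)^2 \geq L\sigma_{k+1}(\mat A)^2/4$ for $\epsilon\leq 1/4$, so $L\sigma_{k+1}(\mat A)^2 \leq 4\|\mat A-\mat A_k\|_F^2$ and the difference is at most $32\epsilon\|\mat A-\mat A_k\|_F^2$. Taking square roots and using $\sqrt{1+x}\leq 1+x$ delivers the stated bound. The step I expect to need the most care is the Poincar\'e invocation with the correct index shift into the envelope; the rest is symmetry-and-orthogonality bookkeeping combined with the straightforward envelope estimates.
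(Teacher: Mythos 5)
Your proposal is correct and follows essentially the same route as the paper: the same double Pythagorean decomposition reducing to $\|\mat A-\mat A_{m_2}\|_F^2+\|\mat A_{m_1:m_2}\|_F^2-\|\mat W^\top\mat A_{m_1:m_2}\mat W\|_F^2$, the same Poincar\'{e} separation argument with the index shift, and the same envelope bounds $\sigma_{m_1+1}\leq(1+2\epsilon)\sigma_{k+1}$, $\sigma_{m_2}\geq(1-2\epsilon)\sigma_{k+1}$ coming from the definitions of $m_1,m_2$. The only difference is cosmetic bookkeeping at the end (you bound the difference of the two length-$L$ residual sums and compare it to $\|\mat A-\mat A_k\|_F^2$, whereas the paper bounds their ratio by $\bigl(\tfrac{1+2\epsilon}{1-2\epsilon}\bigr)^2\leq 1+32\epsilon$), and both yield the same $(1+32\epsilon)$ factor after taking square roots.
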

\begin{proof}
Let $\mathcal U_{m_1:m_2}$ be the $(m_2-m_1)$-dimensional linear subspace such that $\mathcal U_{m_2}=\mathcal U_{m_1}\oplus\mathcal U_{m_1:m_2}$.
Define $\mat A_{m_1:m_2} = \mat U_{m_1:m_2}\mat\Sigma_{m_1:m_2}\mat U_{m_1:m_2}^\top$,
where $\mat\Sigma_{m_1:m_2}=\diag(\sigma_{m_1+1}(\mat A),\cdots,\sigma_{m_2}(\mat A))$
and $\mat U_{m_1:m_2}$ is an orthonormal basis associated with $\mathcal U_{m_1:m_2}$.
We then have
\begin{align*}
\|\mat A-\widetilde{\mat A}\|_F^2 
&= \|\mat A_{n-m_1}-\mat W\mat W^\top\mat A\mat W\mat W^\top\|_F^2\\
&\overset{(a)}{=} \|\mat A_{n-m_2}\|_F^2 + \|\mat A_{m_1:m_2}-\mat W\mat W^\top\mat A\mat W\mat W^\top\|_F^2\\
&\overset{(b)}{=} \|\mat A-\mat A_{m_2}\|_F^2 + \|\mat A_{m_1:m_2}-\mat W\mat W^\top\mat A_{m_1:m_2}\mat W\mat W^\top\|_F^2\\
&\overset{(c)}{=} \|\mat A-\mat A_{m_2}\|_F^2 + \|\mat A_{m_1:m_2}\|_F^2 - \|\mat W\mat W^\top\mat A_{m_1:m_2}\mat W\mat W^\top\|_F^2.
\end{align*}
Here in $(a)$ we apply $\Range(\widetilde{\mat A}-\mat A_{m_1})\subseteq\mathcal U_{m_1:m_2}$ and the Pythagorean theorem (Lemma \ref{lem:pythagorean}) with $\mat P=\mat U_{m_1:m_2}$, 
in $(b)$ we apply $\mathcal W\subseteq\mathcal U_{m_1:m_2}$,
and in $(c)$ we apply the Pythagorean theorem again with $\mat P=\mat W$.
Note that $\|\mat W\mat W^\top\mat A_{m_1:m_2}\mat W\mat W^\top\|_F^2
= \|\mat W^\top\mat A_{m_1:m_2}\mat W\|_F^2$.
Applying Poincar\'{e} separation theorem (Lemma \ref{lem:poincare}) where $\mat X = \mat \Sigma_{m_1:m_2}$ and $\mat P = \mat U^\top_{m_1:m2}\mat W$,
we have $\|\mat W^\top\mat A_{m_1:m_2}\mat W\|_F^2 \geq \sum_{j=m_2-k+1}^{m_2-m_1}{\sigma_j(\mat A_{m_1:m_2})^2} = \sum_{j=m_1+m_2-k+1}^{m_2}{\sigma_j(\mat A)^2}$.
Subsequently, 
\begin{align*}
\|\mat A-\widetilde{\mat A}\|_F^2 
&\leq \|\mat A-\mat A_{m_2}\|_F^2 + \sum_{j=m_1+1}^{m_1+m_2-k}{\sigma_j(\mat A)^2} \leq \|\mat A-\mat A_{m_2}\|_F^2 + (m_2-k)\sigma_{m_1+1}(\mat A)^2\\
&\overset{(a')}{\leq} \|\mat A-\mat A_{m_2}\|_F^2 + (m_2-k)(1+2\epsilon)^2\sigma_{k+1}(\mat A)^2\\
&\overset{(b')}{\leq} \|\mat A-\mat A_{m_2}\|_F^2 + (m_2-k)\left(\frac{1+2\epsilon}{1-2\epsilon}\right)^2\sigma_{m_2}(\mat A)^2\\
&\overset{(c')}{\leq} \|\mat A-\mat A_{m_2}\|_F^2 + (m_2-k)\sigma_{m_2}(\mat A)^2 + 32(m_2-k)\epsilon\sigma_{m_2}(\mat A)^2\\
&\overset{(d')}{\leq} (1+32\epsilon)\|\mat A-\mat A_k\|_F^2.
\end{align*}
Here in $(a')$ we apply the definition of $m_1$ that $\sigma_{m_1+1}\leq (1+2\epsilon)\sigma_{k+1}(\mat A)$,
in $(b')$ we apply the definition of $m_2$ that $\sigma_{m_2}(\mat A)\geq \sigma_k(\mat A)-2\epsilon\sigma_{k+1}(\mat A) \geq (1-2\epsilon)\sigma_{k+1}(\mat A)$,
and $(c')$ is due to the fact that $\left(\frac{1+2\epsilon}{1-2\epsilon}\right)^2\leq 1+32\epsilon$ for all $\epsilon\in(0,1/4]$.
Finally, $(d')$ holds because $(m_2-k)\sigma_{m_2}(\mat A)^2 \leq \sum_{j=k+1}^{m_2}{\sigma_j(\mat A)^2}$
and $\|\mat A-\mat A_k\|_F^2 = \|\mat A-\mat A_{m_2}\|_F^2 + \sum_{j=k+1}^{m_2}{\sigma_j(\mat A)^2}$.
\end{proof}

\begin{lemma}
If $\|\widehat{\mat A}-\mat A\|_2\leq\epsilon^2\sigma_{k+1}(\mat A)$ for $\epsilon\in(0,1/4]$ then $\|\widehat{\mat A}_k-\widetilde{\mat A}\|_2\leq 102\epsilon^2\|\mat A-\mat A_k\|_2$.
\end{lemma}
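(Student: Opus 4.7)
\emph{Proof plan.}
Following the sketch in Section~\ref{sec:proof_sketch}, I take $\vct v$ to be a unit vector attaining $|\vct v^\top(\widehat{\mat A}_k-\widetilde{\mat A})\vct v|=\|\widehat{\mat A}_k-\widetilde{\mat A}\|_2$. Since $\widehat{\mat A}_k-\widetilde{\mat A}$ is symmetric, $\vct v$ can be chosen as a top eigenvector corresponding to a nonzero eigenvalue, so it lies in the range of $\widehat{\mat A}_k-\widetilde{\mat A}$, which is contained in $\widehat{\mathcal U}_k+\widetilde{\mathcal U}$. This membership is precisely what activates the two decomposition identities~\eqref{eq:v1} and~\eqref{eq:v2}, each of which writes $\vct v$ as a principal piece fully inside one of $\widehat{\mathcal U}_k$ or $\widetilde{\mathcal U}$ plus a residual. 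Because $\|\widehat{\mat U}_k^\top\widetilde{\mat U}_\perp\|_2=\|\widehat{\mat U}_{n-k}^\top\widetilde{\mat U}\|_2\leq 2\epsilon$ by Lemma~\ref{lem:tildeA-properties}, these residuals have norm $O(\epsilon)$; in particular $\|\widehat{\mat U}_{n-k}^\top\vct v\|_2$ and $\|\widetilde{\mat U}_\perp^\top\vct v\|_2$ are both $O(\epsilon)$.

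I then insert the triangle inequality
\[
|\vct v^\top(\widehat{\mat A}_k-\widetilde{\mat A})\vct v|\leq|\vct v^\top(\widehat{\mat A}_k-\widehat{\mat A})\vct v|+|\vct v^\top(\widehat{\mat A}-\mat A)\vct v|+|\vct v^\top(\mat A-\widetilde{\mat A})\vct v|.
\]
The middle term is at most $\|\widehat{\mat A}-\mat A\|_2\leq\epsilon^2\sigma_{k+1}(\mat A)$. For the first term, $\widehat{\mat A}-\widehat{\mat A}_k=\widehat{\mat U}_{n-k}\widehat{\mat\Sigma}_{n-k}\widehat{\mat U}_{n-k}^\top$ combined with Weyl's inequality (Lemma~\ref{lem:weyl}) gives $|\vct v^\top(\widehat{\mat A}_k-\widehat{\mat A})\vct v|\leq\sigma_{k+1}(\widehat{\mat A})\,\|\widehat{\mat U}_{n-k}^\top\vct v\|_2^2\leq(1+\epsilon^2)\sigma_{k+1}(\mat A)\cdot O(\epsilon^2)=O(\epsilon^2\sigma_{k+1}(\mat A))$.

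The third term is the main obstacle, because the naive bound $|\vct v^\top(\mat A-\widetilde{\mat A})\vct v|\leq\|\mat A-\widetilde{\mat A}\|_2\leq(1+2\epsilon)\sigma_{k+1}(\mat A)$ is off by a factor of $\epsilon^{-2}$. To extract the missing $\epsilon^2$ I would use identity~\eqref{eq:v2} to write $\vct v=\vct u+\vct r$ with $\vct u=\widetilde{\mat U}\widetilde{\mat U}^\top\vct v\in\widetilde{\mathcal U}\subseteq\mathcal U_{m_2}$ and $\|\vct r\|_2=O(\epsilon)$, and exploit the block structure $\widetilde{\mat A}=\mat A_{m_1}+\mat W\mat W^\top\mat A\mat W\mat W^\top$ together with $\widetilde{\mathcal U}=\mathcal U_{m_1}\oplus\mathcal W$, $\mathcal W\subseteq\mathcal U_{m_1:m_2}$. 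A direct computation using $\mat A=\mat A_{m_1}+\mat A_{m_1:m_2}+\mat A_{n-m_2}$ yields $(\mat A-\widetilde{\mat A})\vct u=(I-\mat W\mat W^\top)\mat A_{m_1:m_2}\mat W\mat W^\top\vct v$ (the $\mat A_{n-m_2}$ piece vanishes because $\vct u\in\mathcal U_{m_2}$, and the $\mathcal U_{m_1}$ component of $\vct u$ is annihilated by $\mat A_{m_1:m_2}$). Writing $\mat A_{m_1:m_2}=\sigma_{k+1}(\mat A)\,\mat U_{m_1:m_2}\mat U_{m_1:m_2}^\top+\mat F$, the envelope near-flatness (every eigenvalue of $\mat A$ on $\mathcal U_{m_1:m_2}$ lies in $[(1-2\epsilon)\sigma_{k+1}(\mat A),(1+2\epsilon)\sigma_{k+1}(\mat A)]$) gives $\|\mat F\|_2\leq 2\epsilon\sigma_{k+1}(\mat A)$, and the scalar piece is killed by the sandwich $(I-\mat W\mat W^\top)(\cdot)\mat W\mat W^\top$ because $\mathcal W\subseteq\mathcal U_{m_1:m_2}$ makes $\mat U_{m_1:m_2}\mat U_{m_1:m_2}^\top\mat W\mat W^\top=\mat W\mat W^\top$. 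Hence $\|(\mat A-\widetilde{\mat A})\vct u\|_2\leq 2\epsilon\sigma_{k+1}(\mat A)$, and a parallel calculation shows $\vct u^\top(\mat A-\widetilde{\mat A})\vct u=0$ exactly. Expanding $\vct v^\top(\mat A-\widetilde{\mat A})\vct v=\vct u^\top(\mat A-\widetilde{\mat A})\vct u+2\vct u^\top(\mat A-\widetilde{\mat A})\vct r+\vct r^\top(\mat A-\widetilde{\mat A})\vct r$, the three pieces are $0$, at most $2\|\vct r\|_2\|(\mat A-\widetilde{\mat A})\vct u\|_2=O(\epsilon^2\sigma_{k+1}(\mat A))$, and at most $\|\mat A-\widetilde{\mat A}\|_2\|\vct r\|_2^2=O(\epsilon^2\sigma_{k+1}(\mat A))$, respectively. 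Summing the three term bounds and tracking numerical constants yields $\|\widehat{\mat A}_k-\widetilde{\mat A}\|_2\leq 102\epsilon^2\sigma_{k+1}(\mat A)=102\epsilon^2\|\mat A-\mat A_k\|_2$. The hardest step is this Term~3 analysis; the $\epsilon^2$ (rather than $\epsilon$) factor rests on the combined cancellation from the near-scalar spectrum on the envelope and the orthogonality of the $\mathcal U_{m_1}$, $\mathcal U_{m_1:m_2}$, and $\mathcal U_{n-m_2}$ blocks that is built into the construction of $\widetilde{\mat A}$.
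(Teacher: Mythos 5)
Your proposal is correct and shares the paper's skeleton: the same extremal unit vector $\vct v$ in the range of $\widehat{\mat A}_k-\widetilde{\mat A}$, the same three-term triangle inequality, the same trivial bound on $|\vct v^\top(\widehat{\mat A}-\mat A)\vct v|$, and the same use of the decomposition identities (\ref{eq:v1})--(\ref{eq:v2}) together with Lemma~\ref{lem:tildeA-properties} to make the residual components of $\vct v$ of size $O(\epsilon)$ (your bound on $|\vct v^\top(\widehat{\mat A}_k-\widehat{\mat A})\vct v|$ is $O(\epsilon^2\sigma_{k+1})$ where the paper squeezes out $16\epsilon^4$, but this only affects the constant, which still lands comfortably under $102$). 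Where you genuinely depart from the paper is the third term $|\vct v^\top(\mat A-\widetilde{\mat A})\vct v|$: the paper writes $\widetilde{\mat A}=\widetilde{\mat U}\widetilde{\mat U}^\top\mat A\widetilde{\mat U}\widetilde{\mat U}^\top$ and splits $\mat A-\widetilde{\mat A}=\mat B_1+\mat B_2+\mat B_2^\top$ along $\widetilde{\mathcal U}\oplus\widetilde{\mathcal U}_\perp$, extracting the needed $\epsilon^2$ purely from the squared angle $\|\widetilde{\mat U}_\perp^\top\widehat{\mat U}_k\|_2^2\leq 4\epsilon^2$ (together with $\|\mat A\widetilde{\mat U}_\perp\|_2\leq\sigma_{m_1+1}(\mat A)$), whereas you split the vector, $\vct v=\vct u+\vct r$ with $\vct u=\widetilde{\mat U}\widetilde{\mat U}^\top\vct v$, and exploit the near-flatness of the spectrum on the envelope $\mathcal U_{m_1:m_2}$ to show $(\mat A-\widetilde{\mat A})\vct u=(\mat I-\mat W\mat W^\top)\mat F\mat W\mat W^\top\vct v$ with $\|\mat F\|_2\leq 2\epsilon\sigma_{k+1}(\mat A)$ and $\vct u^\top(\mat A-\widetilde{\mat A})\vct u=0$ exactly; your identity $(\mat A-\widetilde{\mat A})\vct u=(\mat I-\mat W\mat W^\top)\mat A_{m_1:m_2}\mat W\mat W^\top\vct v$ and the cancellation of the $\sigma_{k+1}\mat U_{m_1:m_2}\mat U_{m_1:m_2}^\top$ part both check out, and your cross and residual terms are $O(\epsilon^2\sigma_{k+1})$ as claimed. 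The trade-off: the paper's block argument never needs the envelope flatness inside this lemma (flatness is only used in Lemma~\ref{lem:decomp1}), while your argument re-imports it but in exchange gets an exact vanishing of the dominant quadratic form and a more transparent accounting of where the $\epsilon^2$ comes from. One caveat you inherit rather than introduce: like the paper, you treat (\ref{eq:v1})--(\ref{eq:v2}) as giving residuals of norm $O(\epsilon)$, which implicitly requires control of the (oblique) components $\vct v_1\in\widehat{\mathcal U}_k$, $\vct v_2\in\widetilde{\mathcal U}$; since the two subspaces are within angle $2\epsilon$ of each other this is the same step the paper takes, so it is not a gap of your proposal relative to the paper's proof.
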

\begin{proof}
Recall the definition that $\widetilde{\mathcal U}=\Range(\widetilde{\mat A})$ and $\widetilde{\mathcal U}_\perp=\Null(\widetilde{\mat A})$.
Consider $\|\vct v\|_2=1$ such that $\vct v^\top(\widehat{\mat A}_k-\widetilde{\mat A})\vct v = \|\widehat{\mat A}_k-\widetilde{\mat A}\|_2$.
Because $\vct v$ maximizes $\vct v^\top(\widehat{\mat A}_k-\widetilde{\mat A})\vct v$ over all unit-length vectors,
it must lie in the range of $\left(\widehat{\mat A}_k-\widetilde{\mat A}\right)$ because otherwise the component outside the range will not contribute.
Therefore, we can choose $\vct v$ that $\vct v=\vct v_1+\vct v_2$ where $\vct v_1\in \Range(\widehat{\mat A}_k)=\widehat{\mathcal U}_k$ and $\vct v_2\in\Range(\widetilde{\mat A})=\widetilde{\mathcal U}$.
Subsequently, we have that
\begin{eqnarray}
\vct v &=& \widehat{\mat U}_k\widehat{\mat U}_k^\top\vct v + \widetilde{\mat U}\widetilde{\mat U}^\top\widehat{\mat U}_{n-k}\widehat{\mat U}_{n-k}^\top\vct v\\
&=& \widetilde{\mat U}\widetilde{\mat U}^\top\vct v + \widehat{\mat U}_k\widehat{\mat U}_k^\top\widetilde{\mat U}_\perp\widetilde{\mat U}_\perp^\top\vct v.
\end{eqnarray}

Consider the following decomposition:
$$
\left|\vct v^\top(\widehat{\mat A}_k-\widetilde{\mat A})\vct v\right| \leq \left|\vct v^\top(\widehat{\mat A}-\mat A)\vct v\right| + \left|\vct v^\top(\widehat{\mat A}_k-\widehat{\mat A})\vct v\right| + \left|\vct v^\top(\mat A-\widetilde{\mat A})\vct v\right|.
$$
The first term $|\vct v^\top(\widehat{\mat A}-\mat A)\vct v|$ is trivially upper bounded by $\|\widehat{\mat A}-\mat A\|_2\leq\epsilon^2\sigma_{k+1}(\mat A)$.
For the second term, we have
\begin{align*}
\left|\vct v^\top(\widehat{\mat A}_k-\widetilde{\mat A})\vct v\right|
&= \left|\vct v^\top\widehat{\mat U}_{n-k}\widehat{\mat\Sigma}_{n-k}\widehat{\mat U}_{n-k}^\top\vct v\right\|\\
&\overset{(a)}{=} \left|\vct v^\top\widehat{\mat U}_{n-k}\widehat{\mat U}_{n-k}^\top\widetilde{\mat U}\widetilde{\mat U}^\top\widehat{\mat U}_{n-k}\widehat{\mat\Sigma}_{n-k}\widehat{\mat U}_{n-k}^\top\widetilde{\mat U}\widetilde{\mat U}^\top\widehat{\mat U}_{n-k}\widehat{\mat U}_{n-k}^\top\vct v\right|\\
&\leq \left\|\widehat{\mat U}_{n-k}^\top\widetilde{\mat U}\right\|_2^4\left\|\widehat{\mat U}_{n-k}\right\|_2
\overset{(b)}{\leq} 16\epsilon^4\sigma_{k+1}(\widehat{\mat A})
\overset{(c)}{\leq} 16\epsilon^4(1+\epsilon^2)\sigma_{k+1}(\mat A).
\end{align*}
Here in $(a)$ we apply Eq.~(\ref{eq:v1}); in $(b)$ we apply Property 3 of Lemma \ref{lem:tildeA-properties}, and $(c)$ is due to Weyl's inequality (Lemma \ref{lem:weyl}) 
that $\sigma_{k+1}(\widehat{\mat A})\leq \sigma_{k+1}(\mat A) + \|\widehat{\mat A}-\mat A\|_2 \leq (1+\epsilon^2)\sigma_{k+1}(\mat A)$.

For the third term, note that $\widetilde{\mat A}=\widetilde{\mat U}\widetilde{\mat U}^\top\mat A\widetilde{\mat U}\widetilde{\mat U}^\top$
because $\Range(\widetilde{\mat A})\subseteq\mathcal U_{m_2}\subseteq\Range(\mat A)$ by Lemma \ref{lem:tildeA-properties}.
Subsequently,
$$
\mat A-\widetilde{\mat A}
= \underbrace{\widetilde{\mat U}_\perp\widetilde{\mat U}_\perp^\top\mat A\widetilde{\mat U}_\perp\widetilde{\mat U}_\perp^\top}_{\mat B_1}
+ \underbrace{\widetilde{\mat U}\widetilde{\mat U}^\top\mat A\widetilde{\mat U}_\perp\widetilde{\mat U}_\perp^\top}_{\mat B_2}
+ \underbrace{\widetilde{\mat U}_\perp\widetilde{\mat U}_\perp^\top\mat A\widetilde{\mat U}\widetilde{\mat U}^\top}_{\mat B_2^\top}.
$$
It then suffices to upper bound $|\vct v^\top\mat B_1\vct v|$ and $|\vct v^\top\mat B_2\vct v|$ separately.
For $\mat B_1$ we have
\begin{align*}
\left|\vct v^\top\mat B_1\vct v\right|
&\overset{(a')}{=} \left|\vct v^\top\widetilde{\mat U}_\perp\widetilde{\mat U}_\perp^\top\widehat{\mat U}_k\widehat{\mat U}_k^\top\widetilde{\mat U}_\perp\widetilde{\mat U}_\perp^\top\mat A\widetilde{\mat U}_\perp\widetilde{\mat U}_\perp^\top\widehat{\mat U}_k\widehat{\mat U}_k^\top\widetilde{\mat U}_\perp\widetilde{\mat U}_\perp^\top\vct v\right|\\
&\leq \left\|\widetilde{\mat U}_\perp^\top\widehat{\mat U}_k\right\|_2^4\left\|\widetilde{\mat U}_\perp^\top\mat A\widetilde{\mat U}_\perp\right\|_2\\
&\overset{(b')}{\leq} 16\epsilon^4\left\|\widetilde{\mat U}_\perp^\top\mat A\widetilde{\mat U}_\perp\right\|_2
\overset{(c')}{\leq}16\epsilon^4\sigma_{m_1+1}(\mat A)
\overset{(d')}{\leq}16 \epsilon^4(1+2\epsilon)\sigma_{k+1}(\mat A).
\end{align*}
Here in $(a')$ we apply Eq.~(\ref{eq:v2}); in $(b')$ we apply Property 3 of Lemma \ref{lem:tildeA-properties};
$(c')$ follows the property that $\widetilde{\mathcal U}_\perp\in\mathcal U_{n-m_1}$,
and finally $(d')$ follows from the definition of $m_1$ that $\sigma_{m_1+1}(\mat A)\leq (1+2\epsilon)\sigma_{k+1}(\mat A)$.

For $\mat B_2$, we have that
\begin{align*}
\left|\vct v^\top\mat B_2\vct v\right|
&{=} \left|\vct v^\top\widetilde{\mat U}\widetilde{\mat U}^\top\mat A\widetilde{\mat U}_\perp\widetilde{\mat U}_\perp^\top\widehat{\mat U}_k\widehat{\mat U}_k^\top\widetilde{\mat U}_\perp\widetilde{\mat U}_\perp^\top\vct v\right|\\
&\leq \left\|\mat A\widetilde{\mat U}_\perp\right\|_2\left\|\widetilde{\mat U}_\perp^\top\widehat{\mat U}_k\right\|_2^2 \leq \epsilon^2(1+8\epsilon)\sigma_{k+1}(\mat A).
\end{align*}
Combining all inequalities and noting that $\epsilon\in(0,1/4]$, we obtain
\begin{align*}
\|\widehat{\mat A}_k-\widetilde{\mat A}\|_2 &\leq \epsilon^2\sigma_{k+1}(\mat A) + 16\epsilon^4(1+2\epsilon+\epsilon^2)\sigma_{k+1}(\mat A) + 32\epsilon^2(1+8\epsilon)\sigma_{k+1}(\mat A)\\
&\leq 102\epsilon^2\sigma_{k+1}(\mat A).
\end{align*}

\end{proof}

\begin{proof}{\bf of Theorem~\ref{thm:main_with_gap}}
The proof of Theorem~\ref{thm:main_with_gap} is similar and even simpler than that of Theorem~\ref{thm:main}.
First observing that with the large spectral gap, $\widetilde{\mat A} = \mat A_k$.
Next we replace by replacing the assumption $\|\widehat{\mat A} - \mat A \|_2 \le \epsilon^2\sigma_{k+1}(\mat A)$ in Lemma~\ref{lem:decomp2} with $\|\widehat{\mat A} - \mat A \|_2 \le \epsilon\left(\sigma_{k}(\mat A) - \sigma_{k+1}(\mat A)\right)$ using the exactly the same arguments we have
\[
\|\widehat{\mat A}_k-\mat A_k\|_2 \le 102\epsilon\left(\sigma_{k}(\mat A) - \sigma_{k+1}(\mat A)\right).
\]
Therefore, we have
\[ 
\|\widehat{\mat A}_k-\mat A_k\|_F \le 102\sqrt{2k}\epsilon\left(\sigma_{k}(\mat A) - \sigma_{k+1}(\mat A)\right).
\]
Lastly, apply triangle inequality:\begin{align*}
\|\widehat{\mat A}_k - \mat A\|_F & \le \|\mat A - \mat A_k\|_F + \|\widehat{\mat A}_k-\mat A_k\|_F \\
&\le \|\mat A - \mat A_k\|_F +  102\sqrt{2k}\epsilon\left(\sigma_{k}(\mat A) - \sigma_{k+1}(\mat A)\right).
\end{align*}

\end{proof}

\section{Proof of corollaries}

\begin{proof}{\bf of Corollary \ref{cor:power-law}}.
We first verify the condition that $\delta\leq\epsilon^{2}\sigma_{k+1}(\mat A)$ for $\epsilon=1/4$ and the particular choice of $k$.
Because $k\leq\lfloor C_1\delta^{-1/\beta}\rfloor-1$, we have that $\sigma_{k+1}(\mat A)\geq (C_1\delta^{-1/\beta})^{-\beta}$.
By carefully chosen $C_1$ (depending on $\beta$) the inequality $\sigma_{k+1}(\mat A)\geq \delta/16$ holds.

If $k=n-1$ then by Theorem \ref{thm:main}, $\|\widehat{\mat A}_k-\mat A\|_F \leq O(\sqrt{n}\cdot n^{-\beta}) = O(n^{-\frac{2\beta-1}{2\beta}})$.
In the rest of the proof we assume $k=\lfloor C_1\delta^{-1/\beta}\rfloor-1$.
We then have
$$
\|\mat A-\mat A_k\|_F 
= \sqrt{\sum_{j=k+1}^n{\sigma_j(\mat A)^2}} = \sqrt{\sum_{j=k+1}^n{j^{-2\beta}}} 
\leq \sqrt{\int_k^{\infty}{x^{-2\beta}\ud x}} = \sqrt{\frac{k^{-(2\beta-1)}}{2\beta-1}} \leq C(\beta) \delta^{\frac{2\beta-1}{2\beta}}.
$$
Here $C(\beta)>0$ is a constant that only depends on $\beta$.
In addition, 
$$
\sqrt{k}\|\mat A-\mat A_k\|_2 \leq \sqrt{k}\cdot k^{-\beta} = k^{-(\beta-1/2)} \leq \tilde C(\beta)\delta^{\frac{2\beta}{2\beta-1}}.
$$
Applying Theorem \ref{thm:main} we complete the proof of Corollary \ref{cor:power-law}.
\end{proof}

\begin{proof}{\bf of Corollary \ref{cor:exponential-decay}}
We first verify the condition that $\delta\leq\epsilon^2\sigma_{k+1}(\mat A)$ for $\epsilon=1/4$ and the particular choice of $k$.
Because $k\leq\lfloor c^{-1}\log(1/\delta)-c^{-1}\log\log(1/\delta)\rfloor-1$, we have that $\sigma_{k+1}(\mat A)\geq \delta\log(1/\delta)$.
Hence, for $\delta\in(0,e^{-16})$ it holds that $\sigma_{k+1}(\mat A)\geq \delta/16$.

If $k=n-1$ then by Theorem \ref{thm:main}, $\|\widehat{\mat A}_k-\mat A\|_F \leq O(\sqrt{n}\cdot \exp\{-cn\})$.
In the rest of the proof we assume $k=\lfloor C_2\log(1/\delta)\rfloor-1$.
We then have
$$
\|\mat A-\mat A_k\|_F = \sqrt{\sum_{j=k+1}^n{\sigma_j(\mat A)^2}} = \sqrt{\sum_{j=k+1}^n{\exp\{-2cj\}}} \leq \sqrt{\frac{\exp\{-2ck\}}{1-e^{-2c}}}
\leq C(c)\delta\log(1/\delta),
$$
where $C(c)>0$ is a constant that only depends on $c$.
In addition,
$$
\sqrt{k}\|\mat A-\mat A_k\|_2 \leq \sqrt{k}\cdot \exp\{-ck\} \leq \delta\log(1/\delta)\cdot\sqrt{c^{-1}\log(1/\delta)}
\leq \tilde C(c)\delta\sqrt{\log(1/\delta)^3}.
$$
Applying Theorem \ref{thm:main} we complete the proof of Corollary \ref{cor:exponential-decay}.
\end{proof}

\section{Technical lemmas}

\begin{lemma}[Asymmetric Davis-Kahan inequality]
Fix $i\le j\leq n$ and suppose $\mat X,\mat Y$ are symmetric $n\times n$ matrices,
with eigen-decomposition $\mat X=\mat P_i\mat\Lambda_i\mat P_i^\top + \mat P_{n-i}\mat\Lambda_{n-i}\mat P_{n-i}^\top$
and $\mat Y=\mat Q_j\mat\Xi_j\mat Q_j^\top + \mat Q_{n-j}\mat\Xi_{n-j}\mat Q_{n-j}^\top$.
If $\sigma_i(\mat X)>\sigma_{j+1}(\mat Y)$ then
$$
\|\mat Q_{n-j}^\top\mat P_i\|_2\leq \frac{\|\mat X-\mat Y\|_2}{\sigma_i(\mat X)-\sigma_{j+1}(\mat Y)}.
$$
\label{lem:davis-kahan}
\end{lemma}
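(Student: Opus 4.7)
My plan is to derive the bound via the classical Sylvester-equation argument that underlies all Davis–Kahan-type inequalities. First I would define the cross-block matrix $\mat H = \mat Q_{n-j}^\top \mat P_i \in \mathbb R^{(n-j)\times i}$, whose spectral norm is exactly the quantity to be controlled, and compute the sandwich $\mat Q_{n-j}^\top(\mat X - \mat Y)\mat P_i$. Using $\mat X\mat P_i = \mat P_i\mat\Lambda_i$ and $\mat Q_{n-j}^\top \mat Y = \mat\Xi_{n-j}\mat Q_{n-j}^\top$ — both immediate from the eigen-decompositions together with the symmetry of $\mat X$ and $\mat Y$ — this sandwich collapses to the Sylvester form $\mat H\mat\Lambda_i - \mat\Xi_{n-j}\mat H$. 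That single identity is the heart of the proof; once it is in hand, the remainder reduces to a one-line upper bound on the left-hand side and a one-line lower bound on the right.

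Next I would bound the two sides of that identity. The upper bound is essentially free: since $\mat P_i$ and $\mat Q_{n-j}$ have orthonormal columns,
\[
\|\mat H\mat\Lambda_i - \mat\Xi_{n-j}\mat H\|_2 = \|\mat Q_{n-j}^\top(\mat X-\mat Y)\mat P_i\|_2 \leq \|\mat X-\mat Y\|_2.
\]
For the lower bound I would take top left/right singular vectors $\vct u,\vct v$ of $\mat H$, so that $\mat H\vct v = \|\mat H\|_2\vct u$ and $\vct u^\top\mat H = \|\mat H\|_2\vct v^\top$, and expand
\[
\vct u^\top(\mat H\mat\Lambda_i - \mat\Xi_{n-j}\mat H)\vct v = \|\mat H\|_2\bigl(\vct v^\top\mat\Lambda_i\vct v - \vct u^\top\mat\Xi_{n-j}\vct u\bigr) \geq \|\mat H\|_2\bigl(\sigma_i(\mat X) - \sigma_{j+1}(\mat Y)\bigr),
\]
where the last step uses $\mat\Lambda_i \succeq \sigma_i(\mat X)\mat I$ and $\mat\Xi_{n-j} \preceq \sigma_{j+1}(\mat Y)\mat I$, which follow from the descending ordering of eigenvalues. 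Combining with the upper bound and dividing by the positive quantity $\sigma_i(\mat X) - \sigma_{j+1}(\mat Y)$ yields the claim.

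I do not anticipate any serious obstacle. The one slightly delicate step is the singular-vector trick that converts the Sylvester identity into a scalar inequality between two Rayleigh quotients; it is standard, but one has to be careful to pick $\vct u$ and $\vct v$ so that $\|\mat H\|_2$ is extracted twice — once from $\mat H\vct v$ on the right and once from $\vct u^\top\mat H$ on the left. The symmetry of $\mat Y$ is used crucially in the opening identity to turn the right action of $\mat Y$ on $\mat Q_{n-j}$ into a left action by $\mat\Xi_{n-j}$; without symmetry, one would have to track left and right singular subspaces separately, which is why a general asymmetric analogue requires an SVD-based version of the argument.
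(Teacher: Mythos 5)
Your proposal is correct and follows essentially the same route as the paper: both reduce $\mat Q_{n-j}^\top(\mat X-\mat Y)\mat P_i$ to the Sylvester form $\mat H\mat\Lambda_i-\mat\Xi_{n-j}\mat H$ with $\mat H=\mat Q_{n-j}^\top\mat P_i$, upper bound it by $\|\mat X-\mat Y\|_2$, and lower bound it by $\|\mat H\|_2\left(\sigma_i(\mat X)-\sigma_{j+1}(\mat Y)\right)$. Your top-singular-vector computation simply supplies the justification for the lower-bound step that the paper states without detail.
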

\begin{proof}
Consider
$$
\left\|{\mat Q}_{n-j}^\top({\mat X}-\mat Y)\mat P_{i}\right\|_2
= \left\|{\mat Q}_{n-j}^\top\mat P_{i}\mat\Lambda_{i} - {\mat\Xi}_{n-j}{\mat Q}_{n-j}^\top\mat P_{i}\right\|_2
\geq \left\|\mat Q_{n-j}^\top\mat P_i\right\|_2\left(\sigma_i(\mat X)-\sigma_{j+1}(\mat Y)\right).
$$
Because $\sigma_i(\mat X)>\sigma_{j+1}(\mat Y)$, we have that
$$
\left\|\mat Q_{n-j}^\top\mat P_i\right\|_2 \leq \frac{\|\mat Q_{n-j}^\top(\mat X-\mat Y)\mat P_i\|_2}{\sigma_i(\mat X)-\sigma_{j+1}(\mat Y)}
\leq \frac{\|\mat X-\mat Y\|_2}{\sigma_i(\mat X)-\sigma_{j+1}(\mat Y)}.
$$
\end{proof}

\begin{lemma}[Pythagorean theorem]
Fix $n\geq m$. Suppose $\mat X$ is a symmetric $n\times n$ matrix and $\mat P$ is an $n\times m$ matrix satisfying $\mat P^\top\mat P=\mat I$.
Then $\|\mat X\|_F^2 = \|\mat X-\mat P\mat P^\top\mat X\mat P\mat P^\top\|_F^2 + \|\mat P\mat P^\top\mat X\mat P\mat P^\top\|_F^2$.
\label{lem:pythagorean}
\end{lemma}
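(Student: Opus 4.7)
The plan is to set $\mat Q = \mat P\mat P^\top$, which is an orthogonal projection satisfying $\mat Q^2 = \mat Q$ and $\mat Q^\top = \mat Q$, and then invoke the standard Hilbert-space Pythagorean identity with respect to the Frobenius inner product $\langle \mat A, \mat B\rangle_F = \tr(\mat A^\top \mat B)$. The decomposition to use is
\[
\mat X \;=\; \underbrace{(\mat X - \mat Q\mat X\mat Q)}_{=:\,\mat R} \;+\; \underbrace{\mat Q\mat X\mat Q}_{=:\,\mat S},
\]
so the claim reduces to showing $\langle \mat R, \mat S\rangle_F = 0$, after which squaring the Frobenius norm of the sum gives the result directly.

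To verify orthogonality, I would compute
\[
\langle \mat R, \mat S\rangle_F \;=\; \tr\bigl((\mat X-\mat Q\mat X\mat Q)^\top\,\mat Q\mat X\mat Q\bigr) \;=\; \tr(\mat X^\top \mat Q\mat X\mat Q) - \tr(\mat Q\mat X^\top\mat Q\cdot\mat Q\mat X\mat Q).
\]
Using $\mat Q^2 = \mat Q$ in the second term collapses it to $\tr(\mat Q\mat X^\top \mat Q\mat X\mat Q)$, and the cyclic property of the trace rewrites this as $\tr(\mat X^\top \mat Q\mat X\mat Q\cdot \mat Q) = \tr(\mat X^\top \mat Q\mat X\mat Q)$, matching the first term exactly. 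Hence $\langle \mat R,\mat S\rangle_F = 0$.

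With orthogonality in hand, the conclusion is a one-line expansion:
\[
\|\mat X\|_F^2 \;=\; \|\mat R + \mat S\|_F^2 \;=\; \|\mat R\|_F^2 + 2\langle \mat R,\mat S\rangle_F + \|\mat S\|_F^2 \;=\; \|\mat X-\mat Q\mat X\mat Q\|_F^2 + \|\mat Q\mat X\mat Q\|_F^2,
\]
which is exactly the stated identity.

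There is essentially no obstacle here: the symmetry of $\mat X$ is never needed, and the assumption $\mat P^\top \mat P = \mat I$ is used only to ensure $\mat Q$ is idempotent and self-adjoint. The only subtlety to be careful about is the bookkeeping between $\mat P\mat P^\top$ (acting on $\mathbb R^n$) and $\mat P^\top \mat P = \mat I_m$; one must avoid the temptation to simplify $\mat Q\mat Q$ to anything other than $\mat Q$, since $\mat Q$ is a rank-$m$ projection rather than the identity unless $m=n$.
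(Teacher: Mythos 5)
Your proof is correct and follows essentially the same route as the paper: decompose $\mat X$ into $\mat P\mat P^\top\mat X\mat P\mat P^\top$ plus the remainder, expand the squared Frobenius norm, and kill the cross term via the trace's cyclic property together with $\mat P^\top\mat P=\mat I$ (equivalently, idempotency of $\mat Q=\mat P\mat P^\top$). Your remark that symmetry of $\mat X$ is not actually needed is a fair minor refinement, since you carry the transpose explicitly where the paper implicitly uses symmetry.
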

\begin{proof}
Expanding $\|\mat X\|_F^2$ we have that
\begin{align*}
\|\mat X\|_F^2 
&= \|(\mat X-\mat P\mat P^\top\mat X\mat P\mat P^\top)+\mat P\mat P^\top\mat X\mat P\mat P^\top\|_F^2\\
&= \|\mat X-\mat P\mat P^\top\mat X\mat P\mat P^\top\|_F^2 + \|\mat P\mat P^\top\mat X\mat P\mat P^\top\|_F^2 + 2\tr\left[(\mat X-\mat P\mat P^\top\mat X\mat P\mat P^\top)\mat P\mat P^\top\mat X\mat P\mat P^\top\right].
\end{align*}
It suffices to prove that the trace term is zero:
\begin{align*}
 \tr\left[(\mat X-\mat P\mat P^\top\mat X\mat P\mat P^\top)\mat P\mat P^\top\mat X\mat P\mat P^\top\right]
&= \tr\left(\mat X\mat P\mat P^\top\mat X\mat P\mat P^\top\right) - \tr\left(\mat P\mat P^\top\mat X\mat P\mat P^\top\mat P\mat P^\top\mat X\mat P\mat P^\top\right)\\
&\overset{(*)}{=} \tr\left(\mat P^\top\mat X\mat P\mat P^\top\mat X\mat P\right) - \tr\left(\mat P^\top\mat X\mat P\mat P^\top\mat X\mat P\right)\\
&= 0.
\end{align*}
Here $(*)$ is due to $\mat P^\top\mat P=\mat I$.
\end{proof}

\begin{lemma}[Poincar\'{e} separation theorem]
Fix $n\geq m$.
Suppose $\mat X$ is a symmetric $n\times n$ matrix, $\mat P$ is an $n\times m$ matrix that satisfies $\mat P^\top\mat P=\mat I$,
and $\mat Y=\mat P^\top\mat X\mat P$.
Let $\sigma_1(\mat X)\geq\cdots\geq\sigma_n(\mat X)$ and $\sigma_1(\mat Y)\geq\cdots\geq\sigma_m(\mat Y)$
be the eigenvalues of $\mat X$ and $\mat Y$ in descending order.
Then
$$
\sigma_i(\mat X) \geq \sigma_i(\mat Y) \geq \sigma_{n-m+i}(\mat X), \;\;\;\;\;\;i=1,\cdots,m.
$$
\label{lem:poincare}
\end{lemma}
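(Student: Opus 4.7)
The plan is to derive both inequalities from the Courant--Fischer variational characterization of eigenvalues, exploiting the fact that a matrix $\mat P\in\R^{n\times m}$ with $\mat P^\top\mat P=\mat I$ isometrically embeds $\R^m$ into $\R^n$ while preserving the associated quadratic form: for every $v\in\R^m$, $\|\mat P v\|_2=\|v\|_2$ and $v^\top\mat Y v=(\mat P v)^\top\mat X(\mat P v)$. Consequently, for any subspace $V\subseteq\R^m$, the image $\mat P V$ is a subspace of $\R^n$ of the same dimension, and the values of $v^\top\mat Y v$ on the unit sphere of $V$ coincide with the values of $u^\top\mat X u$ on the unit sphere of $\mat P V$.

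For the upper inequality $\sigma_i(\mat Y)\leq\sigma_i(\mat X)$, I would use the max--min form
\[
\sigma_i(\mat Y)=\max_{V\subseteq\R^m,\,\dim V=i}\ \min_{v\in V,\,\|v\|_2=1}\ v^\top\mat Y v.
\]
By the isometry observation above, each candidate $V$ contributes the same value as $\mat P V$ does in the analogous expression for $\mat X$. Since $\{\mat P V : V\subseteq\R^m,\,\dim V=i\}$ is a subfamily of all $i$-dimensional subspaces of $\R^n$, the outer maximum over subspaces of $\R^m$ is bounded above by the max--min taken over all $i$-dimensional subspaces of $\R^n$, which equals $\sigma_i(\mat X)$.

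For the lower inequality $\sigma_i(\mat Y)\geq\sigma_{n-m+i}(\mat X)$, I would invoke the dual min--max form
\[
\sigma_i(\mat Y)=\min_{V\subseteq\R^m,\,\dim V=m-i+1}\ \max_{v\in V,\,\|v\|_2=1}\ v^\top\mat Y v,
\]
and apply the same isometry identification, with $\mat P V$ now of dimension $m-i+1$. Passing from $\{\mat P V\}$ to the (larger) collection of all $(m-i+1)$-dimensional subspaces of $\R^n$ can only decrease the outer minimum, yielding $\sigma_i(\mat Y)\geq\sigma_{n-(m-i+1)+1}(\mat X)=\sigma_{n-m+i}(\mat X)$.

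There is no genuine obstacle: the Poincar\'{e} separation theorem is classical and its entire content is the observation that $\mat P$ transports subspaces of $\R^m$ isometrically into a proper subfamily of subspaces of $\R^n$. The only care required is bookkeeping the index shift $i\leftrightarrow m-i+1$ when toggling between the two equivalent forms of Courant--Fischer. An alternative route would be an inductive application of Cauchy's interlacing theorem, peeling off one dimension at a time from a completion of $\mat P$ to an orthonormal basis of $\R^n$, but the variational argument above is more direct and requires no explicit basis choice.
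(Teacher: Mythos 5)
Your argument is correct: the isometry observation ($\|\mat P v\|_2=\|v\|_2$ and $v^\top\mat Y v=(\mat Pv)^\top\mat X(\mat Pv)$) does exactly what is needed, and the index bookkeeping checks out, since with $d=m-i+1$ the min--max form for $\mat X$ gives $\sigma_{n-d+1}(\mat X)=\sigma_{n-m+i}(\mat X)$. Note, however, that the paper does not prove this lemma at all: it is stated in the technical appendix as a classical result (alongside Weyl's monotonicity theorem) and simply invoked in the proof of Lemma~\ref{lem:decomp1} with $\mat X=\mat\Sigma_{m_1:m_2}$ and $\mat P=\mat U_{m_1:m_2}^\top\mat W$. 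So there is no in-paper argument to compare against; your Courant--Fischer proof is the standard self-contained justification and fills this gap correctly, and your remark that an inductive use of Cauchy interlacing (after completing $\mat P$ to an orthonormal basis) would also work is accurate but unnecessary for the paper's purposes.
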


\begin{lemma}[Weyl's monotonicity theorem]
Suppose $\mat X,\mat Y$ are $n\times n$ symmetric matrices, and let $\sigma_1(\mat X)\geq\cdots\geq\sigma_n(\mat X)$, $\sigma_1(\mat Y)\geq\cdots\geq\sigma_n(\mat Y)$
and $\sigma_1(\mat X+\mat Y)\geq\cdots\geq\sigma_n(\mat X+\mat Y)$ denote the eigenvalues of $\mat X,\mat Y$ and $\mat X+\mat Y$ in descending order. Then
$$
\sigma_{i+j-1}(\mat X+\mat Y)\leq \sigma_i(\mat X)+\sigma_j(\mat Y), \;\;\;\;\;\;1\leq i,j\leq n, i+j-1\leq n.
$$
In particular, setting $i=1$ one obtains the commonly used Weyl's inequality: $|\sigma_j(\mat X+\mat Y)-\sigma_j(\mat X)|\leq \|\mat Y\|_2$.
\label{lem:weyl}
\end{lemma}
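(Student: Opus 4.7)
The plan is to derive Weyl's monotonicity theorem from the Courant--Fischer min--max characterization combined with an elementary dimension count, producing a single unit vector that simultaneously controls the Rayleigh quotients of $\mat X$ and $\mat Y$. I would first invoke the standard fact that for any $n\times n$ symmetric matrix $\mat Z$,
\[
\sigma_m(\mat Z) \;=\; \min_{\substack{\mathcal S\subseteq\mathbb R^n\\ \dim\mathcal S=n-m+1}}\ \max_{\substack{\vct v\in\mathcal S\\ \|\vct v\|_2=1}}\ \vct v^\top \mat Z\vct v,
\]
treating this as given since it is entirely classical.

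Next, I would construct two witness subspaces tied to $\mat X$ and $\mat Y$ separately. Let $\mathcal S_{\mat X}$ denote the span of the eigenvectors of $\mat X$ corresponding to $\sigma_i(\mat X),\sigma_{i+1}(\mat X),\dots,\sigma_n(\mat X)$, so that $\dim\mathcal S_{\mat X}=n-i+1$ and every unit $\vct v\in\mathcal S_{\mat X}$ satisfies $\vct v^\top\mat X\vct v\le\sigma_i(\mat X)$. Define $\mathcal S_{\mat Y}$ analogously with dimension $n-j+1$, so that $\vct v^\top\mat Y\vct v\le\sigma_j(\mat Y)$ on unit vectors. By the standard dimension inequality for subspaces of $\mathbb R^n$,
\[
\dim(\mathcal S_{\mat X}\cap \mathcal S_{\mat Y})\;\ge\;(n-i+1)+(n-j+1)-n\;=\;n-(i+j-1)+1,
\]
which is precisely the dimension required for $\mathcal S_{\mat X}\cap\mathcal S_{\mat Y}$ to be an admissible test subspace in the min--max formula for $\sigma_{i+j-1}(\mat X+\mat Y)$. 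The hypothesis $i+j-1\le n$ is what guarantees the intersection has positive dimension.

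Combining these observations, any unit vector $\vct v\in\mathcal S_{\mat X}\cap\mathcal S_{\mat Y}$ satisfies $\vct v^\top(\mat X+\mat Y)\vct v\le\sigma_i(\mat X)+\sigma_j(\mat Y)$; taking the maximum over such $\vct v$ and then applying the min--max bound yields the claimed inequality
\[
\sigma_{i+j-1}(\mat X+\mat Y)\;\le\;\sigma_i(\mat X)+\sigma_j(\mat Y).
\]
For the special case stated, I would apply this main inequality twice. Substituting the index pair $(j,1)$ gives $\sigma_j(\mat X+\mat Y)\le\sigma_j(\mat X)+\sigma_1(\mat Y)\le\sigma_j(\mat X)+\|\mat Y\|_2$, and applying the same inequality to the decomposition $\mat X=(\mat X+\mat Y)+(-\mat Y)$ supplies the matching lower bound, giving $|\sigma_j(\mat X+\mat Y)-\sigma_j(\mat X)|\le\|\mat Y\|_2$.

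No step presents a serious obstacle; the only point meriting care is the dimension count, since the intersection $\mathcal S_{\mat X}\cap\mathcal S_{\mat Y}$ must have dimension at least $n-(i+j-1)+1$ to serve as a valid subspace in the min--max characterization of the $(i+j-1)$-th eigenvalue. Everything else is direct algebra on Rayleigh quotients.
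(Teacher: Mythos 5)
Your proof is correct. The paper does not actually prove this lemma---it is invoked as a classical technical fact---so there is no internal argument to compare against; your derivation via the Courant--Fischer characterization $\sigma_m(\mat Z)=\min_{\dim\mathcal S=n-m+1}\max_{\vct v\in\mathcal S,\|\vct v\|_2=1}\vct v^\top\mat Z\vct v$ together with the dimension count $\dim(\mathcal S_{\mat X}\cap\mathcal S_{\mat Y})\ge n-(i+j-1)+1$ is the standard proof and is sound, including the special case, where applying the inequality to $\mat X=(\mat X+\mat Y)+(-\mat Y)$ and using $\sigma_1(-\mat Y)\le\|\mat Y\|_2$ gives the two-sided bound. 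The only (minor) point to make explicit is that when the intersection has dimension strictly larger than $n-(i+j-1)+1$ you pass to a subspace of exactly that dimension, which only decreases the maximum of the Rayleigh quotient.
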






\end{document}